\newtheorem{definition}{Definition}
\newtheorem{lemma}{Lemma}
\newtheorem{theorem}{Theorem}
\crefname{algorithm}{Algorithm}{Algorithms}
\crefname{theorem}{Theorem}{Theorems}
\crefname{lemma}{Lemma}{Lemmas}
\crefname{proposition}{Proposition}{Propositions}
\crefname{corollary}{Corollary}{Corollaries}
\crefname{claim}{Claim}{Claims}
\crefname{definition}{Definition}{Definitions}
\crefname{example}{Example}{Examples}
\crefname{remark}{Remark}{Remarks}
\newcommand{\e}{\mathbb{E}}
\newcommand{\eps}{\varepsilon}
\newcommand{\p}{\mathbb{P}}
\newcommand{\C}{C}
\newcommand{\ind}{\mathbbm{1}}
\DeclareMathOperator{\sign}{sign}
\newcommand{\dlh}{\Delta^{t}(\cH)}
\DeclareMathOperator{\ls}{\mathcal{L}}
\newcommand{\as}{\text{AdaBoostSample }}
\newcommand{\erm}{\mathrm{ERM}}
\newcommand{\loss}{\epsilon}
\newcommand{\alphar}{\bm{\alpha}}
\DeclareMathOperator{\Utrain}{U_{T}}
\DeclareMathOperator{\Uinf}{U_{I}}
\newcommand{\rh}{\mathbf{\tilde{h}}}
\newcommand{\rhh}{\mathbf{h}}
\newcommand{\rf}{\tilde{\mathbf{f}}}
\newcommand{\cs}{c_s} 
\newcommand{\cnn}{40} 
\newcommand{\cff}{40}
\newcommand{\rr}{\mathbf{r}}
\newcommand{\cu}{C_{U}}
\newcommand{\ah}{\mathcal{\hat{A}}}
\newcommand{\cX}{\mathcal{X}}
\newcommand{\cH}{\mathcal{H}}
\newcommand{\cD}{\mathcal{D}}
\newcommand{\cA}{\mathcal{A}}
\newcommand{\rS}{\mathbf{S}}
\newcommand{\cS}{\mathcal{S}}
\newcommand{\cY}{\mathcal{Y}}
\newcommand{\rx}{\mathbf{x}}
\newcommand{\rw}{\mathbf{w}}
\newcommand{\rz}{\mathbf{z}}
\newcommand{\rX}{\mathbf{X}}
\newcommand{\ry}{\mathbf{y}}
\newcommand{\cB}{\mathcal{B}}
\newcommand{\ri}{\mathbf{i}}
\newcommand{\rN}{\mathbf{N}}
\newcommand{\cW}{\mathcal{W}}
\newcommand{\cG}{\mathcal{G}}
\newcommand{\rJ}{\mathbf{J}}
\newcommand{\cF}{\mathcal{F}}
\newcommand{\rD}{\mathbf{D}}
\newcommand{\rB}{\mathbf{B}}
\title{Efficient Optimal PAC Learning}
\date{}
\author{ \textcolor{white}{ blaba}\\Mikael {M\o ller H\o gsgaard}\thanks{We thank Kasper Green Larsen for valuable conversations and his support. We also thank the anonymized reviewers at ALT 2025 for their comments helping improve the paper. Supported by Independent
Research Fund Denmark (DFF) Sapere Aude Research Leader Grant No. 9064-00068B. } \\ Aarhus University\\hogsgaard@cs.au.dk }
\begin{document}
\maketitle

\begin{abstract}
  Recent advances in the binary classification setting by \cite{hannekeoptimal} and \cite{baggingoptimalPAClearner} have resulted in optimal PAC learners. These learners leverage, respectively, a clever deterministic subsampling scheme and the classic heuristic of bagging \cite{Breiman1996BaggingP}. Both optimal PAC learners use, as a subroutine, the natural algorithm of empirical risk minimization. 
Consequently, the computational cost of these optimal PAC learners is tied to that of the empirical risk minimizer algorithm.

In this work, we seek to provide an alternative perspective on the computational cost imposed by the link to the empirical risk minimizer algorithm.
To this end, we show the existence of an optimal PAC learner, which offers a different tradeoff in terms of the computational cost induced by the empirical risk minimizer.
\end{abstract}

\section{Introduction}
The PAC model (Probably Approximately Correct) was introduced by \cite{valiant1984theory,vapnik1964class,vapnik74theory} and has since been a cornerstone in the theory of machine learning. The model is based on the idea that a learning algorithm should be able to learn from a set of labelled training examples such that it can predict the label of unseen examples with high accuracy. In the following, we will consider the realizable setting of PAC learning, that is, we assume that the true concept $ c\in\left\{ -1,1 \right\}^{\cX} $  is contained in a hypothesis class $ \cH \subset \left\{ -1,1 \right\}^{\cX}$, and that the hypothesis class has finite VC-dimension $ d $. Finite VC-dimension $ d $ of $ \cH $ implies that there exists a point set $ x_1,\ldots,x_{d} $, where every pattern in $ \left\{ -1,1 \right\}^{d} $ on the points $ x_{1},\ldots,x_{d} $, can be realized by a hypothesis in the hypothesis class $ \cH $. However, for any point set of more than $ d $ points, this is not the case (\cite{vapnik1971uniform}). Formally, we have the following definition of PAC learning:
\begin{definition}
    Let $\cX$ be an input space and $\cH\subset \left\{ -1,1 \right\}^{\cX}$ a hypothesis class of VC-dimension $ d.$ A learning algorithm $\cA$ is a PAC learner for $\cH$ if there exists a sample complexity function $m_{\cA}:(0,1)^{2}\times \mathbb{N}\rightarrow\mathbb{N}$ such that for every $\epsilon,\delta\in(0,1)$, every distribution $\cD$ over $\cX$, every target concept $ c\in \cH $ and $m\geq m_{\cA}(\epsilon,\delta,d),$ it holds with probability at least $1-\delta$ over a training sequence $\rS=(\rS_1,c(\rS_1),\ldots,\rS_{m},c(\rS_{m}))\sim\cD_{c}^{m}$ with $\rS_{i} \sim\cD ,$ that $ f=\cA(\rS) \in\left\{ -1,1 \right\}^{\cX}$ is such that the error $$\ls_{\cD_{c}}(f):=\p_{\rx\sim \cD_{c}}\left[f(\rx)\not=c(\rx)\right]\leq\epsilon.$$
\end{definition}
The first, and also natural, approach tried out to get PAC-learners with good sample complexity was to do empirical risk minimization ($\erm$). In this approach, given a training sequence $   \rS, $ the PAC learner outputs a hypothesis $ h $  in $ \cH $ that is consistent with the training sequence, i.e., for every $ (\rx,\ry)\in \rS $, the hypothesis $ h $ satisfies $ h(\rx)=\ry $. The following lemma, due to \cite{vapnik1968algorithms,vapnik1971uniform,blumer1989learnability}, gives a uniform error bound on consistent functions in $ \cH$, and thus ensures that any $ \erm $ algorithm guarantees that error bound. Furthermore, error bounds give rise to determining a sample complexity for a given $ \eps $ by solving for $ m $ such that the error bound implies $ \ls_{\cD_{c}}(\cA(\rS))\leq \eps $.
\begin{lemma}\label{uniformconvergencemain}[\cite{vapnik74theory}, \cite{Blumeruniformconvergence} from \cite{Simons}[Theorem 2]]
    For $0<\delta,\eps<1$, hypothesis class $\cH$ of VC-dimension $d$, target concept $c\in\cH$, and distribution $\cD$ over $\cX$, we have with probability at least $1-\delta$ over $\rS\sim \cD_{c}^{m}$, that all $h\in \cH$ consistent with $ S $, i.e.,  $\ls_{\rS}(h):= \tfrac{
        1 }{m}\sum_{(x,y)\in \rS} \ind\left\{ h(x)\not=y \right\}  =0$,
       have error  $$
       \ls_{\cD_{c}}(h)\leq 2(d\log_{2}{\left(2em/d \right)}+\log_{2}{\left(2/\delta \right)})/m.
       $$
   \end{lemma}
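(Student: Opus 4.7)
The plan is to follow the classical symmetrization-plus-growth-function proof of uniform convergence in the realizable case, and then invert the resulting tail bound to get the stated generalization error bound.

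First I would fix $\eps\in(0,1)$ and define the bad event $B(\rS)=\{\exists h\in\cH:\ls_{\rS}(h)=0 \text{ and } \ls_{\cD_c}(h)>\eps\}$, so the goal becomes showing $\p[B(\rS)]\le\delta$ whenever $\eps$ is at least the quantity on the right-hand side. The first key step is a ghost-sample symmetrization: draw an independent $\rS'\sim\cD_c^m$ and show that for $m$ large enough (say $\eps m\ge 2$, which is covered by the regime where the theorem is nontrivial),
\[
\p_{\rS}[B(\rS)] \;\le\; 2\,\p_{\rS,\rS'}\!\left[\exists h\in\cH:\ls_{\rS}(h)=0 \text{ and } \ls_{\rS'}(h)\ge \eps/2\right],
\]
via a standard Chebyshev/Chernoff argument showing that conditional on $B(\rS)$ the ghost sample $\rS'$ has a constant chance of catching at least $\eps m/2$ errors of the offending $h$.

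The second step reduces the right-hand event to a purely combinatorial question on the merged $2m$-tuple $\rS\cup\rS'$. By introducing a uniformly random sign-swap $\sigma\in\{0,1\}^m$ that exchanges the $i$-th points of $\rS$ and $\rS'$, the joint distribution is unchanged, and for any \emph{fixed} $2m$-point multiset the event above is controlled by a union bound over the projection $\cH|_{\rS\cup\rS'}$. Here I would invoke the Sauer--Shelah--Perles lemma, which bounds $|\cH|_{\rS\cup\rS'}|\le (2em/d)^d$. For a single hypothesis $h$ with at least $\eps m/2$ total errors on the $2m$ points, the probability over $\sigma$ that all of these errors land in $\rS'$ (so that $h$ is consistent with $\rS$) is at most $2^{-\eps m/2}$. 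A union bound then yields
\[
\p_{\rS}[B(\rS)] \;\le\; 2\,(2em/d)^d\, 2^{-\eps m/2}.
\]

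The final step is bookkeeping: set the right-hand side $\le\delta$, take $\log_2$, and solve for $\eps$ to obtain $\eps\le 2(d\log_2(2em/d)+\log_2(2/\delta))/m$, which is exactly the stated bound. The only step with real content is the symmetrization inequality, since one must verify that the probability of the ghost sample catching $\eps m/2$ errors of a fixed bad $h$ is at least $1/2$; this is the main obstacle and is handled by a direct variance computation on a binomial. Everything else is a clean union bound combined with the polynomial growth-function bound from Sauer--Shelah.
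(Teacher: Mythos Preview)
The paper does not actually prove this lemma: it is stated as a classical result and attributed to \cite{vapnik74theory} and \cite{Blumeruniformconvergence} (via \cite{Simons}, Theorem~2), with no proof given in the paper itself. Your outline is exactly the standard double-sample symmetrization argument used in those references---ghost sample, random swapping, union bound over the at most $(2em/d)^d$ dichotomies given by Sauer--Shelah, and the $2^{-\eps m/2}$ probability that all errors of a fixed $h$ land in the ghost half---so there is nothing to compare: you have reproduced the intended classical proof and the arithmetic at the end recovers the stated constant $2(d\log_2(2em/d)+\log_2(2/\delta))/m$ on the nose.
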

   As described above \cref{uniformconvergencemain}, we can transform this error bound into a sample complexity bound of $ O(\left(d\ln{\left(1/\eps \right)}+\ln{\left(1/\delta \right)}\right)/\eps).$  
   This sample complexity bound is known to be tight for general $ \erm $ algorithms, in the sense that for any $0< \eps ,\delta<1/100$ and $ d\in\mathbb{N} $, there exists a hypothesis class $ \cH $ of VC-dimension $ d $, input space $ \cX,$ and distribution $ \cD_{c} $  over $ \cX $  for $ c\in \cH $, such that the sample complexity is $ \Omega((d\ln{\left(1/\eps \right)}+\ln{\left(1/\delta \right)})/\eps) $, where this lower bound construction is due to \cite{bousquet2020proper}[Theorem 11] (See also \cite{haussler1994predicting,auer2007new,simon2015almost,hanneke2016refined}). The above lower bound by \cite{bousquet2020proper} holds in general for any learner outputting a hypothesis in $ \cH $, known as proper learners. In contrast, the lower bound for general learners (not necessarily proper) is $ \Omega((d+\ln{\left(1/\delta \right)})/\eps),$ due to \cite{EHRENFEUCHT1989247}. A natural question to ask is thus whether there is a gap between the performance of proper and improper learners.

This long-standing open problem was finally resolved by \cite{hannekeoptimal}, who showed that there exists an improper learner that, with probability at least $ 1-\delta $ over the given training sequence $ \rS\sim \cD_{c}^{m} $,  produces a function $ f\in \left\{ -1,1 \right\}^{\cX} $, which, obtain the optimal PAC generalization error bound 
$$ \ls_{\cD_{c}}(f) =\Theta\left((d+\ln{\left(1/\delta \right)})/m \right),$$
or equivalently a sample complexity of $O( (d+\ln{\left(1/\delta \right)})/\eps).$   
 The improper learner of \cite{hannekeoptimal} is a  majority vote of $ m^{\log_{4}(3)} \approx m^{0.79}$ voters, with the voters being obtained by running a $ \erm $ algorithm on $m^{\log_{4}(3)}$ carefully chosen sub training sequences of $ \rS $, each of size $ 2m/3 $. Furthermore, \cite{baggingoptimalPAClearner} showed that Bagging, introduced by \cite{Breiman1996BaggingP}, is also an optimal PAC learner. The PAC learner is a majority vote of the output of a $ \erm $ algorithm trained on $ 18\ln{\left(2m/\delta \right)} $ many bootstrap samples of any size between $ 0.02m $ and $ m $, giving a more efficient optimal PAC learner. From the structure of these optimal PAC learners, we notice that their computational efficiency is closely tied to that of the $\erm$ learner run on $ \Theta(m) $ training examples. Thus in the case that, the computational cost of the $\erm$ learner scales poorly in the input size, this carries over to the computational cost of the aforementioned PAC learners, scaling with the number of examples $\Theta(m)$ provided for learning. This raises two natural questions: 
\vspace{-0.075cm}
\begin{enumerate}
    \item Can one obtain an optimal PAC learner that always queries the $ \erm $ learner with fewer than $ m $  training examples?\label{question1}
    \item\vspace{-0.2cm} If possible, can it be done to such a degree that the overall computational complexity of the optimal PAC learner becomes linear in the number of training examples $ m $?\label{question2}
\end{enumerate}
\vspace{-0.075cm}

To address these questions we now introduce some notation. In what follows we assume that the following operations cost one unit of computation: reading a number, comparing two numbers, adding, multiplying, calculating $\exp(\cdot)$ and $\ln{\left(\cdot \right)}$, and renaming. 

Furthermore, we assume the learner has query access to a $ \erm $-algorithm. To compare different learners computational cost, we associate to the $ \erm $ algorithm two cost functions: the worst-case cost of training $ \erm(S) $  on a consistent training sequence $ S $ of length $ m $, denoted  $ \text{U}_{\text{Train}}(m):=\Utrain(m)$
$$\Utrain(m):=\sup_{\stackrel{S\in (\cX\times\cY)^{m}}{S \text{ consistent with } \cH}} \# \{ \text{Operations to}\text{ find } \erm(S)  \},$$
and the worst-case inference cost of calculating the value $ h(x) $  for any point $ x\in \cX $ of any outputted hypothesis, $ h=\erm(S) $, with $ S $ being a consistent training sequence,  denoted  $ \text{U}_{\text{Inference}}:= \Uinf$\footnote{We have for simplicity defined the inference cost as the worst case output for any realizable $ \rS $, so note depending on the number of examples $  m$  in $ \rS $, as in training complexity. If we have gone with this more refined notion the inference cost of \cite{hannekeoptimal} and \cite{baggingoptimalPAClearner} would be $ U_{I}(\Theta(m)) $ and that of \cref{introductionmaintheorem} would be $ U_{I}(550d). $    },

$$\Uinf=\sup_{\stackrel{h=\erm(S), S\in (\cX\times\cY)^{*}}{S \text{ consistent with } \cH,\text{ }x\in\cX}} \# \{ \text{Operations to calculate } h(x)  \}.$$

For a learning algorithm $ \cA:(\cX\times \{ -1,1 \} )^{*}\rightarrow \{ -1,1 \}^{\cX} $, we define the training complexity of $ \cA $ for an integer $ m $ as the worst case number of operations made by the learning algorithm when given a realizable training sequence by $ \cH $ of length $ m $ , i.e.  $$\sup_{\stackrel{S\in(\cX\times \{-1,1\})^{m}}{ S \text{ realizable by } \cH  }} \# \{\text{Operations to find } \cA(S)\}.$$ The inference complexity of a learning algorithm $ \cA $ for an integer $ m $ we define as the worst case cost of predicting a new point $ x\in \cX $ for the learned mapping $f= \cA(S) $, given a realizable training sequence by $ \cH $ of length $ m $   i.e. $$ \sup_{\stackrel{f=\cA(S),S\in (\cX\times \{-1,1\})^{m}}{ S \text{ is realizable by } \cH, \text{ } x\in \cX}} \#\{\text{Operations to calculate } f(x)\}.$$

Using this notation, \cite{hannekeoptimal} has a training complexity of at least $ m^{0.79}\Utrain(2m/3) $ and an inference complexity of at least $ m^{0.79}\Uinf $, whereas \cite{baggingoptimalPAClearner} has a training complexity of at least $ 18\ln{\left(2m/\delta \right)}\cdot\Utrain(0.02m) $ and an inference complexity of $ 18\ln{\left(2m/\delta \right)}\Uinf$. We now state our main result which gives a different tradeoff in terms of training and inference complexity.

\begin{theorem}[Informal statement of \cref{maintheorem}]\label{introductionmaintheorem}
There exists an algorithm $ \ah $ such that for $ 0< \delta <1 $, hypothesis class $ \cH $ of VC-dimension $ d $, target concept $ c\in \cH $, and access to a $ \erm $-learner, that with probability at least $ 1-\delta $ over $ \rS \sim \cD_{c}^{m}$ and the randomness of $ \ah $, it holds that $ \ah $ has, error $ \ls_{\cD_{c}}(\ah)=O((d+\ln{\left(1/\delta \right)})/m),$ inference complexity $ O(\ln{[m /\delta(d+\ln{\left(1/\delta \right)}) ]}) \Uinf $, and  training complexity 
\begin{align*}
    O\bigg(\negmedspace\ln{\bigg(\frac{m}{\delta(d+\ln{\left(1/\delta \right)})} \bigg)}\negmedspace\cdot\negmedspace\ln{\bigg(\frac{m}{\delta} \bigg)}\negmedspace\bigg)\negmedspace\cdot \negmedspace\Big(O\Big(m+d\ln{\left(m \right)}\Big)\negmedspace+\negmedspace\Utrain(550d)\negmedspace+\negmedspace 3m\Uinf\negmedspace\Big).
\end{align*}
         
\end{theorem}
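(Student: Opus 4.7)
The plan is to construct $\ah$ as a boosting-style procedure that invokes the $\erm$ oracle only on subsamples of size $\Theta(d)$, and then to bound its generalization error by a margin argument tuned to match the optimal $O((d+\ln(1/\delta))/m)$ rate. First I would define a single weak-learning primitive (the \as subroutine hinted at in the notation): given weights $\mathbf{w}$ on $\rS$, draw $550d$ indices i.i.d.\ from $\mathbf{w}$, call $\erm$ on the induced labelled subsample, and return the resulting hypothesis. By \cref{uniformconvergencemain} applied to the $\mathbf{w}$-weighted distribution, this hypothesis has $\mathbf{w}$-weighted error at most, say, $1/4$ with some constant probability, yielding a weak learner of constant edge $\gamma$. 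Boosting the success probability by independent repetitions and majority voting costs an extra $O(\ln(m/\delta))$ factor per weak-learner call and drives the failure probability below $\delta/\mathrm{poly}(m)$.

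Next I would plug this weak learner into AdaBoost on $\rS$ for $T=O(\ln[m/(\delta(d+\ln(1/\delta)))])$ rounds. Standard AdaBoost analysis with edge $\gamma$ gives that the weighted majority vote $f=\sign\sum_{t}\alpha_t h_t$ has empirical margin at least $1/10$ on every training point; this is the reason the $\majam$ notation (majority with a $1/10$ margin threshold) is defined in the preamble. One then applies a margin-based generalization bound for majority votes of hypotheses from a VC-class of dimension $d$: an empirical $1/10$-margin on a sample of size $m$ implies true error $O((d+\ln(1/\delta))/m)$. Rather than re-proving this from scratch, I would invoke the $\dlh$/$\dlhn$ machinery referenced in the preamble, which packages $t$-wise majorities of $\cH$ into a class whose uniform convergence rate gives exactly the optimal PAC rate once $T$ is chosen logarithmic in $m$.

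The computational accounting then falls out as follows. Each boosting round: (i) builds the alias table / cumulative distribution for $\mathbf{w}$ in $O(m)$ time, (ii) draws $550d$ indices and extracts the corresponding labelled points in $O(d\ln m)$ time, (iii) runs $\erm$ on this subsample at cost $\Utrain(550d)$, (iv) evaluates $h_t$ on all $m$ training points to update weights at cost $m\Uinf$ plus $O(m)$ arithmetic, and (v) is repeated $O(\ln(m/\delta))$ times to amplify the weak-learner success probability. Multiplying these per-round costs by $T=O(\ln[m/(\delta(d+\ln(1/\delta)))])$ rounds yields the claimed training complexity $O(\ln(\cdot)\cdot\ln(m/\delta))\cdot(O(m+d\ln m)+\Utrain(550d)+3m\Uinf)$. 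Inference at a new $x$ requires evaluating the $T$ base hypotheses, for a total cost of $O(\ln[m/(\delta(d+\ln(1/\delta)))])\Uinf$.

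The main obstacle is closing the gap between the standard AdaBoost margin bound and the \emph{optimal} $O((d+\ln(1/\delta))/m)$ rate. A textbook margin bound typically carries an extra $\ln(1/\gamma^2)$ or $\ln m$ factor that would spoil optimality, and moreover the weak learners are not an a priori fixed class but depend on the sample $\rS$ through the internal subsampling randomness. I expect the resolution to combine the subsample-majority viewpoint of \cite{hannekeoptimal} and \cite{baggingoptimalPAClearner} with the boosting structure: one treats the ensemble as a voter drawn from $\dlh$ for $t=O(\ln m)$, exploits the fact that each voter is trained on only $O(d)$ points to keep the effective capacity at $O(d)$ (not $O(Td)$), and then applies a uniform-convergence argument for $\dlhn$ to obtain the optimal rate. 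Delicately controlling the failure event across all boosting rounds and the $O(\ln(m/\delta))$ amplification repetitions, without blowing up the $\delta$ budget, is the principal technical hurdle.
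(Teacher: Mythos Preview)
Your proposal has a genuine gap at exactly the point you flag as ``the main obstacle.'' A single AdaBoost run on $\rS$ followed by a margin bound cannot yield the optimal $O((d+\ln(1/\delta))/m)$ rate: the best known uniform margin bounds for majorities over a VC-$d$ class (including the $\dlh$ bound \cref{lem1} in the paper) have square-root dependence, giving at best $\sqrt{d/m}+\sqrt{\ln(1/\delta)/m}$, not the optimal linear-in-$1/m$ rate. Your suggestion that the voters being trained on $O(d)$ points ``keeps the effective capacity at $O(d)$'' does not help here---the capacity of $\dlh$ is what it is regardless of how the voters were obtained, and there is no known direct margin bound that delivers the optimal PAC rate. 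Your closing paragraph gestures at the right ingredients but does not propose a construction that actually closes the gap.

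The paper's algorithm is structurally quite different from yours. It does \emph{not} run AdaBoost once on $\rS$. Instead it uses a Hanneke-style recursive subsampling scheme $\cS$ (splitting into $6$ parts with $5$ recursive calls) and runs the boosting routine $\cA$ on \emph{each} of the resulting $\Theta(m^{0.9})$ sub-training sequences, for $t=\Theta(\ln m)$ rounds each, producing a majority-of-majorities. The error analysis is by induction on the recursion depth, exactly mirroring Hanneke's proof: the induction hypothesis gives $O((d+\ln(1/\delta))/m)$ on the recursive calls $\cA(j,T)$, and the margin bound \cref{lem1} is invoked only to show that voters trained on the held-out block $\rS_l$ have \emph{constant} conditional error on the region where $\cA(j,T)$ fails---the constant then multiplies the $O((d+\ln(1/\delta))/m)$ from the induction hypothesis. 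The specific new idea (and the reason for $5$ recursions rather than Hanneke's $3$) is to prove a ``double margin'' property: with high probability at least $3/4$ of the row-majorities have at least $3/4$ of their voters correct. This is what allows the final learner $\ah$ to sample one voter from each of $l=\Theta(\ln[m/(\delta(d+\ln(1/\delta)))])$ randomly chosen rows and take a flat majority of those $l$ hypotheses, achieving the stated inference complexity without paying the $\Theta(\ln m)$ factor from boosting. Your scheme, by contrast, has $T$ playing double duty as both the number of boosting rounds and the inference cost, and the error analysis for it is not salvageable without the recursive structure.
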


As a result of the above, \textbf{we answer the first question affirmatively} by achieving PAC optimality, using only $550d$ points in each call to the $\erm$ algorithm.
Furthermore, we also notice that if we consider our training complexity as a function of $ m $, thinking of $ \delta $, $ d $, $ \Utrain(550d) $,  and $ \Uinf $  as fixed, our training complexity is up to a $ \ln^{2}{\left(m \right)} $-factor linear in $ m $. Consequently, \textbf{we almost answer the second question affirmatively} under these assumptions. However, it is not always the case that these quantities are fixed as a function of $ m,$ since we are in the distribution-free setting, as we will see in the following paragraph for perceptron. 
Additionally, we also note that the inference complexity of the algorithm is \textbf{asymptotically better} than that of \cite{hannekeoptimal} and \cite{baggingoptimalPAClearner}\footnote{Looking at the proof of \cite{baggingoptimalPAClearner} we think that \cite{baggingoptimalPAClearner} could obtain the same inference complexity.}.  

Our training complexity is a multiplicative factor of  $ O(\ln{[m/(\delta(d+\ln{(1/\delta )}))]}\cdot(m+d\ln{(m )}+\Utrain(550d)+3m\Uinf))/\Utrain(0.02m)$ different from that of \cite{baggingoptimalPAClearner}, and only a smaller multiplicative factor different from that of \cite{hannekeoptimal}. Thus, if $ \delta$ is some polynomial in $\Omega(1/m^{C}) $ for some fixed constant $ C \geq 1$, the term in the denominator is $O( \ln{\left(m \right)}(m+d\ln{\left(m \right)}+\Utrain(550d)+3m\Uinf) )$, which in the case $ \Utrain(0.02m)=\omega(\ln{\left(m \right)}\cdot\max ( m , d\ln^{2}{\left(m \right)} , \Utrain(550d), 3m \Uinf )) $,  implies that we get an asymptotically better training complexity. If we think of $ d,\Utrain(550d) $  and $ \Uinf $ as fixed, $ \Utrain(0.02m) $ has to be $ \omega(m\ln{\left(m \right)}) $ for \cref{introductionmaintheorem} to be better, however, as commented above this is not always the case.

We now give two examples of where \cref{introductionmaintheorem} might have a better training complexity. In the following we will for simplicity set $ \delta $ equal to a small constant, thus the training complexity of \cref{introductionmaintheorem} holds with this probability.

\paragraph{Search over a hypothesis space with finite VC-dimension:}             
Consider a hypothesis class $\cH$ of VC-dimension $d$. Furthermore, for a training sequence $S = ((x_{1}, y_{1}), \ldots, (x_{n}, y_{n}))$, we let $\cH|S = \{(h(x_{1}), \ldots, h(x_{n})) \mid h \in \cH\}$ denote the possible projections of $\cH$ on $S$. We assume that the black-box $\erm$ algorithm checks one projection of $\cH|S$ at a time and, if the projection is realizable, outputs any hypothesis that realizes this projection. 
We assume that performing inference on one point $x_{i}$ for a projection has a cost of $U_{I} = O(1)$. Thus, the training complexity becomes $U_{T}(n) = O(n (n/d)^{O(d)})$ in the worst case, as the $\erm$ might find a projection that realizes $S$ among the last projections in $\cH|S$, which can have size $(n/d)^{O(d)}$. 
In this case, the training complexity of \cref{introductionmaintheorem} becomes $O(\ln{\left(\frac{m}{d}\right)} \ln{\left(m\right)} (d 500^{d} + m))$, and the training complexity of \cite{baggingoptimalPAClearner} becomes $O(\ln{\left(m\right)} m (m/d)^{d})$, where \cite{baggingoptimalPAClearner} is better for $m \leq \Theta(550d \ln^{1/d}{\left(m\right)})$.
\paragraph{Perceptron:}
In the \cref{appendix:perceptron}, we show that for $m \in \mathbb{N}$ and $m \geq 2200$, there exists a realizable distribution $\cD$ where \cite{baggingoptimalPAClearner}, run with the perceptron algorithm as a black box $\erm$, has a training complexity of $\Omega(m^2)$ with probability at least $1/2$. Similarly, \cref{introductionmaintheorem}, except with the small constant probability set in this section, has a training complexity of $\Omega(\ln{\left(m/d\right)}\ln{\left(m\right)}m)$. This example also shows that $U_{T}$ might depend on $m$, as mentioned above, which can happen in the distribution-free setting.

The intuition behind the example is to consider a universe consisting of the points $x_{i} = (0, 1 - \frac{i}{m^3}), y_{i} = -1$ for $i = 1, \ldots, m-1$, and $x_{m} = (\sqrt{\frac{1}{m}}, 1), y_{m} = 1$, which has a margin of $\gamma = \Theta(\sqrt{\frac{1}{m}})$. If the perceptron is run on the whole universe, it makes $2$ mistakes when passing over the data, i.e., when the label switches sign. One can then show that for the perceptron to converge, one has to pass over the universe $\Omega(m)$ times, where each pass takes $\Omega(m)$ operations, leading to a total complexity of $\Omega(m^2)$.

Using a distribution $\cD$ that assigns a small mass to $x_{m}$ and uniformly to the remainder of the universe ensures this pattern also holds with probability at least $1/2$ for a random bootstrap sample, leading to $U_{T}(0.02m) \geq \Omega(m^{2})$.

For the training complexity of \cref{introductionmaintheorem}, we consider a sub-training sequence of size $1650$, since the VC-dimension of the sign of a hyperplane is at most $3$ in $\mathbf{R}^{2}$. For a sub-training sequence denoted $(x'_{1}, y'_{1}), \ldots, (x'_{1650}, y'_{1650})$ of $(x_{1}, y_{1}), \ldots, (x_{m}, y_{m})$, the claimed training complexity follows by noting that the margin is at least $\gamma = \Theta(\sqrt{\frac{1}{m}})$ and that the perceptron only makes $\frac{\max_{i=1, \ldots, 1650} ||x'_{i}||^{2}}{\gamma^{2}} = O(m)$ updates/mistakes when repeatedly passing over $(x'_{1}, y'_{1}), \ldots, (x'_{1650}, y'_{1650})$. Since there must be one mistake for each pass over $(x'_{1}, y'_{1}), \ldots, (x'_{1650}, y'_{1650})$ and one pass takes $O(1)$ operations, this leads to a training complexity of $O(\ln{\left(m/d\right)}\ln{\left(m\right)}m)$, except with the small constant probability considered in this section.

In \cref{appendix:perceptron}, we formalize the above argument for the perceptron and also take into consideration the hyperplane including a bias term.
\newline \linebreak
Before moving on to the next two sections, where we in the first section provide a high-level proof sketch and in the latter section a detailed proof sketch and explain the relation of our work to previous work, we would like to mention some other related work on optimal learning. 

\cite{optimalwithoutuniformconvergence} gives an interesting alternative to the above optimal PAC learners based on a majority vote of $ m/4 $ one-inclusion graphs, given $ \Theta(m) $ points as input. When the one-inclusion graphs have to be computed this require searching through the hypothesis class $ \cH $, which makes it hard to compare to the above setting where the learner is only assumed access to the hypothesis class through a $ \erm $ algorithm. 

We also want to mention \cite{majorityofthree}, which shows, that the majority vote of 3 $ \erm $-calls on 3 disjoint training sequences has optimal in expectation error $ O(d/m) $. However, \cite{majorityofthree} were not able to extend the result to the optimal error bound in the PAC model.

\section{High-level proof sketch}
Before presenting the related work in the following section, we provide a high-level description of our algorithm and analysis. The intent of this high-level description is firstly to provide the reader intuition about our proof, and secondly to help establish connections with related work presented in the following section. In the detailed proof sketch, which follows the subsection of related work, we specifically point to how relevant steps in our analysis relate to previous work. For a reader mainly interested in understanding the proof of \cref{maintheorem} one can skip \cref{sec:proofoverview} and jump to \cref{sec:optimalitya} which introduces the necessary lemmas to prove \cref{maintheorem}, and show how they imply \cref{maintheorem}. We now give a high-level description of the algorithm giving the guarantee of \cref{introductionmaintheorem}.

\begin{algorithm}
\caption{High level description of Efficient optimal PAC learner.}\label{alg:algorithmhighlevel}
\begin{algorithmic}[1]
\State Sample $l=\Theta\big(\ln\big(\tfrac{m}{\delta\left(d+\ln{\left(1/\delta \right)}\right)}\big)\big)$ structured subtraining sequences $\rS_1, \ldots, \rS_{l}$ of $ \rS $, with  $ |\rS_{i}| =\Theta(m).$ \label{alg:highlevel1}
\State Generate $ l $  majority voters by running a boosting algorithm $\cB$ on each $\rS_{1},\ldots,\rS_{l}$. \quad \quad \quad \quad High level description of $ \cB(\rS_{i}) $:\label{alg:highlevel2}
\begin{algorithmic}[1]
    \State$ \cB $ uses $\erm$ as a weak learner, and makes at most $\Theta(\ln{\left(\tfrac{m}{\delta} \right)} )$ training invocation of $ \erm $ to get hypothesis $ h_{i,1},\ldots,h_{i,t},$ with $ t=\Theta(\ln{\left( m\right)}) $ .
    \State On each of the training evocation the $ \erm $ algorithm, is provided $550d$ training examples and returns a hypothesis $ h $. After each training evocation of the $ \erm $ algorithm, $ \cB $ evaluates $ h$ on $ \rS_{i} $. Based upon the evaluation $ \cB $  updates a distribution on $ \rS_{i}$, or discards the hypothesis $ h.$     
    \State The output of $ \cB(\rS_{i}) $  is a voting classifier $\cB(\rS_{i}) = \sum_{j=1}^{t} h_{i,j}/t$
\end{algorithmic}
\State \negmedspace \negmedspace \negmedspace \negmedspace \negmedspace \negmedspace \negmedspace For each $ i=1,\ldots,l $  sample a voter $ \rhh_i $  from the voters $ \{ h_{i,j} \}_{j=1}^{t} $ in the voting classifier $ \cB(\rS_{i})=\sum_{j=1}^{t} h_{i,j}/t $. Output $f=\sign(\sum_{i=1}^{l} \rhh_i) $ as the final predictor.\label{alg:highlevel3}
\end{algorithmic}
\end{algorithm}

We first give the rough analysis for the training and inference complexity of \cref{introductionmaintheorem} based on \cref{alg:algorithmhighlevel}. From Line~\ref{alg:highlevel2} we see that the boosting algorithm is invoked $ \Theta\big(\ln\big(\tfrac{m}{\delta\left(d+\ln{\left(1/\delta \right)}\right)}\big)\big)$ times and that the boosting algorithm on each of these evocations, makes at most $ \Theta\left(\ln{\left(\tfrac{m}{\delta} \right)} \right)$ training evocations of the $ \erm $, with $ 550d $ training points in each evocation, implying a training complexity of $ O\big(\ln\big(\frac{m}{\delta\left(d+\ln{\left(1/\delta \right)}\right)}\big)  \ln{\left(\tfrac{m}{\delta} \right)}   U_{T}(550d)\big)$. After each training evocation of the $ \erm$, the boosting algorithm evaluates the just trained $ \erm $ on all of $ \rS $, amounting to $ O\big(\ln\big(\frac{m}{\delta\left(d+\ln{\left(1/\delta \right)}\right)}\big)  \ln{\left(\tfrac{m}{\delta} \right)}   mU_{I}\big)$ operations over all the evocations of the boosting algorithm, which gives the high-level analysis of the training complexity. The claimed inference complexity follows from the final predictor in Line~\ref{alg:highlevel3} being an majority vote of  $ \Theta\big(\ln\big(\tfrac{m}{\delta\left(d+\ln{\left(1/\delta \right)}\right)}\big)\big) $ hypothesis.

The optimal PAC error bound on a high level (see \cref{fig:first_figure}) follows from showing that with probability at least $ 1-\delta $ over $ \rS $, it holds that: At least a $ 1-O\left(\left(d+\ln{\left(\delta\right)}\right)/m \right)$ fraction of new examples $ (\rx,\ry) $, is such that with probability at least $ 3/4 $ over the randomness drawing the structured sub training sequences $ \rS_{i} $ in Line~\ref{alg:highlevel1}, the majority vote $ \cB(\rS_{i})=\sum_{j=1}^{t}h_{i,j}/t $ has at least $ 3/4 $ of the voters $ h_{i,j} $ being correct on $ (\rx,\ry) $ (see \cref{fig:first_figure}). Thus for such new examples when drawing $ \rhh_{i} $ in Line~\ref{alg:highlevel3} it is correct with probability at least $ (3/4)^{2}\approx 0.56 $ - slightly better than guessing - so drawing enough of these hypotheses $ \rhh_{i} $, enough being $\Theta\big(\ln\big(\tfrac{m}{\delta\left(d+\ln{\left(1/\delta \right)}\right)}\big)\big) $ many, concentration inequalities implies, that the majority vote $f=\sign(\sum_{i=1}^{l} \rhh_i) $, except on a $ O\left(d+\ln{\left(1/\delta \right)}/m\right) $ fraction of such $ (\rx,\ry) $ examples, will contain more voters being correct than wrong and hence output the correct answers. As there was only a $ O\left(d+\ln{\left(1/\delta \right)}/m\right) $ fraction of new examples $ (\rx,\ry) $  that did not have the above property the optimal PAC bound of $ O\left(d+\ln{\left(1/\delta \right)}/m\right) $ follows.

\begin{figure}[h]
    \centering
    \includegraphics[width=\textwidth]{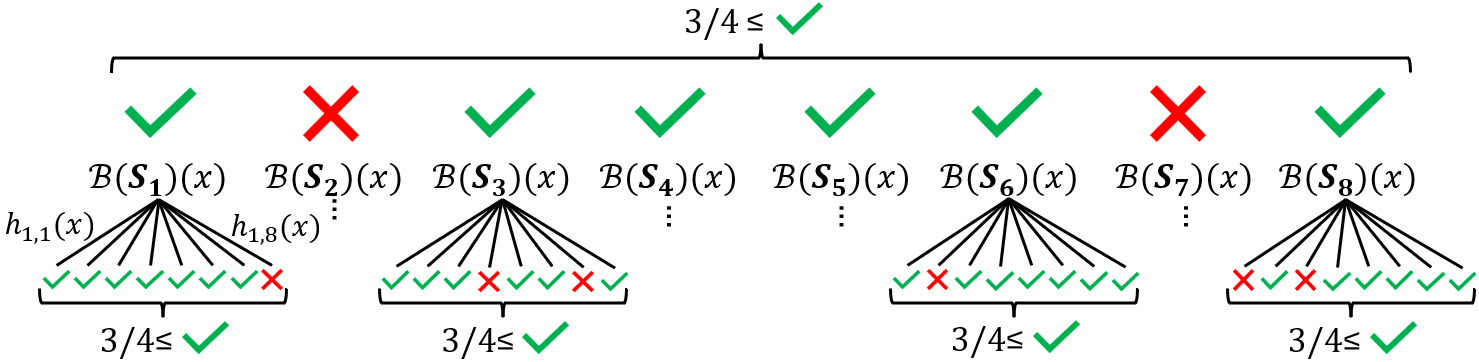}
    \caption{The figure illustrates outputs of the boosting algorithm on $ 8 $  structured sub training sequences, each producing a majority vote consisting of $ 8 $ voters depicted by lines coming out of $ \cB(\rS_{i}) $ (to not overload the figure we only include the name of $ h_{1,1}$ and $h_{1,8}$ on the lines). For most new examples $ (\rx,\ry) $ with probability at least $ 3/4 $ over the draw of $ \rS_{i} $  the majority vote $ \cB(\rS_{i}) $ has $ 3/4 $ of it voters correct. Boosting calls $ \cB(\rS_{i}) $ with a green check mark over has $ 3/4 $ of its voters correct on the new example $ (\rx,\ry) $, else a red cross. Lines with a green checkmark at the end correspond to a voter being correct on $ (\rx,\ry) $, and if incorrect a red cross. For instance the call $ \cB(\rS_{1}) $ has all of it voters expect $ h_{1,8}(\rx) $ being equal to $ \ry $, thus $ \cB(\rS_{1})(\rx) $ has a green check mark as $ 7/8\geq 3/4 $ of its voters are correct.}
    \label{fig:first_figure}
\end{figure}

\section{Previous work and detailed proof sketch}\label{sec:proofoverview}

In this section, we give a detailed proof sketch and describe how our result compares to previous optimal PAC learners. To this end, we describe the works \cite{hannekeoptimal}, \cite{Optimalweaktostronglearning}, and \cite{baggingoptimalPAClearner}. We will write random variables  $ \rx $ with boldface letters, and non-random variables and realizations of random variables as $ x $. 
\subsection{Previous work} 
\paragraph{Approach of \cite{hannekeoptimal}:}
To understand the above-mentioned works, we start with the work \cite{hannekeoptimal}, which the two other works use ideas from. For simplicity, we assume in this paragraph that we have a training sequence $ S $ with size some power of $4$. Furthermore, given training sequences $S,T\in (\cX \times \{-1,1\})^{*}$ we let $ S\sqcup T $ be the concatenation of the two training sequences $ S $ and $ T $, i.e.   $ [S^{T},T^{T}]^{T} $ (multiplicities of examples are important). In the following we will always assume that the training sequences are realizable by the target concept $ c.$  With this in place we now describe the deterministic subsampling scheme by \cite{hannekeoptimal}. 
\begin{algorithm}[H]
  \caption{$\cS'(S,T)$}\label{alg:Subsamplehanneke}
  \begin{algorithmic}[1]
    \State \textbf{Input:} Training sequences $S ,T\in (\cX\times \{-1,1\})^{*}$ 
    \State \textbf{Output:}{ A collection of training sequences}
      \State \textbf{if $ |S|\leq 3 $ then}
      \State\hspace{0.5cm}\textbf{return $ S\sqcup T $ }
      \State Split $ S $ into  $S_0,S_{1},S_{2},S_{3}$  where $ S_i $ contains the examples from $i|S|/4+1$ to $ (i+1)|S|/4$
      \State \textbf{return}$[\cS'\left(S_0 , S_2 \sqcup S_3 \sqcup T\right) , \cS'\left(S_0 , S_1 \sqcup S_3 \sqcup T\right) , \cS'\left(S_0 , S_1 \sqcup S_2 \sqcup T\right)]$
  \end{algorithmic} 
\end{algorithm}
\vspace{-0.3cm}
The above algorithm, given a training sequence $S$ of size $m=4^{k}$, generates $3^{\log_{4}(m)}\approx m^{0.79}$ training sequences, which have some non-overlapping parts, as it leaves out a fourth of the training examples in each recursive call. Leaving out training examples will be key later. Using this algorithm, 
\citeauthor{hannekeoptimal} produces his optimal PAC learner as the $ \sign(\cdot) $ of the majority voter, consisting of the output of the $\erm$-algorithm run on the sub training sequences of $S$ that $\cS'(S,T)$ returns for $ T=\emptyset $, i.e. $ \sign $ of  $\sum_{S'\in \cS'(S,T)} \erm(S)/|\cS'(S,T)|$. However, for the analysis, \citeauthor{hannekeoptimal} considers the above majority vote for general $ T\in(\cX\times\{-1,1\})^{*} $,  which we will denote $\erm(S,T)=\sum_{S'\in \cS'(S,T)} \erm(S)/|\cS'(S,T)|$ from now on. One of the key insights of \citeauthor{hannekeoptimal} is that by the recursive structure of the sub training sequences in $\cS'(S,T)$, $ \erm(S,T) $ can be written as
\begin{align*}
    \erm(S,T)=\frac{\erm(S_{0},S_{1}\sqcup S_{2}\sqcup T)}{3|\cS'(S_{0},S_{1}\sqcup S_{2}\sqcup T)|}+
\frac{\erm(S_{0},S_{1}\sqcup S_{2}\sqcup T)}{3|\cS'(S_{0},S_{1}\sqcup S_{3}\sqcup T)|}+
\frac{\erm(S_{0},S_{1}\sqcup S_{2}\sqcup T)}{3|\cS'(S_{0},S_{2}\sqcup S_{3}\sqcup T)|},
\end{align*}
where we in the following will write $\erm(3,T)$ for $\erm(S_{0},S_{1}\sqcup S_{2}\sqcup T)$, i.e., the $\erm$ trained on the sub training sequences where $S_{3}$ is left out, and likewise $\erm(2,T)$ and $\erm(1,T)$. 

Now, for the majority vote $\erm(\cS'(S,T))$ to fail on a point $x\in\cX$, it must be the case that half of its voters are incorrect, i.e., one of the three recursive calls $\erm(i,T)$ on the right-hand side of the above equation also fails on this point. Furthermore, there is still at least $1/2-1/3$ of the remaining voters in $\erm(\cS'(S,T))$ not in $\erm(i,T)$ that also fail, which corresponds to $1/4$ of the voters in $\erm(j,T)$ and $\erm(j',T)$ for $ j\not= j' $ and $ j,j'\not=i.$ Thus, if we pick uniformly at random $\ri\in\left\{ 1,2,3 \right\}$ and $\rh$ uniformly at random from $\erm(j,T)$ and $\erm(j',T)$ for $j'\not=j$ and $j',j\not=\ri$, we have that the chosen $\erm(\ri,T)$ and $\rh$ both fails on a point $x$ where $\erm(S,T)$ fails with probability at least $1/12$ over $\ri$ and $\rh$. Therefore, \citeauthor{hannekeoptimal} concludes that 
$ \p_{\rS\sim\cD_c^m}[\ls_{\rx\sim\cD_{c}}(\erm(\rS,T))\geq C(d+\ln{\left(1/\delta \right)})/m]$ is upper bounded by $\p_{\rS\sim\cD_c^m}[12\p_{\rx\sim\cD,\ri,\rh}[\erm(\ri,T)(\rx)\not=c(\rx),\rh(\rx)\not=c(\rx)]\geq C(d+\ln{\left(1/\delta \right)})/m],
$ where $C>1$ is some constant. By this relation, \citeauthor{hannekeoptimal} further argues that it is sufficient to show that $
    \max_{i\in\left\{ 1,2,3 \right\}}\max_{h\in \tiny\erm(\rS,T)\backslash\erm(i,T)}   12\p_{\rx\sim\cD}[\erm(i,T)(\rx)\not=c(\rx),h(\rx)=c(\rx)]$ is less than $ C(d+\ln{(1/\delta )})/m $
 with probability at least $ 1-\delta $ over $\rS$, which would follow by showing for each $i\in\left\{ 1,2,3 \right\}$ that 
\begin{align}\label{eq:intro1}
  \vspace{-0.6cm}
\max_{h\in \erm(\rS,T)\backslash\erm(i,T)} \negmedspace \negmedspace \negmedspace \negmedspace   12\p_{\rx\sim\cD}\left[\erm(i,T)(\rx)\not= c(\rx),h(\rx)\not=c(\rx)\right]\leq C(d+\ln{\left(1/\delta \right)})/m 
\end{align}
\vspace{-0.4cm}\newline holds with probability at least $1-\delta/3$, and then applying a union bound. 
To argue why this last statement is true \citeauthor{hannekeoptimal} continues by induction over the size of $\rS$ for $m=4^{k}$ where $ k\in\mathbb{N} $. The induction base $k=1$ follows by the right-hand side of the above for $m=|\rS|=4$ being larger than $1$. Now, for the induction step, let us consider the case $i=1$, which can be done without loss of generality since $\rS_{0},\rS_{1},\rS_{2},\rS_{3}$ are i.i.d.. \citeauthor{hannekeoptimal}, now uses the fact that, the induction base applied to $\erm(i,T)$ (which recurses on $\rS_{0}$ of size $4^{k-1}$) implies that,
\begin{align}\label{eq:intro2}
 \p_{\rx\sim\cD}\left[\erm(1,T)(\rx)\not=c(\rx)\right]\leq 4C(d+\ln{\left(9/\delta \right)})/m \leq 4\ln{\left(9e \right)}C(d+\ln{\left(1/\delta \right)})/m, 
\end{align}
holds with probability at least $1-\delta/9$ over $\rS_{0},\rS_{2},\rS_{3}$.
Furthermore, notice that if $\rS_{0},\rS_{2},\rS_{3}$ are such that $
    \p_{\rx\sim\cD}\left[\erm(1,T)(\rx)\not=c(\rx)\right]\leq C(d+\ln{\left(1/\delta \right)})/m $  then by the monotonicity of measures, we have
    $\max_{h\in \erm(2,T)\sqcup\erm(3,T)}    12\p_{\rx\sim\cD}\left[\erm(1,T)(\rx)\not= c(\rx),h(\rx)\not=c(\rx)\right] \leq C(d+\ln{\left(1/\delta \right)})/m.$ 
Thus, the problem reduces to analyzing the outcomes of $\rS_{0},\rS_{2},\rS_{3}$ such that \cref{eq:intro2} holds and is also lower bounded by $ C(d+\ln{\left(1/\delta \right)})/m .$ 
\citeauthor{hannekeoptimal} now uses the law of total probability to conclude that for such outcomes $ S_0,S_2,S_3 $, it holds for any $h\in \erm(2,T)\sqcup\erm(3,T)$, that the error $12\p_{\rx\sim\cD}[\erm(1,T)(\rx)\not= c(\rx),h(\rx)=c(\rx)]$ is less than 
\begin{align}\label{eq:intro4}
48\ln{\left(9e \right)}C \p_{\rx\sim\cD(\cdot \mid \erm(1,T)(\rx)\not=c(\rx))}\left[h(\rx)\not=c(\rx)\right] (d+ \ln{\left(1/\delta \right)}) /m.
    \end{align}
Thus, it suffices to show a uniform error bound, of $1/(48\ln{\left(9e \right)})$, under the conditional distribution, given that $\erm(1,T)$ errs, for all $h\in \erm(2,T)\sqcup\erm(3,T)$.

Now, assuming that \cref{eq:intro2} is lower bounded by $ C(d+\ln{\left(1/\delta \right)})/m $  and using the fact that $\rS_{1}$ contains, in expectation,
$\p_{\rx\sim\cD}[\erm(1,T)(\rx)\not=c(\rx)]m/4$, points from $\{x: \erm(1,T)(x)\not=c(x) \}$ (which, by the lower bounds is  greater than $C\ln{( e/\delta )}/4$) it follows by the Chernoff bound and for $C>64$ that $\rN_{1}:=|\rS_{1}\sqcap \{x: \erm(1,T)(x)\not=c(x) \}|\geq C(d+ \ln{(1/\delta )} )/8$ with probability at least 
$1-\exp(\p_{\rx\sim\cD}[\erm(1,T)(\rx)\not=c(\rx)]m/4^2)\geq 1-(\delta/e)^{4}\geq 1-\delta/9$  over $\rS_{1}$. 

Conditioned on this event, and since every $h\in \erm(2,T)\sqcup\erm(3,T)$ is a $ \erm $ trained on training sequence which contains $\rS_{1}$, it is consistent with $ \rS_{1}$, especially with the points in $\rS_{1}\sqcap \{x: \erm(1,T)(x)\not=c(x) \}$. The $ \erm $-bound (\cref{uniformconvergencemain}) on the points $\rS_{1}\sqcap \{x: \erm(1,T)(x)\not=c(x) \}$ (which is drawn according to the conditional distribution $\rx\sim\cD(\mid \erm(1,T)(\rx)\not=c(\rx)))
$ then gives, with probability at least $1-\delta/9$, for all $h\in \erm(2,T)\sqcup\erm(3,T)$ simultaneously, we have
$\p_{\rx\sim\cD(\cdot \mid \erm(1,T)(\rx)\not=c(\rx))}\left[h(\rx)\not=c(\rx)\right]\leq 2(d\ln{\left(2e \rN_{1}/d \right)}+\ln{\left(18/\delta \right)})/(\ln{\left(2 \right)}\rN_{1}).$ Since we had a lower bound of $\rN_{1}\geq C(d+ \ln{\left(1/\delta \right)} )/8$ and this is a decreasing function in $\rN_{1}$, we see that for $C$ sufficiently large, this is less than $1/(48\ln{\left(9e \right)})$. Therefore, by a union bound over the three above events (each of which occurs with probability at least $1-\delta/9$), \cref{eq:intro1} holds with probability at least $1-\delta/3$ as claimed. 

We are now ready to explain one of our key insights leading to \cref{introductionmaintheorem}. Using \cref{uniformconvergencemain}, one can see that a $ \erm $ trained on $ \Theta(d) $ training examples has a small constant error with probability at least $ 1-\exp(-d) $. This observation has also been used, for instance, in \cite{samplecompressionschemesamirmoran} to show existence of sample compression schemes for VC-classes, independent of the sample size. At first glance, it might seem like training on $ \Theta(d) $ examples would work straight out of the box with the above argument, as the argument required a uniform error bound on $h\in \erm(2,T)\cup \erm(3,T) $ of some small constant error. However, the above uniform error bound must hold under the conditional distribution $ \cD(\cdot \mid \erm(1,T)(\rx)\not=c(\rx))$, so a training sequence of $\Theta(d) $ examples would by the above argument only contain $ \Omega(d(d+\ln{\left(1/\delta \right)})/m) $ examples from this distribution so not necessarily any or enough to guarantee a small constant error. 

However, as we noted, using $ \Theta(d) $ training examples, the $ \erm $-learner can achieve a small error under the distribution $ \cD $, from which the training examples are drawn from, with probability at least $ 1- \exp{\left(-d \right)}  $. Furthermore, since we know the labels on the training sequence $ S $ we can for distributions $\cD$ over $S$ always check if the error is small. Thus, we can with this observation employ boosting with resampling (see, e.g., \cite{boostingbookSchapireF12}[Section 3.4]) to learn a voting classifier that is correct on the entire sample $ S $  using only calls of size $ \Theta(d) $ to the $ \erm $-learner. In fact, a voting classifier can even be found with good margins on $ S $, where a voting classifier $ f =\sum_{h\in\cH}\alpha_{h}h$ (with $ \alpha_{h} $ summing to 1) is said to have margin $ \gamma $ on $ S $ if for every $ (x,y)\in S $,  $f(x)y\geq \gamma  $. We now present the following work by \cite{Optimalweaktostronglearning}, which indicates how boosting might help improve training complexity.
\vspace{-0.3cm}
\paragraph{Approach of \cite{Optimalweaktostronglearning}:}
\cite{Optimalweaktostronglearning} used the fact that given a $\gamma$-weak learner $ \cW $, i.e., for all training sequences $\rS\sim \cD_{c}^{m}$ and any distribution $D$ over the examples in $\rS$, the $ \gamma $-weak learner $\cW$, given a sample of size $m_{0}$ from $D$, outputs a hypothesis $h$ from some base class $\cH$ with VC-dimension $d$ such that $\ls_{D_{c}}(h)\leq 1/2-\gamma$ with probability at least $1-\delta_{0}$ for $\gamma\leq 1/2$ and $\delta_{0}< 1$ - one can run a boosting algorithm $\cG$ that, given query access to $\cW$, returns a voting classifier $\cG(\rS)=\sum_{h\in \cH} \alpha_{h}h$ which has margins $\cG(\rS)(\rx)\ry=\Omega(\gamma)$ for all $(\rx,\ry)\in\rS$. 

The key insight of \citeauthor{Optimalweaktostronglearning} is that redoing the analysis of \cite{hannekeoptimal} with $\sign(\cG(\rS'))$, run on the training sequences in $\rS'\in \cS'(\rS)$ until the point \cref{eq:intro4}, uses no fact about which learning algorithm the training sequences in $\cS'(\rS)$ are trained on. Therefore, showing the claim of \citeauthor{Optimalweaktostronglearning}, similarly boils down to showing a uniform error bound for $h\in \sign(\cG(2,T))\sqcup\sign(\cG(3,T))$ under the conditional distribution $ \cD_{c}(\cdot\mid \cG(1,T)\not=c(\rx)) $ of  $1/(48\ln{\left(9e \right)})$ to get their target error of $C(d\gamma^{-2}+\ln{\left(1/\delta \right)})/m$. By a similar argument to that of \cite{hannekeoptimal}, \citeauthor{Optimalweaktostronglearning} concludes that $\rS_{1}$ contains $C(d\gamma^{-2}+\ln{\left(1/\delta \right)})/8$ points on which $\sign(\cG(1,T))$ fails with probability at least $ 1-\delta/9 $. Thus, reducing the problem to showing a uniform bound for the error of $ \sign(\cdot) $ of voting classifiers with margin $\Omega(\gamma)$ on an i.i.d. training sequence of size $C(d\gamma^{-2}+\ln{\left(1/\delta \right)})/8$, of at most $1/(48\ln{\left(9e \right)})$ with probability at least $1-\delta/9$. To this end, \citeauthor{Optimalweaktostronglearning}[See Theorem 4] show a uniform error bound that, informally, says with probability $1-\delta$, all voting classifier $ g $  with $\Omega(\gamma)$ margin on a training sequence $|\rS|\geq C(d\gamma^{-2}+\ln{\left(1/\delta \right)})/8$ has at most 
 $ 1/200 $ error, which is sufficient to complete their analysis. 

Thus, using our observation about $ \erm $ from before this paragraph, it follows that it can be plugged in as, for instance, a $ 1/4 $-weak learner with $ m_{0}=\Theta(d) $ and $ \delta_{0}= \exp{\left(-\Theta(d) \right)}  $  in the above algorithm, with a boosting algorithm like AdaBoost \cite{Adaboost}, run until the output of AdaBoost is consistent with the sub training sequence, i.e., $ \Theta(\ln{\left(m \right)}) $ rounds. Following the above method by \citeauthor{Optimalweaktostronglearning}, assuming the $ \erm $-algorithm never fails and running AdaBoost on all sub training sequences in $\cS'(S,\emptyset) $, i.e., $ \Omega(m^{0.79}) $ times, and assuming that AdaBoost is run for $ \Theta(\ln(m)) $ rounds and in each round update the distribution over $ \rS $ by evaluating the just-received hypothesis on the whole $ \rS $, this gives a training complexity of at least $ \Omega(m^{0.79}\ln{\left(m \right)})(\Utrain(\Theta(d))+3m\Uinf) $ and an inference complexity of $\Omega( m^{0.79}\ln{(m )} \Uinf)$, since it has to query $ \Omega(m^{0.79})$ many voting classifiers consisting of $ \Omega(\ln{(m )}) $ hypothesis. We want to stress that we do not claim that the above approach attains this lower bound or that one could not do something different with the above result by \citeauthor{Optimalweaktostronglearning} which would be more efficient. With that being said we notice that compared to \cite{hannekeoptimal}  the training complexity term $ \Utrain(\cdot) $ is now dependent on $ d $ not $ m $, however, there is an $ \Omega(m^{1.79}) $ term showing up in the training complexity and the inference complexity have become worse. Thus, the above indicates that boosting is helping to remove the $ m $ dependency in $ \Utrain(\cdot) $. We now move on to describe the work of \cite{baggingoptimalPAClearner}.

\paragraph{Approach of \cite{baggingoptimalPAClearner}:}
The analysis of \cite{baggingoptimalPAClearner} consists of two key steps, which we will briefly describe. To this end let $(\rS,\rB_{i})$ denote a sample with replacement from $\rS$ of size $ m'\in[0.02m,m]$. \citeauthor{baggingoptimalPAClearner} then outputs the $ \sign(\cdot) $ of the majority voter $\cB(\rS)=\sum_{i=1}^{n'} \erm((\rS,\rB_{i}))/n'$ for $n'=\Theta(\ln{\left(m/\delta \right)})$. Let $\binom{\rS}{m'}$ denote all possible sequences with replacement from $\rS$ of size $m'$.  

The first step of the analysis in \cite{baggingoptimalPAClearner} is to "derandomize" the bagging step, which \citeauthor{baggingoptimalPAClearner} accomplish by relating the error of the majority voter $\cB(\rS)$ to the purely analytical structure $\bar{\cB}(\rS)$ which is defined as $\sum_{\rS'} \ind\{\rS'\in \binom{\rS}{m'}\}\erm(\rS')/|\tbinom{\rS}{m'}|$. To this end \citeauthor{baggingoptimalPAClearner} observe that $ (\rS,\rB_{i}) $ can be viewed as a drawn with replacement from $\tbinom{\rS}{m'}$, which implies that $ \cB(\cS) $ can be seen as the majority vote of hypotheses drawn with replacement from the voters in $ \bar{\cB} $. To the end of using this observation, \citeauthor{baggingoptimalPAClearner} defines the event $E=\{ x\in \cX: \bar{\cB}(x)c(x)/|\binom{\rS}{m'}|\geq 1/3\}$ and bounds the error of $\cB(\rS)  $ by the following two terms using the law of total probability:
\begin{align}\label{eq:splittingerror}
  \vspace{-0.5cm}
 \ls_{\rx\sim\cD_{c}}[\cB(\rS) ] \leq \p_{\rx\sim\cD}[\bar{E}]+\ls_{\rx\sim\cD_{c}(\cdot \mid E)}(\cB(\rS)).
\end{align}
\vspace{-0.5cm}\newline
Now, using the observation that the voters in $ \cB $ can be viewed as drawn with replacement from the voters of $ \bar{\cB}(\rS) $, \citeauthor{baggingoptimalPAClearner} concludes that for any realization $S$ of $\rS$ and $x\in E$ the expectation of $c(x)\cB(S)$ is at least $1/3$. By applying Hoeffding's inequality and using $ n'=\Theta(\ln{(m/\delta )})$, it follows that at least half of the bootstrap samples are correct on $x$ with probability at least $1-\delta/(4m)$ over the bootstrap sample. An application of Markov's inequality over $\p_{\cB}[\ls_{\rx\sim\cD_{c}(\cdot|E)}(\cB(S))\geq 1/m]$, shows that with probability at least $1-\delta/4$ over the bagging step, the error is at most $1/m$. Since this holds for any realization $S$ of $\rS$ and the bagging step and $\rS$ are independent, this also holds for random $ \rS $, bounding the second term in \cref{eq:splittingerror} by $ 1/m $ with probability at least $ 1-\delta/4 $. 

The second step of \citeauthor{baggingoptimalPAClearner} is to upper bound the probability of $\bar{E}$, i.e., the margin error $\{ x\in \cX: \bar{\cB}(x)c(x)< 1/3\}$ of the classifier $\bar{\cB}$. This part of the analysis is highly non-trivial, and we will give only a high-level overview here. One of the key observations by \citeauthor{baggingoptimalPAClearner} in the second step, is that the training sequences in $ \binom{\rS}{m'} $ can be seen as being created recursively by a splitting algorithm. The splitting algorithm first splits the training sequence $\rS$ into $20$ disjoint training sequences followed by $19$ recursively calls to the same splitting algorithm. 
One reason for the large number of buckets is to ensure that $\bar{\cB}$ can be shown to have good margins- an idea we will use later. Furthermore, using the viewpoint of the training sequences in $ \binom{\rS}{m'} $ being created recursively \citeauthor{baggingoptimalPAClearner} gets some structure similar to \cite{hanneke2016refined}, with nonoverlapping training sequences. Similar to the step below \cref{eq:intro4} in the above overview of \cite{hannekeoptimal}, \citeauthor{baggingoptimalPAClearner} also uses at some point of the analysis that the training sequences are sufficiently nonoverlapping to see $\Theta(d+\ln{\left(1/\delta \right)})$ training examples under some conditional distribution and then calling \cref{uniformconvergencemain} for $\erm$'s trained on these $\Theta(d+\ln{\left(1/\delta \right)})$ training examples to get a uniform error bound of some sufficiently small constant. 

From the above, we deduce that the training complexity would be $\Omega(\ln{\left(m/\delta \right)})\Utrain(O(0.02m))$, since $\Theta(\ln{\left(m/\delta \right)})$ bagging training sequences are created, and a $\erm$ is trained on each of them. Inference on a new point has computational complexity $O(\ln{\left(m/\delta \right)})\Uinf$, as it requires querying all trained $ \erm $'s and taking a majority vote of their answers.

\citeauthor{baggingoptimalPAClearner} also says that combining Bagging and Boosting yields an optimal weak to strong learner. By combining this with our observation that the $ \erm $ learner provides a $ 1/4 $-weak learner, and omitting the fail probability, making $ \Theta(\ln{(m/\delta )} )$ bootstrap sample, training AdaBoost on them  $ \Theta(\ln{(m )}) $ rounds, and assuming that the distribution used by AdaBoost over the sample $ S $ in training, is updated each round by querying the just received hypothesis on the whole $ S $, this approach would yield a training complexity of at least $\Omega(\ln{\left(m/\delta \right)}\ln{\left( m\right)})(\Utrain(O(d))+3m\Uinf)$ and an inference complexity of $\Omega(\ln{\left(m/\delta \right)}\ln{\left(m \right)})\Uinf$. Where we again want to stress that we are not claiming that these lower bounds actually could be attained with the above approach or that one could not do something better with \citeauthor{baggingoptimalPAClearner} Bagging + Boosting.
With that being said we notice the above mirrors the situation in \cite{Optimalweaktostronglearning}, which also indicated that the dependency on \( \Utrain(m) \) could be reduced to \( \Utrain(d) \), but also indicated that it would come at a cost of an increase in inference complexity due to the necessity of querying all voters for each majority vote when doing inference.

With the above-related work explained, we now present our approach.
\vspace{-0.3cm}

\subsection{Detailed proof sketch}

Our algorithm will, as a subroutine, run an algorithm that we name $ \as $ \cref{alg:AdaBoostSample} and denote $ \cA. $ To describe $ \cA $, we let $  \rr=(\rr_{1},\ldots,\rr_{\Theta(\ln{(m/\delta )})}) \sim([0:1]^{550d})^{\Theta(\ln{(m/\delta )})} $ denote a random string, where the $ \rr_{i} $'s are i.i.d. sequences of length $ 550d $, and the $ \rr_{i,j} $'s are i.i.d. and uniformly distributed on the interval $ 0 $ to $ 1 $. The algorithm $ \cA$ on input training sequence $S$ of size $m$, random string $ \rr \sim([0:1]^{550d})^{\Theta(\ln{(m/\delta )})}$, and query access to $\erm$, is informally described AdaBoost run with a fixed learning rate, and early stopping ensuring that $ \cA $  has a specific number of voters, $t= \Theta(\ln{\left(m \right)}) $, in its majority vote $\cA(S)= \sum_{i=1}^{t} h_i/t $. Furthermore, if $ \cA $ has not reached its early stopping criteria after $ n=\Theta(\ln(m/\delta)) $ calls to the weak learner/$ \erm,$  $ \cA $  terminates by outputting $ \erm(S) $ (we will also see this as a majority vote of $ t $ copies of $ \erm(S) $). We show that the latter happens with probability at most $O(\delta/m)$. 

Furthermore, we show that $ \cA $ guarantees that $\ls_{S}^{3/4}(\cA(S)):= \sum_{x\in S} \ind\{\cA(S)(x)c(x)\leq 3/4 \}/m <1/m $, meaning all the $m$ points in $ S $  has $ 3/4 $  margins. This will be used later in the proof to obtain a uniform error bound, similarly as in the other proofs -- however will also play a key role in getting our inference complexity that does not suffer from a blow-up due to the boosting step as indicated by the sketched lower bounds in the previous paragraphs.

The reason why we can guarantee that the outputted majority voter $ \cA(S) $ satisfies
$ \ls_{S}^{3/4}(\cA(S)) <1/m $ is due to \cref{AdaBoostSampleMarginLemma3}, which says that if $ \cA $ in each boosting call, upon querying $ \erm $  with a sample from $ D_{i} $ over $ S $, receives a hypothesis $ h_{i} $ from $ \erm $ such that  $ \ls_{D_{i}}(h_{i})\leq 1/2-\gamma  $ for $ \gamma=9/20 $,  we have that after $ t $ such rounds the in sample 3/4-margin loss is  $\p_{(\rx,\ry)\sim S}\left[\ry \sum_{i=1}^{t} h_{i}(\rx)\leq 3/4 \right]\leq (24/25)^{t}$. Now as we use a $ \erm $ as a weak learner, we can make $ \gamma <1/2$  arbitrarily close to $ 1/2 $  with enough samples from $ D_{i} $, except with a failure probability by \cref{uniformconvergencemain}. For the purpose of showing $ \ls_{S}^{3/4}(S)<1/{m} $ we will end up giving the $ \erm $ a sample of size $ 550d $, which will allow us to set $ \gamma=9/20 $ and since $t= \Theta(\ln{\left(m\right)}) $ the bound $ \ls_{S}^{3/4}(\cA(S))<1/m $ follows if the $ \erm $ do not fail. 

We now resolve the case of the $ \erm $ failing to get a smaller loss than $ 1/2-\gamma $. We remark that since we know $ D_{i} $ over $ S $ and $ S $, we can always check whether a hypothesis returned by $ \erm $ succeeds or not, and if it fails, we can skip that hypothesis and query $ \erm $ again with a new sample from $ D_{i}$. Thus, for the above argument to go through, we just have to ensure that $ \erm $ succeeds $ t $  times with probability at least $ 1-O(\delta/m) $. Since a boosting round can fail, we must do more than $ t $ rounds. Thus, we 
run a for-loop of size $n= \Theta(\ln{\left(m/\delta\right)} )$, and try to train a hypothesis for boosting in each iteration.  We will have that each call succeeds with probability at least $1-\delta_{0}=1-2^{-d}$, so the expected number of success is at least $(1-2^{-d})\Theta(\ln{\left(m/\delta \right)} )$. By the multiplicative Chernoff bound, the probability of seeing fewer than $(1-2^{-d})\Theta(\ln{\left(m/\delta\right)} )/2 \geq t$ success, is at most $ \exp((1-2^{-d})\Theta(\ln{\left(m/\delta \right)} )/8)=O(\delta/m)$. In the above argument we are omitting dependencies of the success which is defined in terms of the distributions $ D_1,\ldots $  that $ \cA $  is updating iteratively. In \cref{adaboostsamplefewroundslemma}, we take this into account, however as the entries of $ \rr $ are i.i.d. the argument will be close to the above. 

For the training complexity of our algorithm, we need the training complexity of $\cA$, as it runs $ \cA $ as a sub-routine. As stated above, with probability at least $1-O((\delta/m))$ $ \cA $ runs a for loop of size $ n $ and outputs a majority vote $\cA(S)= \sum_{i=1}^{t} h_i/t $ with $ 3/4 $-margins on $ S.$ We show in \cref{adaboostsampleruntime} that each iteration of the for loop takes $ O(m+d\ln{\left(m \right)})+\Utrain(d)+3m\Uinf $ operations. The $ O(m)+3m\Uinf$ arises from updating the distribution $ D_{i} $, leading to evaluating the just-received hypothesis $ h_i $  on all $ m $ examples in $ S.$ The $O(d\ln{\left(m \right)} ) $ term comes from sampling $ 550d $ times from the distribution $D_{i}$, where each sample from $ D_{i} $  requires $ \ln{\left(m \right)} $ operations to generate, as a binary search over the buckets of the cumulative distribution function of $ D_{i} $ is made, to find which bucket the uniform random variable $ \rr_{j,l} {\sim} [0,1]$ generating the sample landed in. The $ \Utrain(550d) $ term comes from the $ \erm $-algorithm training on the generated sample from $ D_{i} $ of size $ 550d $. Thus, as the for loop has size $\Theta(\ln{\left(m/\delta \right)} ) $,  this leads to a training complexity of $ O(\ln{\left(m/\delta \right)} )(O(m+d\ln{\left(m \right)})+\Utrain(d)+3m\Uinf ) $ with probability at least $ 1-O(\delta/m)$. With the properties of $ \cA $ introduced, we now present the splitting algorithm inspired by \cite{hannekeoptimal}, which we use in our algorithm. We denote this algorithm as $ \cS $.
\begin{algorithm}[H]
    \caption{Subsampling algorithm $\cS$}\label{alg:Subsample}
    \begin{algorithmic}[1]
        \State \textbf{Input:} Training sequences $S,T \in (\cX \times \cY)^{*}$.
        \State \textbf{Output:} A sequence of sub-training sequences of $S \sqcup T$.
        \State \textbf{if $|S| \geq 6$ then}
            \State\hspace{0.4cm}Split $S$ into $S_0, S_1, S_2, S_3, S_4, S_5$ where $S_i$ contains examples from  $i|S|/6+1$ to $(i+1)|S|/6$.\label{alg:Subsample:recursion}
            \State\hspace{0.4cm}\textbf{return} $[\cS(S_0, S_1 \sqcup T), \cS(S_0, S_2 \sqcup T), \cS(S_0, S_3 \sqcup T), \cS(S_0, S_4 \sqcup T), \cS(S_0, S_5 \sqcup T)].$\label{alg:Subsample:recursion2}
        \State \textbf{else}
            \State\hspace{0.4cm}\textbf{return} $S \sqcup T.$\label{alg:Subsample:recursion3}
    \end{algorithmic}
\end{algorithm}

We notice differences from \cite{hannekeoptimal} in that the recursive calls only overlap in $ S_{0} $, the number of splits being $ 6 $ instead of $ 4 $,  and the recursions being $ 5 $ instead of $ 3 $. The reason for the former is that we found it easier notation-wise to let $ \cS(i,T)=\cS(S_{0},S_{i}\sqcup T) $. The reason for the latter is more interesting and follows the same reasoning as \cite{baggingoptimalPAClearner} to achieve good margins. However, we do more than just obtain good margins for the majority vote of the majority voters, as done in \cite{baggingoptimalPAClearner} with bagging and boosting. This "more" is our key observation to prevent our inference complexity from blowing up due to the boosting step. Now let $ \cA(\cS(S,T)) =[\cA(S')]_{S'\in \cS(S,T)} $ be the family of hypothesis outputted by $ \cA $ when run on each sub training sequence of $ \cS(S,T) $. What we then show is that with probability at least $ 1-\delta $ over $ \rS $,          
\begin{align}\label{eq:deterministic error}
  \vspace{-0.4cm}
    \p_{\rx\sim\cD} \Bigg[  
      \sum_{\rf\in \cA(\cS(\rS,\emptyset))} \negmedspace \negmedspace \negmedspace \negmedspace \ind\{\sum_{s=1}^{t}  \ind\{h_{\rf,s}(\rx)=c(\rx)\}/t\geq 3/4\}/|\cS(\rS,\emptyset)|
       <3/4\Bigg]\negmedspace \negmedspace \leq \cs \frac{d+\ln{\left(1/\delta \right)}}{m},
  \end{align}
  \vspace{-0.4cm}\newline
where we have used that $\rf\in  \cA(\cS(S,T)) $ is a majority vote over $ t $ hypothesis, $ \rf=\sum_{s=1}^{t} h_{\rf,s}.$     
Before explaining why this holds, we provide the rationale for why this is what we want to show, and how it gives us the inference complexity that does not suffer an increase of $ \Theta(\ln{(m )}) $  from the boosting step. To this end, we draw inspiration from \cite{baggingoptimalPAClearner} \cref{eq:splittingerror} and the idea of "derandomzing" $ \cB $ by $ \bar{\cB} $, with the above classifier/event now taking the place of $ \bar{\cB} $. Now, let our algorithm be denoted $ \ah $ and let $ E $ denote the event $ \{x\in \cX \mid  \sum_{\rf\in \cA(\cS(\rS,\emptyset))} \ind\{\sum_{s=1}^{t} \ind\{h_{\rf,s}(x)=c(x)\}/t\geq 3/4\}/|\cS(\rS,\emptyset)|
\geq 3/4 \}$, we then have,  as in \cref{eq:splittingerror} that,
$\ls_{\rx\sim\cD_{c}}(\ah(\rS)) = \p_{\rx\sim\cD}[\bar{E}]+\ls_{\rx\sim\cD_{c}(\cdot \mid E)}(\ah(\rS)).$

Now given an $ x\in E $, we observe by the definition of $ E $ that if we sample a row/majority voter of $ \cA(\cS(\rS,\emptyset)) $ and then sample one of its voters, we have with probability at least $ (3/4)^{2}=9/16 $ that the sampled voter is such that $ h(x)=c(x) $, thus correct on the point $ x $. Since this probability is $1/16 $ greater than $ 1/2 $,  we get by repeating the above way of sampling voters $l= \Theta(\ln{\left(m/(\delta(d+\ln{\left(1/\delta \right)})) \right)} )$ times, that for $ x\in E $, letting $ \rX_{x}=\sum_{i=1}^{l}\rX_{x,i} $, where $ \rX_{x,i} $ indicates if the $ i $'th drawn voter is correct on $ x $, a Chernoff bound implies that $\p\left[\rX_{x}\leq l/2\right]= \p\left[\rX_{x}\leq (1-2/18)9l/16\right]\leq \exp((2/18)^{2}\cdot(9l/16)/4)=O((\delta(d+\ln{\left(1/\delta \right)}))/m )$. Thus, by Markov's inequality, $ \p\left[\p_{\rx\sim\cD_{c}(\cdot\mid E)}\left[\rX_{x}\leq l/2\right]\geq (d+\ln{\left(1/\delta \right)})/m \right]=O(\delta)$. That is with probability at least $ 1-\delta $ over the repeated random sampling over voters in the above fashion, we have that $\ls_{\rx\sim\cD_{c}(\cdot \mid E)}(\ah(\rS))\leq \p_{\rx\sim\cD(\cdot|E)}\left[\rX_{x}\leq l/2\right]=O((d+\ln{\left(1/\delta \right)})/m)  $ (The above argument is also depicted in \cref{fig:first_figure}). 

Thus, with probability at least $ 1-\delta $ over the above way of random sampling of voters, the majority vote of them, on a new example $ \rx $ drawn from $ \rx\sim\cD(\cdot|E) $ has $O((d+\ln{\left(1/\delta \right)})/m)$ error. Thus, if we let $ \ah $ be the algorithm that samples $l= \Theta(\ln{\left(m/(\delta(d+\ln{\left(1/\delta \right)})) \right)} )$ many hypotheses/voters in this fashion and takes the majority vote of them, we get, by the above splitting of $ \ls_{\cD_{c}}(\hat{\cA}(\rS)) $  and \cref{eq:deterministic error} (which upper bounds $\bar{E}$),  that with probability at least $ 1-\delta/2 $ over $ \rS $ and $ \ah $  that    $\ls_{\rx\sim\cD_{c}}[\ah(\rS) ] =O(d+\ln{\left(1/\delta \right)}/m)$, which shows that $ \ah $ obtain the optimal error bound of realizable PAC learning. Further, we get that the inference complexity will be evaluating each of the  $l= \Theta(\ln{\left(m/(\delta(d+\ln{\left(1/\delta \right)})) \right)} )$ voters sampled in the above fashion, and thus the inference complexity becomes $ \Uinf(\Theta(\ln{\left(m/(\delta(d+\ln{\left(1/\delta \right)})) \right)}))$ as claimed without the blow-up from the boosting step, since we can sample on the voters level.  

Thus, what we still need to argue about is that \cref{eq:deterministic error} holds with probability at least $ 1-\delta $. Inspired by \cite{hannekeoptimal} we show the claim by induction in the size $ m=6^{k} $ of $ \rS $, and consider \cref{eq:deterministic error} with an arbitrary training sequence $ T $ instead of $ \emptyset $. The induction base follows from the right-hand side of \cref{eq:deterministic error} being greater than $ 1 $ for $ m=6 $. Now we first notice that for the event inside of \cref{eq:deterministic error} to happen, it must be the case that there exists an $ i =1,2,3,4,5$ such that 
$ \sum_{\rf\in \cA(\cS (i,T))} \ind\{\sum_{s=1}^{t} \ind\{h_{\rf,s}(x)=c(x)\}/t\geq 3/4\}/|\cS(i,T)|
       <3/4.
$
Since $ \cA(\cS(i,T)) $ has $ 1/5  $ of the voters in $ \cA(\cS(\rS,T)) $, we have that there are still $ 1/4-1/5 =1/20$ of the voters in $  \cA(\cS(\rS,T))$ not in $\cA(\cS(i,T))$ that do not have $ 3/4 $ correct answers on $ x $. This is a $ (5/4)(1/20)=1/16 $ fraction of the majority voters in $ \cA(\cS(\rS,t))\backslash \cA(\cS(i,T)) $, and thus we get that a randomly chosen majority voter $ \rf $ from $ \cA(\cS(\rS,T))\backslash \cA(\cS(i,T)) $ is such that  $\sum_{s=1}^{t} \ind\{h_{\rf,s}(x)=c(x)\}/t< 3/4$ with probability at least $ 1/16 $. Thus, by drawing a uniform random $ \ri\sim {1,\ldots,5} $ and uniform random $ \rf \in  \cA(\cS(\rS,T))\backslash \cA(\cS(\ri,T)) $,  we conclude similarly to \cref{eq:intro1} that it suffices to show that for $ i,j\in \left\{ 1,2,3,4,5  \right\}$  $ i\not=j $   with probability at least $ 1-\delta/20 $ over $ \rS $ that 
\begin{align}\label{eq:ourrecursion}
  80 \negmedspace \negmedspace \negmedspace \negmedspace \negmedspace \negmedspace \max_{\rf\in\cA(\cS(\rS_{j},T))} \negmedspace \negmedspace \negmedspace \negmedspace \negmedspace \negmedspace \p_{\rx\sim\cD_{c}}\Bigg[ \negmedspace \negmedspace \negmedspace \negmedspace \negmedspace \negmedspace \negmedspace \negmedspace \negmedspace \negmedspace \negmedspace \negmedspace \negmedspace \negmedspace   \sum_{ \quad\quad \quad \rf\in \cA(\cS (i,T))} \negmedspace \negmedspace \negmedspace \negmedspace \negmedspace \negmedspace \negmedspace \negmedspace \negmedspace \negmedspace \negmedspace \negmedspace \negmedspace \negmedspace \frac{\ind\{\sum_{s=1}^{t} \ind\{h_{\rf,s}(\rx)=c(\rx)\}/t\geq \frac{3}{4}\}}{|\cS(i,T)|}
  \negmedspace  <\negmedspace \frac{3}{4}, \sum_{s=1}^{t} \ind\{h_{\rf,s}(\rx)=c(\rx)\}/t\negmedspace<\negmedspace\frac{3}{4}\Bigg]
\end{align}
is upper bound by $ \cs (d+\ln{\left(1/\delta \right)})/m $ (our \cref{induktionlemma}), and doing a union bound over all the $ 20 $ combinations of $ i,j $ one gets that \cref{eq:deterministic error} holds with probability at least $1- \delta $. We notice that if we had made less than $ 5 $ recursive calls, the above would not have worked since we then would not be able to guarantee that there were any number of majority voters left in $ \cA(\cS(\cS,T))\backslash \cA(\cS(i,T)) $ that failed to have $ 3/4 $ correct as $ 1/4-1/a \leq0$, for $ a<5 $. 

Now to show the above holds with probability at least $ 1-\delta/20 $, we first use the induction hypothesis that holds for $m=6^{k-1},$ which we know is the size of $ \rS_{i} $. Thus, we get that with probability at least $ 1-\delta/60 $ over $ \rS_{0},\rS_{i} $ that 
\vspace{-0.2cm}
\begin{align}\label{eq:errorinduktionstep}
    \p_{\rx\sim\cD}\Bigg[ \sum_{\rf\in \cA(\cS (i,T))} \frac{\ind\{\sum_{s=1}^{t} \ind\{h_{\rf,s}(x)=c(x)\}/t\geq 3/4\}}{|\cS(i,T)|}
    <3/4\Bigg]\leq 6\cs \frac{d+\ln{\left(60/\delta \right)}}{m}.
  \end{align} 
  \vspace{-0.4cm} \newline We can further restrict to the setting where the above is at least $ \cs (d+\ln{\left(1/\delta \right)})/(80m) $, since otherwise we have by the monotonicity of measures that \cref{eq:ourrecursion} is upper bounded by $ \cs(d+\ln{\left(1/\delta \right)})/m $ as noted by \cite{hannekeoptimal}. Further, we can with this lower bound, as in the above sketch for \cite{hannekeoptimal} show by a Chernoff bound that $ \rS_{j} $ contains at least $ \cs (d+\ln{\left(1/\delta \right)})/(2\cdot6\cdot80) $ points (we denote these points $ \rS_{j}\sqcap E' $) where $ \sum_{\rf\in \cA(\cS (i,T))} \ind\{\sum_{s=1}^{t} \ind\{h_{\rf,s}(x)=c(x)\}/t\geq 3/4\}/|\cS(i,T)|
<3/4 $ with probability at least $ \exp(-\cs (d+\ln{\left(1/\delta \right)})/(8\cdot 6\cdot80)) $, which for $ \cs $ sufficiently large is less than $ \delta/60 $. Thus, if we now let $ E' $ denote the following event  $\{x\in \cX: \sum_{\rf\in \cA(\cS (i,T))}\ind\{\sum_{s=1}^{t} \ind\{h_{\rf,s}(x)=c(x)\}/t\geq 3/4\}|\cS(i,T)|
<3/4 \}$,  we can use the law of conditional probability to bound \cref{eq:ourrecursion} using \cref{eq:errorinduktionstep} with probability at least $ 1-\delta/60 $ over $ \rS_{0},\rS_{i} $  
\vspace{-0.3cm}
\begin{align}\label{eq:laststepininduktionstep}
    80\max_{\rf\in\cA(\cS(\rS_{j},T))}\p_{\rx\sim\cD(\cdot|E')}\Big[ \sum_{s=1}^{t} \ind\{h_{\rf,s}(x)=c(x)\}/t< 3/4\Big] \cdot 6\cs (d+\ln{\left(60/\delta \right)})/m,  
  \end{align} 
  \vspace{-0.4cm} \newline as done similarly to \cite{hannekeoptimal}. Thus, if we can show that $\p_{\rx\sim\cD(\cdot|E')}[ \sum_{s=1}^{t} \ind\{h_{\rf,s}(x)=c(x)\}/t< 3/4]$ is uniformly bounded over the majority voters in  $ \cA(\cS(\rS_{j},T))$,  by $ 1/(80\cdot 6\cdot \ln{\left(60e \right)}) $ with probability at least $ 1-2\delta/60 $ over $ \rS_{j} $,  the upper bound of \cref{eq:ourrecursion} would follow. We notice the above since, $ \sum_{s=1}^{t} \ind\{h_{\rf,s}(x)=c(x)\}/t< 3/4 $ is equivalent to $ \sum_{s=1}^{t} h_{\rf,s}(x)c(x)/t< 1/2 $, the above uniform bound is equivalent to a uniform bound on  $ \p_{\cD_{c}(\cdot|E')}[\sum_{s=1}^{t} h_{\rf,s}(x)c(x)/t<1/2]$, the $ 1/2 $-margin loss. 
  
  We remark that the above-required uniform bound is different from and not implied by the technique used to get the above uniform bound that \cite{Optimalweaktostronglearning} want, which was a uniform bound on the classification loss of the $ \sign(\cdot) $ of a voting classifier, whereas the above is wanting something stronger, namely a uniform error bound on the $ 1/2 $-margin loss for the majority vote, which Theorem 4 of \cite{Optimalweaktostronglearning} does not imply. 

To the end of showing the above uniform bound, we now use the aforementioned property of $ \cA $ outputting majority voters having zero  $ 3/4$ margin loss on the training sequence it receives. 
To use this property, we show the following bound, \cref{lem1},
which, for $0< \gamma< 1 $ and $ \xi>1 $, states that with probability at least $ 1-\delta/60 $ over $ \rS_{j}\sqcap E' $, we have for all majority voters $ f\in\dlh=\{f:f=\sum_{s=1}^{t} h_{t}/t, \forall s\in\{1:t\}, h_{s}\in \cH\} $ that $ \p_{\rx\sim\cD(\cdot |E')}\left[f(\rx)c(\rx)\leq \gamma\right]$ is upper bounded by $ \p_{\rx\sim \rS_{j}\sqcap E'}\left[f(\rx)c(\rx)\leq \xi\gamma\right] +C(2d/(((\xi-1)\gamma)^{2}|\cS_{j}\sqcap E'|))^{0.5}+(2\ln{\left(120/\delta \right)}/|\cS_{j}\sqcap E'|)^{0.5} $, where $ C $ is some universal constant.\footnote{People familiar with ramp loss bounds combined with uniform convergence and Rademacher complexity arguments may recognize that this bound can be derived by letting the ramp start its descent at $ \gamma $ and end at $ \xi \gamma $, which gives the slope $ 1/((1-\xi)\gamma)$. We could not find the bound stated elsewhere, so we proved it in \cref{lem1}.}  

Now, using that any $ \rf \in\cA(\cS(\rS_{j},T))$, implies $ \rf=\cA(S') $, for some sample $ S'\in\cS(\rS_{j},T) $ which satisfies  $ (\rS_{j}\sqcap E')\sqsubset \rS_{j}\sqsubset S'$ being a subset of $ S' $  by \cref{alg:Subsample}  Line~\ref{alg:Subsample:recursion3}. Which, by the zero $ 3/4 $ margin loss guarantee of $ \cA $ on the training sequence it is given, implies that $ \rf $ has zero $ 3/4 $ margin loss on $ \rS_{j}\sqcap E' $. Furthermore, since  $ |\rS_{j}\sqcap E'|\geq \cs(d+\ln{\left(1/\delta \right)})/960 $ holds with probability at least $ 1-\delta/60 $ over $ \rS_{j} $,  and with probability at least $1-\delta/60 $ over $ \rS_{j}\sqcap E' $ the uniform bound over $ \dlh$, with $ \gamma=1/2 $, $ \xi=3/2 $, we conclude that with probability at least $ 1-2\delta/60 $ we have that 
\vspace{-0.4cm} 
\begin{align*}
  \max_{\rf\in\cA(\cS(\rS_{j},T))}\negmedspace\negmedspace\negmedspace\negmedspace\negmedspace\negmedspace\negmedspace\p_{\rx\sim\cD(\cdot|E')}\Big[ \sum_{s=1}^{t} \ind\{h_{\rf,s}(x)=c(x)\}/t< \frac{3}{4}\Big]\negmedspace\leq\negmedspace C\Big(\frac{30720d}{\cs(d+\ln{\left(1/\delta \right)})}\Big)^{\negmedspace\tfrac{1}{2}}+\Big(\frac{1920\ln{\left(120/\delta \right)}}{\cs(d+\ln{\left(1/\delta \right)})}\Big)^{\negmedspace\tfrac{1}{2}}
  \vspace{-0.4cm}
  \end{align*}        
which for $ \cs$ sufficiently large, is less than $ 1/(8\cdot6\cdot\ln{(60 e)}) $, which, as alluded to below \cref{eq:laststepininduktionstep} implies \cref{eq:ourrecursion}, which gives our optimal PAC error bound and inference complexity. Thus, if we can show that the training complexity is as claimed in \cref{introductionmaintheorem} with probability at least $ 1-\delta/2 $, we have shown the claim of \cref{introductionmaintheorem} with probability at least $ 1-\delta $.

However, before we do this, we make a small remark about the above argument allowing us to transition from a majority vote of majorities to a majority vote. Namely, that this argument does not lend itself to the weak to strong setup of \cite{Optimalweaktostronglearning}, as \cref{eq:deterministic error} to the best of our knowledge, then only hold with $\sum_{s=1}^{t}  \ind\{h_{\rf,s}(x)=c(x)\}/t\geq 1/2+\Theta(\gamma) $. Thus, the probability of sampling a correct voter becomes $ (3/4)(1/2+\Theta(\gamma))=3/8+(3/4)\Theta(\gamma) $, which is only greater than $ 1/2 $ if $ (3/4)\Theta(\gamma)>1/8 $, which is not always given in the weak to strong learning setup as $ \gamma $ can be arbitrarily small. Thus, another part of our observation is that we are in the realizable setup and not in the weak to strong setup, so we can get arbitrarily good margins, of the voters in the majority votes.  

Now, for the training complexity bound of $ \ah $, we need to show that the above sampling of voters can be done efficiently. The above sampling process can be described as follows: First, sample a training sequence/row $\rS' $  of $ \cS(\rS) $, then train $ \cA(\rS') $, and then sample uniformly at random a hypothesis of $ \cA(\rS') $ and repeat this $l=\Theta(\ln{\left(m/(\delta(d+\ln{\left(1/\delta \right)})) \right)})$ times. Sampling a row $ \rS' $  from $ \cS(\rS) $ can be accomplished using $ O(m) $ operations by 
drawing a $ \rw$ uniformly at random from $\left\{ 1:5 \right\}^{\log_{6}(m)}=\left\{ 1:5 \right\}^{k}$ and for each  $i=1,\ldots,k$, select the examples in $ \rS $ between $ [ 6^{k}/6^{i}\rw_i+1,(\rw_i+1)6^{k}/6^{i}] $. In this way, $ \rw_{i} $ determines which of the $ 5 $ training sequences to recurse on at the $ i $'th recursion step Line~\ref{alg:Subsample:recursion2} uniformly at random. Since it takes at most $ O(\ln{\left(m \right)}) $ operations to compute these indices,
we get that sampling a row of $ \cS(\rS) $ takes $ O(m) $ operations. 

Now since $ \cA $ had training complexity $ O(\ln{\left(m/\delta \right)} )(O(m+d\ln{\left(m \right)})+\Utrain(d)+3m\Uinf ) $ with probability at least $ 1-O(\delta/m) $, applying a union bound on the event that all the runs in $\cA(\cS(\rS,\emptyset))$ succeeds, which there is at most $ m^{\log_{6}(5)} $ of, one get that the above training complexity holds for all runs in $ \cA(\cS(\rS,\emptyset)) $, especially the one sampled,  with probability at least $ 1-\delta/2 $. Now given $ \cA(\rS') $  sampling a hypothesis from $ \cA(\rS')= \sum_{s=1}^{t} h_s/t $ takes at most $t= O(\ln{\left(m/(\delta(d+\ln{\left(1/\delta \right)})) \right)}) $ operations. Thus, the training complexity, of finding the $ l $ voters, becomes $ l $ times the computational complexity cost of running $ \cA$ which gives the training complexity      $O(\ln{m/(\delta(d+\ln{\left(1/\delta \right)}))} )\cdot\ln{(m/\delta )})\cdot (O(m+d\ln{\left(m \right)})+\Utrain(550d)+ 3m\Uinf)$
with probability at least over $ 1-\delta/2 $  over the randomness of $ \rS$ and $\cA$, which, as noted earlier, concludes the proof sketch.

\paragraph{Summary of our contribution:} \cref{maintheorem} provides an optimal PAC learner that only queries the $ \erm $  with training sequences of size $ 550d $ - positively answering \hyperref[question1]{Question~\ref*{question1}}. Moreover, the optimal PAC learner's of \cref{maintheorem} computational cost, when seen as a function of the number of training examples $ m $ runs up to a quadratic logarithmic factor in linear time - answering \hyperref[question2]{Question~\ref*{question2}} positively up to the quadratic logarithmic factor - however as mentioned in the perceptron example $ U_{T}(d) $ may depend on $ m $, and it may also be the case for $ U_{I} $. Furthermore, the inference complexity off \cref{maintheorem} is asymptotically the best among the known optimal PAC learners. 

The key observations that allowed for these positive results were seeing that $ \erm $ trained on $ \Theta(d) $ points allowed for creating majority voters with good margins, and it implied that the purely analytical structure $ \cA(\cS(\rS)) $ could be shown to have good margins on both the majority voter level and the majority of majorities level. This combined with the observation that $ \ah $ could be seen as sampled from the voters in the majorities of $ \cA(\cS(\rS)) $, led to the inference complexity of $\ah$ not suffering a blow-up from the boosting step, a close to linear training complexity, and an optimal PAC error bound. 

\paragraph{Use of randomness:}Before we move on to describe the structure of the paper we want to mention some limitations we see off our work. Our algorithm $ \ah $ uses randomness which current computational cost is accounted for by saying that all random variables has to be read when used and that this takes $ 1 $ operation - so we are not taking into account if there is some computational cost in creating the randomness. To make it clear how much and what randomness we use we state it here: we use $ \Theta(d\ln{(m/\delta)}) $ uniform random variables on the interval $ [0:1] $,  $ \Theta(\ln{(m )}\cdot\ln{(m/(\delta(d+\ln{(1/\delta )})) )})$ uniform random variables on the discrete values  $ \{ 1,2,3,4,5 \} $, and $ \Theta(\ln{(m/(\delta(d+\ln{(1/\delta )})) )})$  uniform random variables on the discrete values $ \{ 1,\ldots,\Theta(\ln{(m )}) \} $. Further the model we use to count computational cost is arguably simplistic, and could be refined further.     

\paragraph{Structure of the paper:}The next section introduces the notation and preliminaries used in this work. \cref{sec:optimalityah} gives the proof of our optimal PAC learner using properties of $ \cA(\cS(S,\emptyset)) $ proved in \cref{sec:optimalitya} and properties of $ \cA $ proved in \cref{sec:propAdaBoostSample}, where some of the proofs for the properties are deferred to the three appendix one for each of the above-mentioned sections. 

\section{Notation and Preliminaries}\label{sec:notation}

We work with a countable universe $\cX$, and countable hypothesis class $\cH$ to ensure sufficient measurability conditions.\footnote{See, e.g., \cite{measureone} and  \cite{measuretwo} for more sufficient measurability assumptions} We will write random variables $ \rx $ with boldface letters, and non-random $ x $ with non-bold face letters.  

We use $(\cX\times\{-1,1\})^{m}$ to denote training sequences of length $m$, with repetition. Further, we let $(\cX\times\{-1,1\})^{*}=\cup_{i=1}^{\infty}(\cX\times\{-1,1\})^{i}$ denote all possible finite training sequences. For sequences $S,T\in (\cX\times\{-1,1\})^{*}$, we write $S\sqcup T$ for the concatenation/union of the two sequences, i.e., $[S^T,T^T]^T$ where $^{T}$ denotes the transpose of the sequence, i.e. with repetitions. Furthermore, we call $ S' $  a sub training sequence of $ S $ if, for every $ (x,y)\in S' $, we have $ (x,y)\in S $. We sometimes write this as $S'\sqsubseteq S$. Note that $ S' $ may have a different multiplicity of a training example $ (x,y) $ than $ S $, possibly larger. For a set $A\subset \cX$ (without repetitions) we write $S\sqcap A$ for the training sequence $[S_{l}\mid S_{l,1}\in A]$, i.e., the sub training sequence of training examples $S_{l}$ of $S$ that has their feature $S_{l,1}$ in $A$.    

For $ a,b\in \mathbb{R} $ with $ a<b $, we write $ [a:b]=\{ x\in\mathbb{R} \mid a\leq x\leq b  \} $ and $ [a:b]^{*}=\cup_{i=1}^{\infty}[a:b]^{i} $ and $ ([a:b]^{*})^{*}=\cup_{i,j\in \mathbb{N}}([a:b]^{j})^{i} $. For $ a,b\in \mathbb{N} $ with $ a<b $, we write $ \{  a:b\}=\{ x\in\mathbb{N} \mid a\leq x \leq b \} $, with similar definitions for $ \{ a:b \}^{*} $ and $ (\{ a:b \}^{*} )^{*}$. When we for a random variable $ \rr $ write $ \rr\sim ([a:b]^{*})^{*} $, we mean that the $ \rr_{i,j} $'s are i.i.d. random variables uniformly drawn from $ [a:b] $. Similarly, for $ \rw \sim (\{ a:b \}^{*})^{*} $.

For a distribution $ \cD $ over $ (\cX\times \{-1,1\}) $, we define the error under $ \cD $ as  $ \ls_{\cD}(h)=\p_{(\rx,\ry)\sim \cD}[h(x)\not=y]$ for $ h\in\{-1,1\}^{\cX} $. Furthermore, for a distribution $ \cD $ over $ \cX $  and a target concept $ c \in \cH$, we define the distribution $ \cD_{c} $ over $ (\cX\times \{-1,1\})^{*} $  as having measure $ \p_{\rx\sim \cD}[(\rx,c(\rx))\in A] $ for $ A \subset (\cX\times \{-1,1\})^{*}$.  

For a training sequence $ S\in(\cX\times \{-1,1\})^{*} $ and a hypothesis class $ \cH $, we say that $ S $ is realizable by $ \cH $ if there exists $h\in\cH  $ such that for all examples $(x,y) \in S $,  we have $ h(x)=y $.
For a training sequence $ S\in(\cX\times \{-1,1\})^{*} $ and a target concept $ c\in \cH $, we say that $ S $ is realizable by $ c $ if for all examples $(x,y) \in S $, we have $ c(x)=y $. Furthermore, for a distribution $ \cD $ over $ (\cX\times\{-1,1\})^{*} $ and $ \cH $, we say that $ \cD $ is realizable by $ \cH $ if, for any $ m\in\mathbb{N} $ and any realization $ S $ of $ \rS\sim \cD^{m} $, $ S$ is realizable by $ \cH $.          

We say that a learning algorithm is a $ \erm $ learning algorithm for a hypothesis class $ \cH $  if, given a realizable training sequence $ S \in(\cX\times\{-1,1\})^{*}$ by $ \cH $, it outputs $h=\erm(S) $ such that $ S $ is realizable by $ h $ and $ h\in \cH $.   

We define the VC-dimension of a hypothesis class $ \cH\subset \{ -1,1 \}^{\cX} $ as the largest number $ d $ such that there exists a point set $ x_1,\ldots,x_d \in \cX$ where, for each $ y\in\{ -1,1 \}^{d} $, there exists $ h\in \cH $ such that $ (h(x_1),\ldots,h(x_{d}))=y$. For a hypothesis class $ \cH $ and  $t\in\mathbb{N}$, we write $\dlh=\{f:f=\sum_{i=1}^{t} h_{t}/t, \forall i\in\{1:t\}, h_{i}\in \cH\}$ for the class of linear combination of $ t $ classifiers in $ \cH $.       
We assume that the following operations cost one unit of computation: reading an entry, comparing two numbers, adding, multiplying, calculating $\exp(\cdot)$ and $\ln{\left(\cdot \right)}$, rounding to a natural number

For a $ \erm $ algorithm for a hypothesis class $ \cH $  we define the training cost  $ \text{U}_{\text{Train}}(\cdot):=\Utrain(\cdot):\mathbb{N}\rightarrow\mathbb{N}$ for $ m\in\mathbb{N} $    as the maximal number of operations needed to find the function $ \erm(S) \in \cH$ given any consistent training sequence $S\in (\cX\times\{-1,1\})^{m}$ of size $ m $, i.e.,    $$\Utrain(m):=\sup_{\stackrel{S\in (\cX\times\{-1,1\})^{m}}{S \text{ consistent with } \cH}} \# \{ \text{Operations to}\text{ find } \erm(S)  \}.$$  Further we define the inference cost $\text{U}_{\text{Inference}}:= \Uinf\in \mathbb{N}$ as the maximal number of operations needed to calculate the value of $ h(x) $ for any $ x $ given any $ h=\erm(S)$, where $ S\in (\cX\times \{-1,1\})^{*} $ is consistent with $ \cH $, i.e., 
$$\Uinf=\sup_{\stackrel{h=\erm(S), S\in (\cX\times\cY)^{*}}{S \text{ consistent with } \cH,\text{ }x\in\cX}} \# \{ \text{Operations to calculate } h(x)  \}.$$

For a learning algorithm $ \cA:(\cX\times \{ -1,1 \} )^{*}\rightarrow \{ -1,1 \}^{\cX} $, we define the training complexity of $ \cA $ for an integer $ m $ as the worst case number of operations made by the learning algorithm when given a realizable training sequence by $ \cH $ of length $ m $ , i.e.  $$\sup_{\stackrel{S\in(\cX\times \{-1,1\})^{m}}{ S \text{ realizable by } \cH  }} \# \{\text{Operations to find } \cA(S)\}.$$ The inference complexity of a learning algorithm $ \cA $ for an integer $ m $ we define as the worst case cost of predicting a new point $ x\in \cX $ for the learned mapping $f= \cA(S) $, given a realizable training sequence by $ \cH $ of length $ m $   i.e. $$ \sup_{\stackrel{f=\cA(S),S\in (\cX\times \{-1,1\})^{m}}{ S \text{ is realizable by } \cH, \text{ } x\in \cX}} \#\{\text{Operations to calculate } f(x)\}.$$
 \section{Efficient Optimal PAC Learner}\label{sec:optimalityah} 
To introduce our algorithm, we need the following subsampling algorithm, which is inspired by that of \cite{hannekeoptimal} \cref{alg:Subsamplehanneke}. We make $6$ sub-sequences and $ 5 $  recursive calls whereas \cite{hannekeoptimal} make $4$ sub-sequence and $ 3 $ recursive calls. Furthermore, we here use nonoverlapping subsequence except on the part that is recursed on. The reason for the $6$ sub-sequences, as mentioned earlier, is to ensure that $3/4$ of the majority voters have $3/4$ of their voters correct. The nonoverlapping subsequence, except on the part that is recursed on, was chosen to simplify notation. We will refer to \cref{alg:Subsample} as $\cS$.

In the following, we will for $S\in (\cX\times \cY)^{m}$ and $m=6^{k}$ for $k\in\mathbb{N}$ and $T\in (\cX\times \cY)^{*}$ let $\cS(S,T)$  be the matrix with rows corresponding to the sub training sequences that $\cS(S,T)$ produces. The matrix is given recursively by the following equation:  
\begin{align*}
    \cS(S,T)=
    \begin{bmatrix}
      \cS(S_0,S_{1}\sqcup T)\\
      \cS(S_0,S_{2}\sqcup T)\\
      \cS(S_{0},S_{3}\sqcup T)\\
      \cS(S_{0},S_{4}\sqcup T)\\
      \cS(S_{0},S_{5}\sqcup T)
    \end{bmatrix}.
\end{align*}
We will for short write $\cS(i,T)$ instead of $\cS(S_{0},S_{i}\sqcup T)$. We notice that since each of the recursive calls creates $5$ sub calls, and we assume that $m=6^{k}$, there will be $5^{k}=5^{\log_{6}(m)}=m^{\log_{6}(5)}\approx m^{0.898}$ many sub training sequences/rows in the above matrix. Furthermore, each sub training sequence has size $ m'=\sum_{i=1}^{k} m(1/6)^{i}+1$.

In the following we will use $\cA$ to refer to \cref{alg:AdaBoostSample} $\as$, which takes as input a training sequence $ S\in(\cX\times \cY)^{*}$ and a string $ r\in ([0:1]^{*})^{*}$. We also assume that $ \cA $ has query access to a $ \erm $ algorithm.  
Further for $\cA$ run on the family of sub training sequences $\cS(i,T)$ and string $r\in ([0:1]^{*})^{*}$ we will write  $\cA(i,T,r)$ for the sequence of hypotheses $[\cA(S',r)]_{S'\in \cS(i,T)}$. We also view $\cA(\cS(S,T),r)$ as the following recursively defined matrix:

\begin{align*}
    \cA(\cS(S,T),r)=
    \begin{bmatrix}
      \cA(\cS(S_0,S_{1}\sqcup T),r)\\
      \cA(\cS(S_0,S_{2}\sqcup T),r)\\
      \cA(\cS(S_{0},S_{3}\sqcup T),r)\\
      \cA(\cS(S_{0},S_{4}\sqcup T),r)\\
      \cA(\cS(S_{0},S_{5}\sqcup T),r)
    \end{bmatrix}.
\end{align*}

Our learner's behavior is closely related to the behavior of the hypotheses in the above matrix. Thus, we will now state some lemmas that describe the behavior of the hypotheses in the above matrix. 

The first lemma shows that the hypothesis returned by $\cA$ is always a majority vote, and it achieves a margin of at least $ 3/4$  on all examples in the training sequence $S$ it is trained on. The proof of the lemma can be found in \cref{sec:propAdaBoostSample}.

\begin{restatable}{lemma}{lemmaadaboostsampleeasyproperties}\label{lemma:adaboostsampleeasyproperties}
  For a hypothesis class $\cH$ of VC-dimension $d$, target concept $ c\in \cH $, training sequence size $m\in\mathbb{N}$, training sequence $S\in \left(\cX\times\cY\right)^{m}$ 
  realizable by $c$,  and string $r\in([0:1]^{*})^{*}$ the output $f=\cA(S,r)$ of \cref{alg:AdaBoostSample}, when run on 
  $S$ and $r$ is in $f\in\dlh$ for $t=\left\lceil20^{2}\ln{(m)}/2\right\rceil$ and satisfies: 
  $\sum_{x\in S} \ind\{\sum_{i=1}^{t} h_{f,i}(x)c(x)/t\leq \theta \}/m <1/m$ and $\theta=3/4$.
\end{restatable}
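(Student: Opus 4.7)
The plan is to split on the two possible execution paths of $\cA$. On input $(S, r)$, the algorithm runs a for-loop of length $n = \Theta(\ln(m/\delta))$; in each iteration it draws $550d$ samples from the current AdaBoost reweighting $D_i$ over $S$ (using the next block of entries of $r$), calls $\erm$ to obtain a candidate $h$, and checks empirically whether $\ls_{D_i}(h) \leq 1/2 - 9/20$. Accepted hypotheses drive the AdaBoost update; discarded ones are skipped. The run terminates either (a) successfully, once $t = \lceil 200\ln m \rceil$ hypotheses $h_1,\ldots,h_t$ have been accepted, and returns $f = \sum_{i=1}^t h_i / t$; or (b) via a fallback branch that returns $\erm(S)$ when the for-loop budget is exhausted, which I will view as the average of $t$ identical copies of $\erm(S)$. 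In either case $f \in \dlh$, settling the structural part of the lemma.

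In case (a), I would invoke \cref{AdaBoostSampleMarginLemma3}. The internal acceptance check inside $\cA$ guarantees $\ls_{D_i}(h_i) \leq 1/2 - \gamma$ with $\gamma = 9/20$ for every accepted $h_i$, which is precisely the weak-learning hypothesis required by \cref{AdaBoostSampleMarginLemma3}. Applying it yields $\p_{(x,y)\sim S}\bigl[y\sum_{i=1}^t h_i(x)/t \leq 3/4\bigr] \leq (24/25)^t$. Using $t = \lceil 20^2 \ln(m)/2 \rceil = \lceil 200 \ln m \rceil$ together with $\ln(24/25) \leq -1/25$, a one-line calculation gives $(24/25)^t \leq m^{-8}$. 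Since the left-hand side of the target inequality is a nonnegative integer multiple of $1/m$, an upper bound of $m^{-8}$ forces it to equal $0$ (for $m$ at least a small constant), which is in particular strictly less than $1/m$.

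In case (b), realizability of $S$ by $c \in \cH$ implies that $\erm(S)$ exists and satisfies $\erm(S)(x) = c(x)$ for every $(x, c(x)) \in S$. With $h_{f,i} = \erm(S)$ for each $i$, every in-sample margin equals $\erm(S)(x)\,c(x) = 1 > 3/4$, so the indicator in the sum vanishes on all of $S$ and the claim holds trivially.

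I do not anticipate a real obstacle: the quantitative heart of the lemma is already isolated in \cref{AdaBoostSampleMarginLemma3}, so what remains is only to (i) describe $\cA$'s accept/discard/fallback structure carefully enough that in case (a) the weak-learning hypothesis is met \emph{deterministically} through the explicit empirical check on $D_i$, and (ii) fix the convention in case (b) that identifies the single hypothesis $\erm(S)$ with its $t$-fold average so that $f \in \dlh$ uniformly. If anything is fiddly it is simply book-keeping the indexing of the random string $r$ consumed by the inner sampling step, but this has no bearing on the final margin bound since the conclusion holds for every $r$.
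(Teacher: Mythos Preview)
Your proof is correct, but it takes a more involved route than the paper. The paper's argument is essentially by inspection of the algorithm's control flow: \cref{alg:AdaBoostSample} contains an explicit margin check at Line~\ref{alg:AdaBoostSamplecheckingmargin}, which tests precisely the condition $\sum_{x\in S}\ind\{\sum_{i=1}^{t} h_{f,i}(x)c(x)/t \le \theta\}/m < 1/m$ before returning the boosted $f$ at Line~\ref{alg:AdaBoostSampleoutputgood}. Hence whenever the output is this $f$, the margin property holds \emph{by construction}; and whenever the output is $\erm(S)$ at Line~\ref{alg:AdaBoostsample:output3}, every in-sample margin equals $1$. The lemma is thus a two-line observation about the algorithm rather than a quantitative result.

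You instead split on whether $t$ hypotheses were accepted and, in that case, re-derive the margin bound from scratch via \cref{AdaBoostSampleMarginLemma3} and the calculation $(24/25)^t \le m^{-8}$. This is also correct, and it actually proves something slightly stronger than needed here: it shows the explicit check at Line~\ref{alg:AdaBoostSamplecheckingmargin} is redundant, since it always succeeds once $t$ hypotheses have been accepted. The paper does carry out exactly this computation, but not in the proof of the present lemma---it appears instead in the proof of \cref{adaboostsamplefewroundslemma}, where it is used to show that with high probability the algorithm avoids the $\erm(S)$ fallback. So your approach front-loads an argument that the paper defers, and in doing so overlooks the shortcut offered by the algorithm's built-in check. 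One minor consequence: your case split (``$t$ accepted'' versus ``for-loop exhausted'') does not syntactically match the algorithm's actual branches (Line~\ref{alg:AdaBoostSampleoutputgood} versus Line~\ref{alg:AdaBoostsample:output3}), but since your own computation forces the margin check to pass whenever $t$ hypotheses are accepted, the two splits coincide and no gap results.
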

The next lemma shows that $\cA$, when run on a training sequence $S$ and a random string $\rr$, where $ \rr $  is used to sample from the sequential distributions $ \cA $ makes, with high probability, performs few operations. The proof of the lemma can be found in \cref{sec:propAdaBoostSample}

\begin{restatable}{lemma}{lemmaAdaBoostSampleefficent}\label{lemmaAdaBoostSampleefficent}
  For a hypothesis class $\cH$ of VC-dimension $d$, target concept $c\in \cH  $, training sequence size $ m\in\mathbb{N} $, training sequence   $S\in(\cX\times \cY)^{m}$ realizable by $c$, failure parameter $0<\delta <1$, random string length $n\in\mathbb{N}$, and random string $\rr\sim ([0:1]^{550d})^{n}$ we have for $t=\left\lceil20^{2}\ln{(m)}/2\right\rceil$ as in Line~\ref{alg:AdaBoostSampletset} and $n\geq 6\lceil20^{2}\ln{(8m/\delta)}/2\rceil$ that with probability at least $1-(\delta/(8m))^{20}$ over $ \rr $  \cref{alg:AdaBoostSample} run on $S,\rr$, $\cA(S,\rr)$, uses no more than
$n\cdot (O(m+d\ln{(m )})+\Utrain(550d)+3m\Uinf)$ operations.
\end{restatable}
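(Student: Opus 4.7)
My plan is to decompose the operation count of $\cA(S, \rr)$ into two parts: (i) the per-iteration cost of the length-$n$ main for-loop, which yields the bound in the lemma, and (ii) a rare failure event in which the loop terminates without collecting $t$ valid weak hypotheses, forcing a fallback call to $\erm(S)$ of cost $\Utrain(m)$. So the crux of the proof is to bound the probability of event (ii) by $(\delta/(8m))^{20}$.

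First I would bound the per-iteration cost of the for-loop. In iteration $i$, the algorithm draws $550d$ points from the current distribution $D_i$ over $S$ using the $550d$ uniform variables in $\rr_i$. Each draw is performed by a binary search on the cumulative distribution function of $D_i$, costing $O(\ln m)$ operations, for a total of $O(d\ln m)$ per iteration. The algorithm then calls $\erm$ on the sample at cost $\Utrain(550d)$, evaluates the returned hypothesis on all $m$ examples in $S$ (at cost $m \Uinf$ for inference plus $O(m)$ for comparison and reweighting), checks whether the weighted empirical error under $D_i$ is at most $1/2 - 9/20$, and, if so, updates $D_i$ (another $O(m)$ work). Summing these contributions gives $O(m + d\ln m) + \Utrain(550d) + 3m\Uinf$ operations per iteration, hence $n$ times this bound over the entire for-loop.

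Next I would show the failure event is rare. Let $\rX_i$ be the indicator that the hypothesis returned in iteration $i$ has error at most $1/2 - 9/20$ under $D_i$. Since the $\rr_i$'s are i.i.d.\ and $D_i$ is determined by $\rr_1, \ldots, \rr_{i-1}$, the sample drawn in iteration $i$ is, conditional on the history, a fresh i.i.d.\ draw of size $550d$ from $D_i$. Applying \cref{uniformconvergencemain} with failure probability $2^{-d}$ and sample size $550d$ yields the numerical bound $2(d\log_2(1100e) + d)/(550d) < 1/20$, so $\p[\rX_i = 1 \mid \rr_1, \ldots, \rr_{i-1}] \geq 1 - 2^{-d} \geq 1/2$, and $\sum_{i=1}^n \rX_i$ stochastically dominates a sum of $n$ independent Bernoulli$(1/2)$ variables with mean $\mu \geq n/2$. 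Since $n \geq 6\lceil 200 \ln(8m/\delta)\rceil$ and $t = \lceil 200 \ln m \rceil$, one checks $\mu \geq 3t$, so the multiplicative Chernoff bound gives $\p[\sum_i \rX_i < t] \leq \exp(-2\mu/9) \leq (\delta/(8m))^{20}$. On this high-probability event the for-loop collects $t$ valid hypotheses without ever triggering the expensive $\erm(S)$ fallback, so the total operation count is governed by $n$ times the per-iteration cost, as required.

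The main obstacle I anticipate is the conditional-independence bookkeeping: $D_i$ is determined by the outcomes of previous iterations, yet the freshly-drawn $\rr_i$ makes the $550d$-point sample genuinely i.i.d.\ from $D_i$ given the history, and it is exactly this observation that allows the uniform convergence bound to be applied uniformly across iterations and permits the stochastic-domination step. A secondary but essential check is verifying the numerical inequality that $550d$ samples suffice to deliver a $9/20$-weak learner at failure probability $2^{-d}$, since the dependence of \cref{uniformconvergencemain} on failure probability and VC dimension interact in a mildly delicate way.
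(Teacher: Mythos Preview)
Your approach is essentially the paper's: it proves \cref{lemmaAdaBoostSampleefficent} by combining a per-iteration operation count (the paper's \cref{adaboostsampleruntime}) with a high-probability bound on the number of successful boosting rounds (the paper's \cref{adaboostsamplefewroundslemma}), and both of your sub-arguments---the $550d$-sample uniform-convergence check giving edge $9/20$ at failure $2^{-d}$, and the conditional Chernoff/MGF argument over the adaptively defined $D_i$---match the paper's.

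There is one genuine omission. You argue that with high probability the for-loop collects $t$ valid hypotheses, and conclude this avoids the $\erm(S)$ fallback. But looking at \cref{alg:AdaBoostSample}, the fallback at Line~\ref{alg:AdaBoostsample:output2} is triggered not only when the counter in Line~\ref{alg:AdaBoostSamplecheckt} is below $t$, but also when the $3/4$-margin check in Line~\ref{alg:AdaBoostSamplecheckingmargin} fails. So you must additionally verify that once $t$ successful rounds have occurred, the resulting majority vote $f=\sum_{j=1}^t h_{f,j}/t$ satisfies $\sum_{x\in S}\ind\{c(x)f(x)\le 3/4\}<1$. The paper handles this via \cref{AdaBoostSampleMarginLemma3}, which is the standard AdaBoost margin analysis specialized to the fixed learning rate and the parameters $\gamma=9/20,\ \theta=3/4$: it gives in-sample $3/4$-margin loss at most $(24/25)^t$, and with $t=\lceil 200\ln m\rceil$ this is strictly below $1/m$. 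Without this step your argument does not rule out the expensive branch, so you should insert it explicitly.
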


Thus, from the above lemma, it follows that we can apply a union bound over all the hypotheses/rows in $\cA(\cS(S,\emptyset),\rr)$, which are at most $\leq m^{0.9}$, and conclude that, with probability at most $1-(\delta/(8m))^{19}$, any row of $\cA(\cS(S,\emptyset),\rr)$ uses few operations. 

The next lemma is inspired by the main theorem in \cite{hannekeoptimal} which proof is sketched in \cref{sec:proofoverview}. Specifically, it says that the majority vote of the hypotheses in $\cA(\cS(\rS,\emptyset),r)$ for a random sample $\rS$ and any string $r$, with probability $1-\delta$ over $\rS$, is such that fewer than $3/4$ of the majority voters in the majority vote of majorities has $3/4$ of its voters being correct on a new example $ (\rx,c(\rx)) $ has probability $O(d+\ln{\left(1/\delta \right)}  )/m$ over $ (\rx,c(\rx)).$

\begin{lemma}[\cref{induktionsteplemma} with $T=\emptyset$]\label{induktionsteplemmaeasy}
  There exists a universal constant $\cs$ such that for: Distribution $ \cD $ over $ \cX $, hypothesis class $ \cH $ of VC-dimension $ d $, target concept $ c\in \cH $,  failure probability $ 0<\delta<1 $, training sequence size $m=6^{k}$ for some $k\geq 1$, and string $r\in ([0,1]^{*})^{*}$, it holds with probability at least $1-\delta$ over $\rS\sim \cD_{c}^{m}$ that
     \begin{align*}
       \p_{\rx\sim \cD} \left[  
         \sum_{\rf\in \cA(\cS(\rS,\emptyset),r)} \frac{\ind\{\sum_{i=1}^{t} \ind\{h_{\rf,i}(\rx)=c(\rx)\}/t\geq 3/4\}}{|\cS(\rS,\emptyset)|}
          <3/4\right] \leq \cs \frac{d+\ln{\left(1/\delta \right)}}{m}.
     \end{align*}
\end{lemma}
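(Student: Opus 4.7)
The plan is to prove the stronger statement where the auxiliary training sequence $T$ is allowed to be arbitrary (not just $\emptyset$), and to proceed by induction on $k$ with $m=6^{k}$. The induction base $k=1$ is immediate because for $m=6$ the right-hand side $c_{s}(d+\ln(1/\delta))/m$ exceeds $1$ once $c_{s}$ is chosen large enough. For the inductive step, I would split $\rS$ into six blocks $\rS_{0},\ldots,\rS_{5}$ of equal size $m/6=6^{k-1}$ and exploit the recursive structure $\cA(\cS(\rS,T),r)$ is stacked from the five sub-matrices $\cA(\cS(i,T),r)$, each of which occupies exactly $1/5$ of the rows of $\cA(\cS(\rS,T),r)$.

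The first step is a pigeonhole reduction. For a point $x$ at which fewer than $3/4$ of the row-majorities of $\cA(\cS(\rS,T),r)$ are $3/4$-correct, there must exist an index $i\in\{1,\ldots,5\}$ for which the sub-block $\cA(\cS(i,T),r)$ itself has its fraction of $3/4$-correct rows below $3/4$, and among the remaining four sub-blocks $\cA(\cS(\rS,T),r)\setminus\cA(\cS(i,T),r)$ there is still a $(1/4-1/5)\cdot(5/4)=1/16$ fraction of rows that are not $3/4$-correct on $x$. Drawing $\ri$ uniformly from $\{1,\ldots,5\}$ and $\rf$ uniformly from $\cA(\cS(\rS,T),r)\setminus\cA(\cS(\ri,T),r)$, this gives the joint event the probability $\ge 1/80$. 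Applying Markov's inequality over this choice, it suffices to show that for each ordered pair $i\neq j$, with probability at least $1-\delta/20$ over $\rS$, the quantity displayed in \cref{eq:ourrecursion} is bounded by $c_{s}(d+\ln(1/\delta))/m$; a union bound over the at most $20$ such pairs then closes the induction.

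Fix now such a pair $(i,j)$. Applying the induction hypothesis at size $m/6$ with failure parameter $\delta/60$ to the pair $(\rS_{0},\rS_{i}\sqcup T)$, we obtain, with probability at least $1-\delta/60$ over $\rS_{0},\rS_{i}$, that the $\cD$-mass of the event
\[
E'=\Bigl\{x:\ \sum_{f\in\cA(\cS(i,T),r)}\ind\bigl\{\tfrac{1}{t}\sum_{s}\ind\{h_{f,s}(x)=c(x)\}\ge 3/4\bigr\}/|\cS(i,T)|<3/4\Bigr\}
\]
is at most $6c_{s}(d+\ln(60/\delta))/m$. I may restrict to the case $\p[E']\ge c_{s}(d+\ln(1/\delta))/(80m)$ since otherwise monotonicity of measure already gives \cref{eq:ourrecursion}. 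Under this lower bound a multiplicative Chernoff bound on $|\rS_{j}\sqcap E'|$, which is i.i.d.\ from $\rS_{0},\rS_{i}$, yields $|\rS_{j}\sqcap E'|\ge c_{s}(d+\ln(1/\delta))/960$ with probability at least $1-\delta/60$, once $c_{s}$ is large. Conditioning on this event, the total probability above is split as the product of $\p[E']$ and $\p_{\rx\sim\cD(\cdot\mid E')}[\rf(\rx)c(\rx)<1/2]$, so I need a uniform $1/2$-margin bound over $\rf\in\cA(\cS(\rS_{j},T),r)$ on the conditional distribution.

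To obtain that uniform bound I apply \cref{lem1} with $\gamma=1/2$ and $\xi=3/2$ to the empirical sample $\rS_{j}\sqcap E'$ (drawn i.i.d.\ from $\cD_{c}(\cdot\mid E')$), yielding with probability at least $1-\delta/60$ that the $1/2$-margin loss of every $\rf\in\Delta^{t}(\cH)$ under $\cD(\cdot\mid E')$ is at most the $3/4$-margin empirical loss on $\rS_{j}\sqcap E'$ plus $O(\sqrt{d/|\rS_{j}\sqcap E'|})+O(\sqrt{\ln(1/\delta)/|\rS_{j}\sqcap E'|})$. Crucially, every $\rf\in\cA(\cS(\rS_{j},T),r)$ is of the form $\cA(S',r)$ for some $S'\in\cS(\rS_{j},T)$ with $\rS_{j}\sqcap E'\sqsubseteq\rS_{j}\sqsubseteq S'$, so \cref{lemma:adaboostsampleeasyproperties} guarantees zero $3/4$-margin empirical loss on $\rS_{j}\sqcap E'$. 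Using the Chernoff lower bound $|\rS_{j}\sqcap E'|\ge c_{s}(d+\ln(1/\delta))/960$, the residual term is $\le 1/(80\cdot 6\cdot\ln(60e))$ for $c_{s}$ large enough; multiplying by the $6c_{s}(d+\ln(60/\delta))/m$ bound on $\p[E']$ and absorbing the $\ln(60e)$ constant then gives \cref{eq:ourrecursion}. A final union bound over the three failure events of probability $\delta/60$ each completes the induction step.

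The main obstacle is neither the pigeonhole nor the Chernoff steps but rather the interaction between the conditional distribution argument and the uniform margin bound: the uniform bound must be stated on a sample drawn from the conditional distribution $\cD(\cdot\mid E')$, while the algorithm's guarantee of zero $3/4$-margin empirical loss is on the unconditional sample $S'$. The key is that $\rS_{j}\sqcap E'$ lies inside $S'$, so consistency on $S'$ passes to consistency on this subset; and since $\rS_{j}\sqcap E'$ is a conditional i.i.d.\ sample from $\cD(\cdot\mid E')$, the margin generalization bound of \cref{lem1} can be invoked with the right sample size. Keeping track of the three simultaneous failure events (induction hypothesis for the $\rS_{0},\rS_{i}$-level call, Chernoff for $|\rS_{j}\sqcap E'|$, and uniform convergence on $\rS_{j}\sqcap E'$), and choosing $c_{s}$ large enough so all absolute constants collapse, is the bookkeeping that makes the argument work.
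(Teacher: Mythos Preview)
Your proposal is correct and follows essentially the same route as the paper's proof of \cref{induktionsteplemma}: induction on $k$, the $1/80$ pigeonhole reduction (the paper packages this as \cref{induktionlemma}), a union bound over the $20$ ordered pairs, the three-way case split on $\p[E']$, Chernoff on $|\rS_{j}\sqcap E'|$, and finally \cref{lem1} with $\gamma=1/2,\ \xi=3/2$ together with the zero $3/4$-margin guarantee of \cref{lemma:adaboostsampleeasyproperties}. The only differences are cosmetic (your $\delta/60$ versus the paper's $\delta/40$, and your writing $\cA(\cS(\rS_{j},T),r)$ where the paper would write $\cA(j,T,r)=\cA(\cS(\rS_{0},\rS_{j}\sqcup T),r)$), so your outline matches the paper's argument.
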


The last lemma we introduce, states that we can efficiently sample with replacement from the rows of $\cA(\cS(S,\emptyset),r)$. The proof of the lemma can be found in \cref{appendixefficentoptimalpaclearner}
\begin{restatable}{lemma}{samplingfromrows}\label{samplingfromrows}
  Let $S\in (\cX\times \cY)^{*}$, with $|S|=m=6^{k}$ for $k\geq 1$.  Let $g'$ be the function from  $\{1:5\}^{k}$ into $\left([1:m]^{2}\right)^{k}$ defined by $g'(w)_{j}=[ 6^{k}w_j/6^{j}+1,6^{k}(w_j+1)/6^{j}]$ for $j\in \{1:k\}$. For $w\in\left\{ 1:5 \right\}^{k}$, we denote $ S[g'(w)] $ as a sub training sequence of $ S $, which can be found using $S$, $ w $, and $g'$ in $O(m)$ operations, and when viewed as a function of $ w\in\{ 1:5 \}^{k} $, $ S[g'(w)] $ is a bijection into $ \cS(\cS,\emptyset).$ \footnote{For readability, we here write it as we find the training sequence $ S[g'(w)],$ which would might imply reading the whole training sequence $ S,$ but what we actually find is the indexes of $ S[g'(w)],$ which only requires looking at the numbers $ [m].$ We will only read the training examples when training an $ \erm $ captured in $ \Utrain $ or doing inference captured in $ \Uinf. $     }
\end{restatable}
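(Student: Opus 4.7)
The plan is to verify both claims directly by unrolling the recursion of Algorithm $\cS$ on input $\cS(S,\emptyset)$. First I would identify exactly which indices of $S$ appear in each leaf, then count leaves and compare with $|\{1:5\}^k|$ to establish the bijection, and finally read off the complexity from the index bookkeeping.

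For the index identification: since every recursive call descends into the $0$-th piece $S_0$ of its current sub-sequence, a straightforward induction on the recursion depth $j$ (with $j=1$ at the top) shows that at depth $j$ the current argument of $\cS$ is the block of the first $6^{k-j+1}$ examples of the original $S$. This block is split into six pieces of size $6^{k-j}$, and taking the $w_j$-th branch ($w_j\in\{1:5\}$) prepends the piece $S_{w_j}$ to the accumulator $T$; its indices in the original $S$ are $[6^{k-j}w_j+1,\,6^{k-j}(w_j+1)]$, which is exactly $g'(w)_j$. At depth $k+1$ the current sub-sequence has size $1$, the base case fires, and the element at index $1$ is prepended to $T$. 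Unrolling, the leaf indexed by $w\in\{1:5\}^k$ is the concatenation of $S$ restricted to index $1$ followed by $S$ restricted to $g'(w)_k,\ldots,g'(w)_1$; I take this concatenation as the definition of $S[g'(w)]$.

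For the bijection: the recursion tree has branching factor $5$ at each of its $k$ levels, so the number of leaves/rows of $\cS(S,\emptyset)$ is $5^k=|\{1:5\}^k|$. The map $w\mapsto S[g'(w)]$ is injective because if $w\neq w'$ and $j$ is the smallest index where they disagree, then $g'(w)_j$ and $g'(w')_j$ are disjoint size-$6^{k-j}$ intervals inside the same block $[1:6^{k-j+1}]$, so the two leaves differ already as multi-sets of original indices. Combined with matching cardinalities this gives a bijection onto the rows of $\cS(S,\emptyset)$.

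For the complexity: the powers $6^{k-j}$ for $j=1,\ldots,k$ are computed once in $O(k)=O(\ln m)$ operations by repeated division of $6^k=m$ by $6$, after which each interval $g'(w)_j$ is produced in $O(1)$ arithmetic operations on $w_j$ and $6^{k-j}$. Listing the indices of $S[g'(w)]$ then requires $1+\sum_{j=1}^{k}6^{k-j}=1+(m-1)/5=O(m)$ writes, which dominates the total at $O(m)$ operations. As the footnote to the lemma emphasizes, this step produces only indices; the actual training-example reads and $\erm$ queries are accounted for separately through $\Utrain$ and $\Uinf$. The only real subtlety is the extra index $1$ contributed by the base case, which is not captured by $g'$ itself but is folded into the definition of $S[g'(w)]$; this is careful bookkeeping rather than a genuine mathematical obstacle, and is the main thing to be vigilant about in the write-up.
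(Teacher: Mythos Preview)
Your proposal is correct and follows essentially the same approach as the paper: unroll the recursion of $\cS$, identify the index intervals $[6^{k-j}w_j+1,\,6^{k-j}(w_j+1)]$ contributed at each depth $j$, count the $5^k$ leaves to match $|\{1:5\}^k|$, and read off the $O(m)$ cost from the index arithmetic. The only cosmetic differences are that the paper establishes the bijection directly from the path--leaf correspondence in the recursion tree (whereas you argue injectivity via disjoint index intervals and then invoke equal cardinalities), and the paper attributes the $O(m)$ term to reading $|S|$ rather than to listing the indices; your extra care with the base-case index $1$ is a good catch that the paper handles more implicitly through its $S_0^{k+1}$ notation.
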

With the above lemmas in place, we now present the idea of our efficient PAC learner which is simply sampling majority voters/rows with replacement of the matrix $\cA(\cS(\rS,\rr))$ and for each sampled majority voter, sample a voter from the majority voter. By \cref{induktionsteplemmaeasy} one can deduce that one such sampled voter, with probability $(3/4)^{2}=9/16=1/2+1/16$, will correctly classify a new example. Since this probability is strictly greater than $1/2$, we can by repeating the sampling procedure $\Theta(\ln{\left(m/(\delta(d+\ln{\left(1/\delta \right)}) )\right)})$ times, ensure that the majority of the sampled hypotheses, with probability at least $1-\Theta(\delta)$ over the sampled hypotheses, will have error less than $O(m/\left(d+\ln{\left(1/\delta \right)}\right))$ (using Chernoff and Markovs inequality). In the above sampling a majority voter/row from $\cA(\cS(\rS,\rr))$ refers to sampling a sub training sequence/row from $\cS(\rS,\emptyset)$, which can be done efficiently by \cref{samplingfromrows}, and then run $\cA(\cdot,\rr)$ on the sampled sub training sequence, followed by sampling a random voter from the output of $ \cA $. That $ \cA $ can be run efficiently on the sampled sub training sequences from $ \cS(\rS,\emptyset) $ follows by the union bound argument over runs $ \cA(\cS(\rS,\emptyset)) $ described under \cref{lemmaAdaBoostSampleefficent}. We will make the above argument formal soon. With the intuitive explanation of our learner presented, we now state the algorithm explicitly. We will use $ \ah $ to denote the learner.
\begin{algorithm}[H]
    \caption{Random majority voter $\ah$}\label{alg:Randommajority}
    \begin{algorithmic}[1]
        \State \textbf{Input:} Training sequence $S \in (\cX \times \cY)^{*}$ such that $|S|=6^{k}$ for $k \in \mathbb{N}$, string $r \in ([0:1]^{*})^{*}$, failure parameter $0<\delta<1$, VC-dimension $d.$
        \State \textbf{Output:} $ \sign(\cdot) $ of a majority vote.
        
        \State $m \gets |S|$ \label{alg:Randommajoritysetm}
        \State $k \gets \log_{6}(m)$ \label{alg:Randommajoritysetk}
        \State $l \gets \lceil 16 \cdot 200 \ln{(m/(\delta(d + \ln{\left(1/\delta \right)})))}/9\rceil$ \label{alg:Randommajoritysetl}
        \State $f \gets [0]^{l}$ \label{alg:Randommajoritysetf}
        \State $m'=\sum_{i=1}^{k} m(1/6)^{i}+1$\label{alg:Randommajoritysetsubtraingsequencesamplesize}
        \State $t \gets \lceil 20^{2} \ln{(m')}/2\rceil$ \label{alg:Randommajoritysett}

        \State \textbf{for $i = 1$ \textbf{to} $l$} \label{alg:Randommajorityforloopstart} 
        \State\hspace{0.5cm}\vline\hspace{0.1cm}$\rw_{i} \sim \{1:5\}^{k}$ \label{alg:RandommajoritysamplingS}
        \State\hspace{0.5cm}\vline\hspace{0.1cm}$S_{i} \gets S[g'(\rw_{i})]$ \label{alg:Randommajoritysamplingefficient}
        \State\hspace{0.5cm}\vline\hspace{0.1cm}$f_{\rw_{i}}=\sum_{l=1}^{t}h_{f_{\rw_{i}},l}/t \gets \cA(S_{i}, r)$ \label{alg:Randommajorityrunsadaboostsample}
        \State\hspace{0.5cm}\vline\hspace{0.1cm}$\rz_{i} \sim \{1:t\}$ \label{alg:Randommajoritysamplingindex1}
        \State\hspace{0.5cm}\vline\hspace{0.1cm}$f \gets f + h_{f_{\rw_{i}}, \rz_{i}}$ \label{alg:Randommajorityaddshypothesis}
        \State \textbf{return }{$\sign(f)$.} \label{alg:Randommajorityoutputofhat}
    \end{algorithmic}
\end{algorithm}
Line~\ref{alg:Randommajoritysetm} reads the number of training examples $(x_i,y_i)\in\cX\times  \cY$ in $S.$
Line~\ref{alg:Randommajoritysetl} sets the number of hypotheses in the final majority vote.
Line~\ref{alg:Randommajoritysetf} initializes the majority voter as the constant $0$ function/array of size $ l.$
Line~\ref{alg:Randommajoritysetsubtraingsequencesamplesize} calculates the size of the sub training sequences in $ \cS(S,\emptyset) $, which will equal the training sequence size that $ \cA $ uses in its stopping criteria/number of hypotheses to include in its majority vote.
Line~\ref{alg:Randommajoritysett} equals the number of hypotheses in the output of $\cA$ for each call.
Line~\ref{alg:RandommajoritysamplingS} samples a random vector $\rw_{i}$ of length $k$ with i.i.d. entries uniformly chosen from $\{1:5\}$.
Line~\ref{alg:Randommajoritysamplingefficient} corresponds to sampling the row $S[g'(\rw_{i})]$ from $\cS(S,\emptyset)$, using the method in \cref{samplingfromrows}.
Line~\ref{alg:Randommajorityrunsadaboostsample} runs $\cA$ on the training sequence $S_{i}$ and the string $r$ to get a majority vote, using \cref{lemma:adaboostsampleeasyproperties} to express it as a normalized sum. 
Line~\ref{alg:Randommajoritysamplingindex1} samples a random index $\rz_{i}$ uniformly from $\{1:t\}$, which, combined with Line~\ref{alg:Randommajorityaddshypothesis}, corresponds to uniformly sampling a hypothesis $h_{f_{\rw_{i}},\rz_{i}}$ from the majority vote $f_{\rw_{i}}$ and adding it to $ f $.

For the analysis of \cref{alg:Randommajority}, we use $\rw=(rw_{1},\ldots,rw_{l})$ and $\rz=(\rz_{1},\ldots,\rz_{æ})$ to denote the collections of independent random variables $\rw_i$ and $\rz_i$ used during the $l$ rounds. Furthermore, we will use $\ah_{\delta}(S,r,\rw,\rz)$ to denote the output of \cref{alg:Randommajority}, suppressing the parameter $ d $ in the notation. With our algorithm introduced, we now give the formal statement of our main theorem and the proof of it. 
\begin{restatable}{theorem}{maintheorem}\label{maintheorem}
  There exists a universal constant $\cs$ such that: for failure parameter $0<\delta<1$, hypothesis class $\cH$ of VC-dimension $d$, distribution $\cD$,  target concept $c\in \cH$, training sequence size $ m=6^{k}$ for $k\in\mathbb{N},$ random string size $n=6\lceil20^{2}\ln{(8m/\delta)} /2\rceil$, random string $\rr\sim ([0:1]^{550d})^{n}$, and training sequence $\rS\sim\cD_{c}^{m}$, the learner $ \ah_{\delta}(\rS,\rr,\rw,\rz) $ will, with probability at least $ 1-\delta $ over $ \rS,\rr,\rw $, and $ \rz $, output $ \sign(\cdot) $ of a majority vote  consisting of $l=\lceil 16\cdot 200 \ln{(m/(\delta(d+\ln{\left(1/\delta \right)})) )}/9 \rceil$ hypotheses from $\cH$, with error $  \ls_{\cD_{c}}(\ah_{\delta}(\rS,\rr,\rw,\rz))\leq (4+\cs) (d+\ln{\left(4/\delta \right)})/m $, inference complexity $ O(\ln{[m /\delta(d+\ln{\left(1/\delta \right)}) ]}) \Uinf $ and computational complexity\footnote{The proof actually shows that with probability $ 1-\delta/2 $ the error is as stated and with probability at least $ 1-(6\delta/(8m))^{19} $ the training and inference complexity is as stated.  }
  \begin{align*}
    O\bigg(\negmedspace\ln{\bigg(\frac{m}{\delta(d+\ln{\left(1/\delta \right)})} \bigg)}\negmedspace\cdot\negmedspace\ln{\bigg(\frac{m}{\delta} \bigg)}\negmedspace\bigg)\negmedspace\cdot \negmedspace\Big(O\Big(m+d\ln{\left(m \right)}\Big)\negmedspace+\negmedspace\Utrain(550d)\negmedspace+\negmedspace 3m\Uinf\negmedspace\Big).
    \end{align*} 
\end{restatable}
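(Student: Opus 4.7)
The plan is to execute the ``derandomization'' strategy sketched in \cref{sec:proofoverview}: I will view $\ah_{\delta}$ as random sampling from the deterministic structure $\cA(\cS(\rS,\emptyset),\rr)$ governed by \cref{induktionsteplemmaeasy}. Define the good event
\begin{align*}
E = \bigg\{x \in \cX \,\bigg|\, \sum_{\rf \in \cA(\cS(\rS,\emptyset),\rr)} \frac{\ind\{\sum_{s=1}^{t}\ind\{h_{\rf,s}(x)=c(x)\}/t \geq 3/4\}}{|\cS(\rS,\emptyset)|} \geq 3/4\bigg\},
\end{align*}
and split the error by the law of total probability as $\ls_{\cD_c}(\ah_{\delta}(\rS,\rr,\rw,\rz)) \leq \p_{\rx\sim\cD}[\rx \notin E] + \ls_{\cD_c(\cdot \mid E)}(\ah_{\delta}(\rS,\rr,\rw,\rz))$. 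Applying \cref{induktionsteplemmaeasy} with failure parameter $\delta/4$ in place of $\delta$ directly bounds the first term by $\cs(d+\ln(4/\delta))/m$ with probability at least $1-\delta/4$ over $\rS$.

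For the second term, I would fix any $x \in E$ and observe that Lines \ref{alg:RandommajoritysamplingS}--\ref{alg:Randommajoritysamplingindex1} of \cref{alg:Randommajority} select a uniformly random row of $\cA(\cS(\rS,\emptyset),\rr)$ and then a uniformly random voter from that row; by definition of $E$ and by \cref{lemma:adaboostsampleeasyproperties} these two choices make the selected voter correct on $x$ with probability at least $(3/4)^2 = 9/16$. With $\rX_x = \sum_{i=1}^l \ind\{h_{f_{\rw_i},\rz_i}(x) = c(x)\}$ and $l$ as set in Line~\ref{alg:Randommajoritysetl}, the multiplicative Chernoff bound gives
\begin{align*}
\p_{\rw,\rz}[\rX_x \leq l/2] \leq \p_{\rw,\rz}\big[\rX_x \leq (1-\tfrac{1}{9})(9l/16)\big] \leq \exp\!\big(-(1/81)(9l/16)/2\big) \leq \tfrac{\delta(d+\ln(4/\delta))}{m},
\end{align*}
where the last inequality follows from the choice $l = \lceil 16\cdot 200\ln(m/[\delta(d+\ln(1/\delta))])/9\rceil$. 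Taking expectation over $\rx\sim\cD_c(\cdot\mid E)$ shows $\E_{\rw,\rz}[\ls_{\cD_c(\cdot\mid E)}(\ah_{\delta})] \leq \delta(d+\ln(4/\delta))/m$, so Markov's inequality yields $\ls_{\cD_c(\cdot\mid E)}(\ah_{\delta}) \leq 4(d+\ln(4/\delta))/m$ with probability at least $1-\delta/4$ over $\rw,\rz$. Summing the two contributions and union-bounding gives the claimed error bound with total failure probability $\delta/2$.

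For complexity, the inference bound $O(l)\Uinf = O(\ln[m/(\delta(d+\ln(1/\delta)))])\Uinf$ is immediate from Line~\ref{alg:Randommajorityoutputofhat}. For training, \cref{samplingfromrows} gives each $S[g'(\rw_i)]$ in $O(m)$ operations, sampling $\rz_i$ costs $O(t) = O(\ln m)$, and \cref{lemmaAdaBoostSampleefficent} bounds a single call $\cA(\cdot,\rr)$ by $n\cdot(O(m+d\ln m)+\Utrain(550d)+3m\Uinf)$ except with probability $(\delta/(8m))^{20}$. The main subtlety to handle is that the same $\rr$ is reused across calls, so I will union-bound the cost guarantee over all $\leq m^{\log_6 5} \leq m^{0.9}$ rows of $\cS(\rS,\emptyset)$ rather than only the $l$ actually queried ones; this inflates the failure probability to at most $(6\delta/(8m))^{19}$ while leaving the per-call bound intact. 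Multiplying by $l$ and folding in the $O(m)$ sampling term yields the claimed training complexity, and a final union bound combines the $\delta/2$ failure from the error analysis with the $(6\delta/(8m))^{19}$ failure from the complexity analysis to at most $\delta$ in the regime of interest.

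The step I expect to require the most care is the Chernoff/Markov passage for the conditional error: it relies crucially on the constants in $l$ being tuned so that the single-point failure probability is driven below $\delta(d+\ln(1/\delta))/m$, which is exactly what lets Markov convert a pointwise bound into the desired $O((d+\ln(1/\delta))/m)$ uniform bound without incurring an extra $\ln m$ factor. Managing the exact constants in \cref{induktionsteplemmaeasy}, the Chernoff exponent, and the Markov step so that everything packages into $(4+\cs)(d+\ln(4/\delta))/m$ is the only place where the bookkeeping is non-trivial; the rest of the argument is a fairly direct assembly of the four lemmas already available.
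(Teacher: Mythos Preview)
Your proposal is correct and follows essentially the same route as the paper's proof: the same good-event split, the same invocation of \cref{induktionsteplemmaeasy} with failure $\delta/4$, the same Chernoff-then-Markov passage on the conditional term, and the same union bound over all $m^{\log_6 5}$ rows for the complexity claim. The only point you gloss over that the paper handles explicitly is the degenerate case where $\p_{\rx\sim\cD}[E]$ could be zero, so that the conditional distribution $\cD_c(\cdot\mid E)$ is undefined; the paper disposes of this by observing that when $\p[\overline{E}]\leq (d+\ln(1/\delta))/m$ one can bound $\ls_{\cD_c}(\ah)\leq \p[\bar E]+\p[\overline{E}]$ directly without ever conditioning, and otherwise $\p[E]>0$ so the split is valid.
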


\begin{proof}[Proof of \cref{maintheorem}.]
Let $\cs$ be the universal constant from \cref{induktionsteplemmaeasy}. We prove the above by showing that respectively, with probability at least $1-\delta/2$ over $\rS,\rr,\rw$ and $\rz$, the error is upper bound as stated above, and the inference and training complexity is upper bounds as stated above,  whereby a union bound completes the argument. We start with the error bound.

To this end consider any realization $r$ of $\rr$ and $S$ of $\rS\sim \cD_{c}^{m}$. We now define the event  $E_{S,r}=\{x:     
  \sum_{\tilde{f}\in \cA(\cS(S,\emptyset),r)} \ind\{\sum_{i=1}^{t} \ind\{h_{\tilde{f},i}(x)=c(x)\}/t\geq 3/4\}/|\cS(S,\emptyset)|
   <3/4 \}$  over $\cX$, i.e., the event where fewer than $3/4$ of the majority votes in $\cA(\cS(S,\emptyset),r)$ are correct on at least  $3/4$ of the votes. If $\p_{\rx\sim \cD}[\overline{E_{S,r}}] \leq (d+\ln{\left(1/\delta \right)})/m$, we get by using $\p(A)=\p(A\cap B)+\p(A\cap \bar{B})\leq\p(B)+\p(\bar{B})$ that
  \begin{align}\label{eq:maincorrectness1}
      \ls_{\cD_{c}}(\ah_{\delta}(S,r,\rw,\rz)) =\p_{\rx\sim \cD}\left[\ah_{\delta}(S,r,\rw,\rz)(\rx)\not=c(\rx)\right]
      \leq \p\left[E_{S,r}\right]+(d+\ln{(1/\delta)})/m
  \end{align}
Assume now that $ \p_{\rx\sim \cD}\left[\overline{E_{S,r}}\right] \geq (d+\ln{\left(1/\delta \right)})/m >0 $. By the law of total probability, we split the loss of $\ah$ into two parts:
   \begin{align}\label{eq:maincorrectness2}
    \ls_{\cD_{c}}(\ah_{\delta}(S,r,\rw,\rz)) 
    \leq \p_{\rx\sim \cD}\left[E_{S,r}\right]+\p_{\rx\sim \cD}\left[\ah_{\delta}(S,r,\rw,\rz)(\rx)\not=c(\rx)|\overline{E_{S,r}}\right]
   \end{align}
where $ \overline{E_{S,r}} $ denotes the complement of $E_{S,r}$, i.e., $\overline{E_{S,r}}=\{x:     
  \sum_{\tilde{f}\in \cA(\cS(S,\emptyset),r)} \ind\{\sum_{i=1}^{t} \ind\{h_{\tilde{f},i}(x)=c(x)\}/t\geq 3/4\}/|\cS(S,\emptyset)|
   \geq3/4 \}$ - the event where $3/4$ of the majority voters in $\cA(\cS(S,\emptyset))$ have $3/4$ of their votes correct. 

Now consider any $x\in \overline{E_{S,r}}$. We now notice that by Line~\ref{alg:RandommajoritysamplingS}, Line~\ref{alg:Randommajoritysamplingefficient} and \cref{samplingfromrows}, we have $S_i=S[g'(\rw_{i})]\in\cS(S,\emptyset)$. Furthermore, since $g'(\rw_{i})$ is a bijection into the training sequences/rows of $\cS(S,\emptyset)$ by \cref{samplingfromrows} and $ \rw_{i} \sim \{ 1:5 \}^{k}$, $ S_{i} $ can be seen as sampled uniformly at random from the training sequences of $ \cS(S,\emptyset) $. Hence, Line~\ref{alg:Randommajorityrunsadaboostsample} can be viewed as uniformly drawing a majority vote/row from $\cA(\cS(S,\emptyset),r)$. Furthermore since any $ S'\in \cS(S,\emptyset,r) $ has size $ |S'|=\sum_{i=1}^{k}m(1/6)^{i}+1$, which is $ m' $ in Line~\ref{alg:Randommajoritysetsubtraingsequencesamplesize}, and by \cref{lemma:adaboostsampleeasyproperties}, we have that  $ \cA(S_{i},r) $ is a majority voter over $ \lceil20^{2}\ln{(|S_{i}|)}/2 \rceil=\lceil20^{2}\ln{(m')}/2 \rceil $ voters, which is $ t $ in Line~\ref{alg:Randommajoritysett}.  
Thus, since we have by Line~\ref{alg:Randommajoritysamplingindex1} that $\rz_{i}$ is uniformly chosen between $\{ 1:t \}$, we conclude that $h_{f_{\rw_{i}},\rz_{i}}$ in Line~\ref{alg:Randommajorityaddshypothesis} can be seen as uniformly chosen from the voters in $ f_{\rw_{i}} =\sum_{l=1}^{t}h_{f_{\rw_{i}},l}/t$. Therefore, since $ f_{\rw_{i}} $ was uniformly chosen among all $ \cA(\cS(S,\emptyset),r) $ and $ h_{f_{\rw_{i}},\rz_{i}} $ was uniformly chosen among the voters of $ f_{\rw_{i}} $, we conclude that for $x\in \overline{E_{S,r}}$, i.e., such that $\sum_{\tilde{f}\in \cA(\cS(S,\emptyset),r)} \ind\{\sum_{i=1}^{t} \ind\{h_{\tilde{f},i}(x)=c(x)\}/t\geq 3/4\}/|\cS(S,\emptyset)|\geq3/4 $, we have:
   \begin{align*}
    &\p_{\rw_{i},\rz_{i}}\left[h_{f_{\rw_{i}},\rz_{i}}(x)=c(x)\right]\\
    \geq
    &\p_{\rw_{i},\rz_{i}}\Big[h_{f_{\rw_{i}},\rz_{i}}(x)=c(x)\Big| \sum_{j}^{t} \frac{\ind\{h_{f_{\rw_{i}},j}(x)=c(x)\}}{t}\geq \frac{3}{4}\Big] \p_{\rw_{i}}\Big[\frac{\ind\{h_{f_{\rw_{i}},j}(x)=c(x)\}}{t}\geq \frac{3}{4}\Big]\\
    \geq& 
    (3/4)^{2}=9/16>1/2.
   \end{align*}
Since $\ah_{\delta}(S,r,\rw,\rz)=\sign(\sum_{i=1}^{l}  h_{f_{\rw_{i}},\rz_{i}})$, we have that $\ah_{\delta}(S,r,\rw,\rz)(x)=c(x)$ if the number of $i$'s such that $h_{f_{\rw_{i}},\rz_{i}}(x)$ is equal to $c(x)$ is strictly greater than $l/2$. If we let $\rX= \sum_{i=1}^{l}\ind\{h_{f_{\rw_{i}},\rz_{i}}(x)=c(x)\}$, i.e., $\rX$ is the number of $ i $'s such that $h_{f_{\rw_{i}},\rz_{i}}(x)=c(x)$, we have that $\rX$ has an expectation of at least  $\e\left[\rX\right]\geq 9l/16$. Since $ \rX $ is a sum of i.i.d. $\left\{ 0,1 \right\}$-random variables, by Chernoff and $l=\lceil 16\cdot200 \ln{(m/(\delta(d+\ln{(1/\delta )})) )}/9 \rceil$ we have:
\begin{align*}
 \p\left[\rX\leq (1-1/10)9l/16\right]
\leq  \exp{\left(-9l/(16\cdot 200) \right)} \leq \delta (d+\ln{\left(1/\delta \right)})/m.
\end{align*} 
Since $(1-1/10)9l/16=81l/160>l/2$, we conclude that for $x\in \overline{E_{S,r}}$, with probability at least $1- \delta (d+\ln{\left(1/\delta \right)})/m$ over $ \rw,\rz $, we have that $\ah_{\delta}(S,r,\rw,\rz)(x)=c(x)$. Thus, by independence of $\rw,\rz,\rx$ we conclude that: 
\begin{align*}
 \e_{\rw,\rz}[\p_{\rx\sim\cD}[\ah_{\delta}(S,r,\rw,\rz)(\rx)\not=c(\rx)\mid \overline{E_{S,r}}]]
\leq \delta (d+\ln{\left(1/\delta \right)} )/m,
\end{align*}
and further by Markov's inequality that:
\begin{align}\label{eq:maincorrectness3}
  \p_{\rw,\rz}\left[\p_{\rx\sim\cD}\left[\ah_{\delta}(S,r,\rw,\rz)(\rx)\not=c(\rx)\mid \overline{E_{S,r}}\right]\geq 4(d+\ln{\left(1/\delta \right)})/m\right]\leq \delta /4.
\end{align}
Thus, by combining \cref{eq:maincorrectness1}, \cref{eq:maincorrectness2} and \cref{eq:maincorrectness3}, we conclude that for any realization $S$ and $r$ of $\rS$ and $\rr$, we have with probability at least $1-\delta/4$ over $\rw$ and $\rz$ that
$\ls_{\cD_{c}}\left(\ah_{\delta}(S,r,\rw,\rz)\right)\leq \p_{\rx\sim D}\left[E_{S,r}\right]+4(d+\ln{(1/\delta)})/m.$
Whereby independence of $\rS,\rr,\rw$ and $\rz$ implies:
\begin{align}\label{eq:unionboundinrandom}
  \p_{\rS,\rr,\rw,\rz}\left[ \ls_{\cD_{c}}\left(\ah_{\delta}(\rS,\rr,\rw,\rz)\right) \leq \p_{\rx\sim D}\left[E_{\rS,\rr}\right]+4(d+\ln{\left(1/\delta \right)})/m\right]
  \geq 1-\delta/4.
\end{align}
We now show that with probability at least $1-\delta/4$ over $\rS,\rr,\rw$, and $\rz$, we have $\p_{\rx\sim\cD}[E_{\rS,\rr}]\leq \cs(d+\ln{\left(4/\delta \right)})/m$. Thus combining this with \cref{eq:unionboundinrandom} and applying a union bound it holds with probability at least $1-\delta/2$ over $\rS,\rr,\rw$ and $\rz$ that $
  \ls_{\cD_{c}}(\ah_{\delta}(S,r,\rw,\rz))\leq(\cs+4) (d+\ln{\left(4/\delta \right)})/m,$
which would conclude that the error bound of $ \ah_{\delta}(\rS,\rr,\rw,\rz) $  holds with probability at least $1-\delta/2$ as claimed. 

Now to this end, let $r,w$, and $z$ be realizations of $\rr,\rw$, and $\rz$. By \cref{induktionsteplemmaeasy} with failure parameter $\delta/4$, we have with probability at least $1-\delta/4$ over $\rS\sim \cD_{c}^{m}$ that 
\begin{align*}
 \p_{\rx\sim \cD}\left[E_{\rS,r}\right]= \p_{\rx\sim \cD}\Bigg[   
  \sum_{\tilde{f}\in \cA(\cS(\rS,\emptyset),r)} \negmedspace \negmedspace \negmedspace  \negmedspace \negmedspace \negmedspace \frac{\ind\{\sum_{i=1}^{t} \ind\{h_{\tilde{f},i}(\rx)=c(\rx)\}/t\geq 3/4\}}{|\cS(\rS,\emptyset)|}
   <3/4 \Bigg]\leq \cs \frac{d+\ln{\left(4/\delta \right)}}{m}.
\end{align*}
Since the above holds for any realization $r,w$, and $z$ of $ \rr,\rw $, and $ \rz $, and that $\rS,\rr,\rw$, and $ \rz $ are independent, we obtain $
 \p_{\rS,\rr,\rw,\rz}[ \p_{\rx\sim \cD}[E_{\rS,\rr}]\leq \cs (d+\ln{\left(4/\delta \right)})/m] \geq 1-\delta/4$,
which concludes the claim regarding the error of $ \ah_{\delta}(\rS,\rr,\rw,\rz) $.

We now show that with probability at least $1-\delta/2$ over $\rS,\rr,\rw$, and $\rz$, we have that $\ah_{\delta}(\rS,\rr,\rw,\rz)$ has inference complexity $ O(\ln{[m /\delta(d+\ln{\left(1/\delta \right)}) ]}) \Uinf $ and computational complexity
\begin{align*}
 O\Big(\negmedspace\ln{\Big(\frac{m}{\delta\left(d+\ln{\left(1/\delta \right)}\right)} \Big)}\negmedspace\cdot\negmedspace\ln{\Big(\frac{m}{\delta} \Big)}\negmedspace\Big)\negmedspace\cdot \negmedspace\Big(O\Big(m+d\ln{\left(m \right)}\Big)\negmedspace+\negmedspace\Utrain(550d)\negmedspace+\negmedspace 3m\Uinf\negmedspace\Big).
 \end{align*} 

To this end, consider any realization of $S$, $ w $, and $ z $, of $\rS,\rw$, and $\rz$. For any $\tilde{f} \in \cA(\cS(S,\emptyset),\rr)$, we have a sub training sequence $S'\in \cS(S,\emptyset)$ such that $\tilde{f}=\cA(S',\rr)$, and $m'=\sum_{i=1}^{k} m(1/6)^{i} + 1 =|S'|\leq |S|=m$. Furthermore, since $n=|\rr|=6\left\lceil20^{2}\ln{\left(8m/\delta \right)}/2\right\rceil$, which is greater than $\lceil20^{2}\ln{(|S'|)}/2\rceil$, we have by \cref{lemmaAdaBoostSampleefficent}, that with probability at least $1-(\delta/(8|S'|))^{20}\geq 1-(6\delta/(8m))^{20}$ (since $S'\geq m/6$) over $\rr$, the number of operations needed to compute $\tilde{f}=\cA(S,\rr)$ is at most 

\begin{align*}
  n\negmedspace\cdot \negmedspace\Big(\negmedspace O\Big(\negmedspace |S'|\negmedspace +\negmedspace d\ln{\left(|S'| \right)}\negmedspace\Big)\negmedspace +\negmedspace\Utrain(550d)\negmedspace+\negmedspace |S'|\Uinf\negmedspace\Big)\negmedspace =\negmedspace O\Big(\negmedspace\ln{\Big(\frac{m}{\delta} \Big)}\negmedspace\Big)\negmedspace\cdot \negmedspace\Big(\negmedspace O\Big(m\negmedspace+\negmedspace d\ln{\left(m \right)}\negmedspace\Big)\negmedspace+\negmedspace\Utrain(550d)\negmedspace+\negmedspace 3m\Uinf\negmedspace\Big)\negmedspace.
\end{align*}
Since there are at most $5^{\log_{6}(m)}=m^{\log_{6}(5)}< m$ rows of $\cA(\cS(S,\emptyset),\rr)$, a union bound implies, that with probability at least $1-\left(6\delta/(8m)\right)^{19}$ over $\rr$,  any $h\in\cA((\cS(S,\emptyset),\rr))$ takes at most 

\begin{align*}
  O\Big(\ln{\Big(\frac{m}{\delta} \Big)}\Big)\cdot \Big(O\Big(m+d\ln{\left(m \right)}\Big)+\Utrain(550d)+3m\Uinf\Big),
  \end{align*}
operations to compute.
Thus, for any realization $r$ of $\rr$ such that the above holds, we notice by the for loop in Line~\ref{alg:Randommajorityforloopstart} consisting of $\lceil 16\cdot200 \ln{(m/(\delta(d+\ln{(1/\delta )})))} /9\rceil$ rounds, and in each round, the algorithm in Line~\ref{alg:RandommajoritysamplingS} finds $S_i\in \cS(S,\emptyset)$ and  
in Line~\ref{alg:Randommajorityrunsadaboostsample} runs $\cA(S_{i},r)$, which for this realization takes at most the above number of 
operations, since $\cA(S_{i},r)\in \cA(\cS(S,\emptyset),r)$, we get that Line~\ref{alg:Randommajorityrunsadaboostsample} over the for loop takes at most 
\begin{align}\label{eq:finalruntime}
  O\Big(\negmedspace\ln{\Big(\frac{m}{\delta\small(d+\ln{\left(1/\delta \right)}\small)} \Big)}\negmedspace\cdot\negmedspace  \ln{\Big(\frac{m}{\delta} \Big)} \negmedspace\Big)\negmedspace\cdot \negmedspace\Big(O\Big(m+d\ln{\left(m \right)}\Big)\negmedspace+\negmedspace\Utrain(550d)\negmedspace+\negmedspace 3m\Uinf\negmedspace\Big)
\end{align}
operations for such $r$, and since the probability of seeing such a realization $r$ of $ \rr $ is at least $1-\left(6\delta/(8m)\right)^{19}\geq 1-\delta/2$, we conclude that Line~\ref{alg:Randommajorityrunsadaboostsample} over the for loop takes at most the above many operations. This is the only part where $\rr$ affects the number of operations $\ah$ uses in total.
 
Now, setting parameters, Line~\ref{alg:Randommajoritysetm} to Line~\ref{alg:Randommajoritysett}, takes: $O(m)$ operations to read the length of $S$.  
Calculating $ k $ can be done in $ O(1) $ operations since we know $ m $ and $ k=\log_{6}(m) $. We assume that $\delta$ and $d$ are given as parameters so takes $ 2 $ operations to read,  and $m$ we have already calculated so calculating the number $m/(\delta(d+\ln{\left(1/\delta \right)}))$ takes at most $5$ additions/multiplications/divisions/operations, and taking $\ln{\left(\cdot \right)}$ of the number cost one operation thus calculating $ l  $ takes $ O(1) $ operations. Initializing $f$, an array of size $l$, takes $O(\ln{\left(m/(\delta(d+\ln{\left(1/\delta \right)})) \right)})$ operations. Calculating $ m' $ can be done by dividing $ m $ by $ 6 $ $ k $ times and adding the numbers as they are calculated and then adding $ 1  $ in the end, thus calculating $ m' $ takes at most $ O(k)=O(\ln{(m )}) $ operations. Since we know $ m' $ calculating $ t $ takes $ O(1) $. Thus, Line~\ref{alg:Randommajoritysetm} to Line~\ref{alg:Randommajoritysett} takes at most $O\left(m+\ln{\left(m/(\delta(d+\ln{\left(1/\delta \right)})) \right)}  \right)$ many operations, which is less than the number of operations used in Line~\ref{alg:Randommajorityrunsadaboostsample} over the for loop, as stated in \cref{eq:finalruntime}. 

Now, for the for loop in Line~\ref{alg:Randommajorityforloopstart} to Line~\ref{alg:Randommajorityoutputofhat}, we perform the following steps: Read $\rw_i$ which takes $\log_{6}(m)$ operations. Finding $S_i$ using $g',\rS$, and $\rw_i$ takes at most $O(m)$ operations by \cref{samplingfromrows}. The runtime in Line~\ref{alg:Randommajorityrunsadaboostsample} has been argued for in the above. Reading $\rz_i$ takes $t=O(\ln{(m)})$ operations. Extracting $h_{f_{\rw_{i}},\rz_{i}}$ out of $f_{\rw_{i}}$ is reading an entry in $f_{\rw_{i}}$ so takes $O(1)$ and reading $f_{\rw_{i}}$ takes at most $O(t)=O(\ln{\left(m\right)})$ operations. Adding/reading $h_{f_{\rw_{i}},\rz_{i}}$ into the $i$-th entry of $f$ takes $O(1)$ operations. Thus, all the operations in each iteration of the for loop, except Line~\ref{alg:Randommajorityrunsadaboostsample}, take at most $O(m)$ operations. Therefore, the total number of operations over the $l$ rounds is at most $O(lm)=O(\ln{\left(m/(\delta(d+\ln{\left(1/\delta \right)})m) \right)}\cdot m)$
operations, excluding the operations in Line~\ref{alg:Randommajorityrunsadaboostsample}, which are fewer than those made in Line~\ref{alg:Randommajorityrunsadaboostsample}, as argued above. 

Thus, the overall number of operations of $\ah$ can be bounded by the operations made in Line~\ref{alg:Randommajorityrunsadaboostsample} which was at most \cref{eq:finalruntime} with probability at least $1-\delta/2$ over $\rr$ for any realization $S,w$ and $ z $  of $\rS$, $ \rw $ and $ \rz $. Thus, by independence of $\rr,\rS,\rw$ and $\rz$ we conclude that the operations needed to calculate $ \ah_{\delta}(\rS,\rr,\rw,\rz) $ with probability at least $ 1-\delta/2$ over $ \rS,\rr,\rw,\rz $  is at most

\begin{align*}
  O\Big(\negmedspace\ln{\Big(\frac{m}{\delta\small(d+\ln{\left(1/\delta \right)}\small)} \Big)}\negmedspace\cdot\negmedspace  \ln{\Big(\frac{m}{\delta} \Big)} \negmedspace\Big)\negmedspace\cdot \negmedspace\Big(O\Big(m+d\ln{\left(m \right)}\Big)\negmedspace+\negmedspace\Utrain(550d)\negmedspace+\negmedspace 3m\Uinf\negmedspace\Big),
\end{align*}
which is the stated training complexity of $\ah_{\delta}(\rS,\rr,\rw,\rz)$ in \cref{maintheorem}. Furthermore, the inference complexity is $ l\cdot\Uinf $, which is $ O(\ln{(m/(\delta(d+\ln{(1/\delta )})) )})\Uinf $, as for a new example, all $ l $ voters in $ \ah_{\delta}(\rS,\rr,\rw,\rz) $ has to be queried to get $ \ah_{\delta}(\rS,\rr,\rw,\rz)(x) $, where each query takes $ \Uinf $ operations.     
\end{proof}

\section{Optimality of $\cA(\cS(\rS,\emptyset))$}\label{sec:optimalitya}
In this section, we prove \cref{induktionsteplemmaeasy} as a corollary of the main theorem of this section, \cref{induktionsteplemma}. For this, we need the following lemma, which gives a uniform error bound on majority voters. The proof of this result can be found in \cref{appendixoptimallityofdeterministicprocess}

\begin{restatable}{lemma}{lemone}\label{lem1}
  There exists a universal constant $C>1$ such that for: hypothesis class $\cH$ of VC-dimension $d$, number of voters $ t \in \mathbb{N} $, training sequence size $m\geq d$, distribution $\mathcal{D}$ over $\cX \times \{-1,1\}$, margin $0< \gamma < 1$, and $\xi> 1$, with probability at least $1-\delta$ over $\rS\sim \cD^{m}$, we have for all $f \in \dlh$.
  \begin{align*}
\Pr_{(\rx,\ry)\sim \mathcal{D}}[\ry f(\rx) \leq \gamma] \leq \Pr_{(\rx,\ry)\sim\rS}[\ry f(\rx) \leq \xi \gamma] +  C\sqrt{\frac{2d}{((\xi-1))\gamma)^2 m}} + \sqrt{\frac{2 \ln(2/\delta)}{m}}
\end{align*}

\end{restatable}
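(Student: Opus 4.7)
The plan is to prove this as a standard margin-based Rademacher complexity bound, using the ramp-loss trick to interpolate between the two threshold-indicator losses.

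First I would introduce the ramp function $\phi:\mathbb{R}\to[0,1]$ defined by $\phi(z)=1$ for $z\leq \gamma$, $\phi(z)=0$ for $z\geq \xi\gamma$, and linear in between. Its Lipschitz constant is $L=1/((\xi-1)\gamma)$, and it sandwiches the two indicators: $\ind\{z\leq \gamma\}\leq \phi(z)\leq \ind\{z\leq \xi\gamma\}$. Hence for every $f\in\dlh$,
\[
\p_{(\rx,\ry)\sim\cD}[\ry f(\rx)\leq \gamma]\leq \e_{(\rx,\ry)\sim\cD}[\phi(\ry f(\rx))],\qquad \e_{(\rx,\ry)\sim\rS}[\phi(\ry f(\rx))]\leq \p_{(\rx,\ry)\sim\rS}[\ry f(\rx)\leq \xi\gamma].
\]
Thus it suffices to bound the gap $\e_{\cD}[\phi(\ry f(\rx))]-\e_{\rS}[\phi(\ry f(\rx))]$ uniformly over $\dlh$.

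Next I would apply the standard uniform deviation / Rademacher bound: with probability at least $1-\delta$ over $\rS$, for all $f\in\dlh$,
\[
\e_{\cD}[\phi(\ry f(\rx))]-\e_{\rS}[\phi(\ry f(\rx))]\leq 2\,\mathcal{R}_m(\phi\circ \cG)+\sqrt{\frac{2\ln(2/\delta)}{m}},
\]
where $\cG=\{(x,y)\mapsto y f(x):f\in\dlh\}$ and $\mathcal{R}_m$ denotes the (empirical or population) Rademacher complexity. The $\sqrt{2\ln(2/\delta)/m}$ term comes from McDiarmid applied to the supremum with bounded-difference constant $1/m$ (since $\phi\in[0,1]$). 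Talagrand's contraction inequality then gives $\mathcal{R}_m(\phi\circ \cG)\leq L\cdot\mathcal{R}_m(\cG)=\mathcal{R}_m(\cG)/((\xi-1)\gamma)$, and since multiplying each coordinate by $\ry\in\{\pm 1\}$ does not change the Rademacher complexity, $\mathcal{R}_m(\cG)=\mathcal{R}_m(\dlh)$.

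The key structural step is that $\dlh$ sits inside the convex hull of $\cH$ (averages of $t$ elements of $\cH$ are convex combinations). Because Rademacher complexity is invariant under taking convex hull, $\mathcal{R}_m(\dlh)\leq \mathcal{R}_m(\mathrm{conv}(\cH))=\mathcal{R}_m(\cH)$. The class $\cH$ has VC-dimension $d$, so the classical Dudley/chaining bound gives a universal constant $C_0$ with $\mathcal{R}_m(\cH)\leq C_0\sqrt{d/m}$ whenever $m\geq d$. Combining these estimates yields $2\mathcal{R}_m(\phi\circ\cG)\leq C\sqrt{2d/(((\xi-1)\gamma)^2 m)}$ for a suitable universal constant $C$.

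Putting everything together, with probability at least $1-\delta$ over $\rS\sim\cD^m$, uniformly in $f\in\dlh$,
\[
\p_{(\rx,\ry)\sim\cD}[\ry f(\rx)\leq \gamma] \leq \p_{(\rx,\ry)\sim\rS}[\ry f(\rx)\leq \xi\gamma] + C\sqrt{\frac{2d}{((\xi-1)\gamma)^2 m}} + \sqrt{\frac{2\ln(2/\delta)}{m}},
\]
which is the claim. The only nontrivial step is bundling together the contraction lemma with the convex-hull invariance to reduce to the base-class Rademacher complexity; I would expect the bookkeeping on the constant $C$ (and explicit constants in Dudley's bound) to be the most tedious part, but no new ideas are needed beyond standard Rademacher machinery.
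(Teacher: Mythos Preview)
Your proposal is correct and follows essentially the same approach as the paper: ramp-loss sandwich, Rademacher generalization bound with McDiarmid-type tail, Lipschitz contraction, reduction of $\mathcal{R}_m(\dlh)$ to $\mathcal{R}_m(\cH)$ (the paper phrases this as ``the sup is attained at a single $h\in\cH$'' rather than ``convex-hull invariance,'' but these are the same fact), and Dudley's bound for VC classes. The only cosmetic difference is in which named lemmas are cited for the two black boxes.
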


We also need the following lemma, which relates the error of $ \cA(\cS(S,T),r) $ to its recursive calls $ \cA(j,T,r) $. The proof of \cref{induktionlemma} can be found in \cref{appendixoptimallityofdeterministicprocess}.

\begin{restatable}{lemma}{induktionlemma}\label{induktionlemma}
  For any sequences $S,T\in (\cX\times \cY)^{*}$ such that $|S|= 6^{k}$ for some $k\geq 1$ and any string $r\in ([0,1]^{*})^{*}$, it holds that 
  \begin{align*}
    &\p_{\rx\sim \cD} \bigg[  
      \sum_{f\in \cA(\cS(S,T),r)} \frac{\ind\{\sum_{i=1}^{t} \ind\{h_{f,i}(\rx)=c(\rx)\}/t\geq 3/4\}}{|\cS(S,T)|}
       <3/4\bigg]
    \\
    \leq \negmedspace \negmedspace  \negmedspace   \negmedspace \negmedspace \max_{\stackrel{j\in\{ 1:5 \}}{\rf\in \cA(\cS(S,T),r) \backslash \cA(j,T,r)}} \negmedspace \negmedspace  \negmedspace \negmedspace \negmedspace \negmedspace \negmedspace  \negmedspace \negmedspace \negmedspace\negmedspace\negmedspace 80&\p_{\rx\sim \cD}
    \bigg[ \sum_{i=1}^{t}  \ind\{h_{\rf,i}(\rx)=c(\rx)\}/t < \frac{3}{4},
             \sum_{\text{\quad\quad}f\in \cA(j,T,r)}           \frac{\ind\{\sum_{i=1}^{t}  \ind\{h_{f,i}(\rx)=c(\rx)\}/t\geq \frac{3}{4}\}}{|\cS(j,T)|}
  <\frac{3}{4}\bigg]
  \end{align*}

\end{restatable}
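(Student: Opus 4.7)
The plan is to follow the template of \cite{hannekeoptimal} adapted to the $6$-way/$5$-recursion structure of $\cS$, using a combinatorial averaging argument to reduce the ``matrix is bad on $\rx$'' event to a joint event that sub-matrix $\cA(\cS(j,T),r)$ is bad AND a particular row $\rf$ outside $\cA(\cS(j,T),r)$ is bad, losing only a constant factor of $80$. Throughout the proof I will abbreviate: call a row $f\in\cA(\cS(S,T),r)$ \emph{good on $x$} if $\sum_{i=1}^{t}\ind\{h_{f,i}(x)=c(x)\}/t\geq 3/4$ and \emph{bad on $x$} otherwise; call a sub-matrix (or the whole matrix) \emph{bad on $x$} if fewer than $3/4$ of its rows are good on $x$. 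The LHS of the claim is exactly $\p_{\rx\sim\cD}[\cA(\cS(S,T),r)\text{ bad on }\rx]$.

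First I would prove two simple counting facts. (i) If $\cA(\cS(S,T),r)$ is bad on $x$, then some sub-matrix $\cA(\cS(j,T),r)$ is also bad on $x$: the matrix is the vertical stack of the five sub-matrices of equal size $|\cS(S,T)|/5$, so if every one had $\geq 3/4$ good rows, the whole matrix would have $\geq 3/4$ good rows, contradicting badness. (ii) If the whole matrix is bad on $x$, then for \emph{every} $j\in\{1{:}5\}$ the fraction of bad rows in $\cA(\cS(S,T),r)\setminus\cA(\cS(j,T),r)$ is at least $1/16$: the total number of bad rows is $>\tfrac{1}{4}|\cS(S,T)|$, the number of bad rows inside $\cA(\cS(j,T),r)$ is at most $|\cS(j,T)|=|\cS(S,T)|/5$, so the number outside is at least $(\tfrac{1}{4}-\tfrac{1}{5})|\cS(S,T)|=\tfrac{1}{20}|\cS(S,T)|$, and the set $\cA(\cS(S,T),r)\setminus\cA(\cS(j,T),r)$ has size $\tfrac{4}{5}|\cS(S,T)|$, giving fraction $\geq (1/20)/(4/5)=1/16$. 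This is exactly the combinatorial reason we chose $6$ buckets and $5$ recursive calls rather than fewer.

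Next I combine these via a randomized sampling argument. Introduce $\ri\sim\mathrm{Unif}\{1{:}5\}$ and, conditionally on $\ri$, $\rf\sim\mathrm{Unif}(\cA(\cS(S,T),r)\setminus\cA(\cS(\ri,T),r))$. Fix any $x$ with the matrix bad on $x$. By (i), there exists $i^{*}$ with $\cA(\cS(i^{*},T),r)$ bad on $x$, so $\p_{\ri}[\cA(\cS(\ri,T),r)\text{ bad on }x]\geq 1/5$. By (ii), conditional on $\ri=i^{*}$, the sampled row $\rf$ is bad on $x$ with probability at least $1/16$. Multiplying, for every $x$ for which the matrix is bad on $x$,
\begin{align*}
\p_{\ri,\rf}\Big[\cA(\cS(\ri,T),r)\text{ bad on }x\ \text{and}\ \rf\text{ bad on }x\Big]\ \geq\ \tfrac{1}{5}\cdot\tfrac{1}{16}\ =\ \tfrac{1}{80}.
\end{align*}
Equivalently, $\ind\{\cA(\cS(S,T),r)\text{ bad on }x\}\leq 80\cdot\p_{\ri,\rf}[\cdot]$ pointwise in $x$.

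Finally I take expectation over $\rx\sim\cD$ on both sides, swap the order of the (independent) expectations over $\rx$ and $(\ri,\rf)$ by Fubini, and bound the expectation over $(\ri,\rf)$ by the maximum over the (finite) support, which is precisely the set of pairs $(j,\tilde f)$ with $\tilde f\in\cA(\cS(S,T),r)\setminus\cA(j,T,r)$. Unfolding the definitions of ``bad row'' and ``bad sub-matrix'' yields exactly the RHS of \cref{induktionlemma}. The only substantive step is the counting in (ii), whose constants are pinned down by the choice of $5$ recursive calls; the rest is bookkeeping and a union/max bound.
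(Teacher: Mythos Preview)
Your proposal is correct and follows essentially the same approach as the paper's proof: the two counting facts (i) and (ii), the randomized sampling of $(\ri,\rf)$ giving the $1/5\cdot 1/16=1/80$ lower bound, and the Fubini/max step are exactly what the paper does. The only cosmetic difference is that your fact (ii) subtracts the sub-matrix size from the total bad count, whereas the paper multiplies the good-fraction by $5/4$ to get the same $1/16$ bound; and the paper conditions on the event ``sub-matrix bad'' rather than on $\ri=i^{*}$, which is equivalent for the lower bound.
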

The last lemma we need in the proof of \cref{induktionsteplemma} is \cref{lemma:adaboostsampleeasyproperties}, which gives the necessary properties of $ \cA $  for proving \cref{induktionsteplemma}. We restate it here for convenience.

\lemmaadaboostsampleeasyproperties*

With the above lemmas in place, we are now ready to give the proof of \cref{induktionsteplemma}.
\begin{theorem}\label{induktionsteplemma}
 There exists a universal constant $\cs$ such that for: Distribution $ \cD $ over $ \cX $, hypothesis class $ \cH $ of VC-dimension $ d $, target concept $ c\in \cH $,  failure probability $ 0<\delta<1 $, training sequence size $m=6^{k}$ for some $k\geq 1$, training sequence $T\in (\cX\times \cY)^{*}$ realizable by $ c $, and string $r\in ([0,1]^{*})^{*}$, it holds with probability at least $1-\delta$ over $\rS\sim \cD_{c}^{m}$ that 
  \begin{align*}
    \p_{\rx\sim \cD} \Bigg[  
      \sum_{\rf\in \cA(\cS(\rS,T),r)} \frac{\ind\{\sum_{i=1}^{t} \ind\{h_{\rf,i}(\rx)=c(\rx)\}/t\geq 3/4\}}{|\cS(\rS,T)|}
       <3/4\Bigg] \leq \cs \frac{d+\ln{\left(1/\delta \right)}}{m}.
  \end{align*}
\end{theorem}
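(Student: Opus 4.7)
The plan is to proceed by induction on $k$ where $m = 6^k$. In the base case $k = 1$, so $m = 6$, the bound holds trivially by choosing the universal constant $\cs$ large enough (e.g.\ $\cs \geq 6$) so that the right hand side is at least $1$. For the induction step, I would first apply \cref{induktionlemma} to upper bound the left hand side by
\begin{align*}
80 \max_{i \in \{1:5\},\ \rf \in \cA(\cS(\rS,T),r) \setminus \cA(i,T,r)} \p_{\rx \sim \cD}\bigl[A_\rf(\rx) \cap B_i(\rx)\bigr],
\end{align*}
where I write $A_\rf(\rx)$ for the event $\sum_{s=1}^{t} \ind\{h_{\rf,s}(\rx) = c(\rx)\}/t < 3/4$ (the $3/4$-margin failure of the voting classifier $\rf$) and $B_i(\rx)$ for the event that fewer than $3/4$ of the majority voters in $\cA(i,T,r)$ have $3/4$-margin correct on $\rx$. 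Observing that any $\rf \in \cA(\cS(\rS,T),r) \setminus \cA(i,T,r)$ lies in $\cA(j,T,r)$ for some unique $j \neq i$, it suffices to show that for each ordered pair $(i,j)$ with $i \neq j$ (there are $20$), $\max_{\rf \in \cA(j,T,r)} 80\,\p[A_\rf(\rx) \cap B_i(\rx)] \leq \cs(d + \ln(1/\delta))/m$ except on a failure event of mass $3\delta/60$ over $\rS$, after which a union bound over the $20$ pairs yields total failure $\leq \delta$.

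For a fixed pair $(i,j)$ with $i \neq j$, I would first apply the induction hypothesis to the recursive call $\cA(\cS(\rS_0, \rS_i \sqcup T), r) = \cA(i,T,r)$ (noting $|\rS_0| = 6^{k-1}$, with $\rS_i \sqcup T$ playing the role of $T$) at failure parameter $\delta/60$, obtaining that with probability at least $1 - \delta/60$ over $\rS_0, \rS_i$,
\begin{align*}
\p_{\rx \sim \cD}[B_i(\rx)] \leq 6\cs(d + \ln(60/\delta))/m.
\end{align*}
Split into two cases. If $\p_{\rx\sim\cD}[B_i(\rx)] \leq \cs(d + \ln(1/\delta))/(80m)$, monotonicity of measures gives $\p[A_\rf \cap B_i] \leq \p[B_i] \leq \cs(d + \ln(1/\delta))/(80m)$ directly. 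Otherwise $\p_{\rx\sim\cD}[B_i(\rx)] > \cs(d+\ln(1/\delta))/(80m)$; since $\rS_j$ is iid and independent of $\rS_0, \rS_i$ (hence of both $\cA(i,T,r)$ and the event $B_i$), a multiplicative Chernoff bound on $|\rS_j \sqcap B_i|$ shows that with probability at least $1 - \delta/60$ over $\rS_j$ this count is at least $\cs(d + \ln(1/\delta))/960$ for $\cs$ sufficiently large.

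Conditioned on this good event for $\rS_j$, I would complete the argument using \cref{lemma:adaboostsampleeasyproperties} and \cref{lem1}. Every $\rf \in \cA(j,T,r)$ is trained on a sub-training sequence of $\cS(\rS_0, \rS_j \sqcup T)$, which contains $\rS_j$, so by \cref{lemma:adaboostsampleeasyproperties} every such $\rf$ has zero $3/4$-margin loss on $\rS_j$, and hence on $\rS_j \sqcap B_i$. Because $\rS_j \sqcap B_i$ consists of iid examples from $\cD_c(\cdot \mid B_i)$ (given its size), I apply \cref{lem1} with $\gamma = 1/2$, $\xi = 3/2$ and failure parameter $\delta/60$ to obtain, with probability at least $1 - \delta/60$, a uniform bound over all $g \in \dlh$ (and in particular over this finite family of $\rf$'s) of the form
\begin{align*}
\p_{\rx\sim\cD_c(\cdot \mid B_i)}[A_g(\rx)] \leq O\bigl(\sqrt{d / |\rS_j \sqcap B_i|}\bigr) + O\bigl(\sqrt{\ln(120/\delta)/|\rS_j \sqcap B_i|}\bigr) \leq \tfrac{1}{480\ln(60e)},
\end{align*}
for $\cs$ sufficiently large. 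By the law of total probability,
\begin{align*}
\p[A_\rf(\rx) \cap B_i(\rx)] = \p[A_\rf(\rx) \mid B_i(\rx)]\,\p[B_i(\rx)] \leq \frac{6\cs(d + \ln(60/\delta))}{480\,m\,\ln(60e)} \leq \frac{\cs(d + \ln(1/\delta))}{80m},
\end{align*}
as required, and a union bound over the three $\delta/60$ failure events and the $20$ pairs $(i,j)$ closes the induction.

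The main obstacle I anticipate is correctly managing the interplay between the induction hypothesis, which must be stated for arbitrary realizable $T$ so that $\rS_i \sqcup T$ can substitute for $T$ in the recursive call, and the uniform convergence step, which requires the fresh sample $\rS_j$ to be simultaneously (i) iid and independent of the inner inductive object $\cA(i,T,r)$ and (ii) contained in the training data of every $\rf$ under consideration. This explains both why the recursion has exactly $5$ branches (so that $1/4 - 1/5 = 1/20 > 0$ of the majority voters in the complement of the inner call must fail, producing the factor $80$ in \cref{induktionlemma}) and why \cref{lem1} must be a \emph{margin}-uniform bound rather than the classification-error uniform bound of \cref{uniformconvergencemain}: the hypotheses $\rf$ carry only zero $3/4$-margin loss, not zero classification error, on the fresh sample, so converting this into a small $1/2$-margin error on the conditional distribution $\cD_c(\cdot \mid B_i)$ genuinely requires a margin-based uniform convergence statement.
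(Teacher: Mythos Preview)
Your proposal is correct and follows essentially the same approach as the paper: induction on $k$, reduction via \cref{induktionlemma} to the $20$ ordered pairs $(i,j)$, induction hypothesis on the inner call, the same case split on $\p_{\rx\sim\cD}[B_i(\rx)]$, a Chernoff bound to ensure $\rS_j$ has enough mass on $B_i$, and finally \cref{lemma:adaboostsampleeasyproperties} together with \cref{lem1} at $\gamma=1/2$, $\xi=3/2$ to bound the conditional $1/2$-margin error uniformly. Apart from inessential differences in the per-pair failure budget ($3\cdot\delta/60$ versus the paper's $\delta/40+\delta/40$) and the swapped roles of the indices, the argument matches the paper's proof step for step, and your closing remarks correctly identify why the margin-uniform bound of \cref{lem1} (rather than \cref{uniformconvergencemain}) is the right tool here.
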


\begin{proof}[Proof of \cref{induktionsteplemma}]
We show that for any $T\in(\cX\times \cY)^{*}$, $0<\delta<1$, $\rS\sim \cD^{m}$ where $m=6^{k}$ for integer $k\geq1$, and any string $r\in([0:1]^{*})^{*}$, with probability at least $1-\delta$ over $\rS$, we have: 
\begin{align}\label{highprobabilityeq3}
  \p_{\rx\sim \cD} \left[  
    \sum_{f\in \cA(\cS(S,T),r)} \frac{\ind\{\sum_{i=1}^{t} \ind\{h_{f,i}(\rx)=c(\rx)\}/t\geq 3/4\}}{|\cS(S,T)|}
     <3/4\right]
  \leq \cs(d+\ln{\left(1/\delta \right)})/m,
\end{align}
for $\cs= \max\{32C^{2}\cdot 960,3840\ln{\left(160 \right)}\}(2\cdot 80\cdot 6 \left(\ln{\left(40 \right)}+1\right))^2$ (where $C\geq 1$ is the universal constant from \cref{lem1}). We will prove this claim by induction. For any $k\leq \log_{6}(\cs)$, we are done as the above left-hand side is at most $1$, and the right side in this case is at least $1$, which concludes the induction base.

Now, for the induction step, let $T\in(\cX\times\cY)^{*}$, $0<\delta<1$, $m=6^{k}$, and $r\in([0:1]^{*})^{*}$. We first use \cref{induktionlemma}, followed by applying the union bound twice, respectively on $j\in\{1,2,3,4,5\}$ and $\cA(\cS(S,T),r) \backslash \cA(j,T,r)=\sqcup_{i=1,i\not=j}^{5}\cA(i,T)$, to obtain that for any $y>0$ we have:  
\begin{align}\label{highprobabilityeq7}
  &\p_{\rS}\left[   
    \p_{\rx\sim \cD} \left[  
      \sum_{f\in \cA(\cS(\rS,T))} \frac{\ind\{\sum_{i=1}^{t} \ind\{h_{f,i}(\rx)=c(\rx)\}/t\geq 3/4\}}{|\cS(S,T)|}
       <3/4\right]> y 
       \right]
    \\
    &\leq 
    \p_{S}\bigg[\negmedspace \negmedspace  \negmedspace   \negmedspace \negmedspace \negmedspace \negmedspace  \negmedspace   \negmedspace \negmedspace \negmedspace \negmedspace  \negmedspace     \max_{\stackrel{j\in\{ 1:5 \}}{\rf\in \cA(\cS(S,T),r) \backslash \cA(j,T,r)}} \negmedspace \negmedspace  \negmedspace \negmedspace \negmedspace \negmedspace \negmedspace  \negmedspace \negmedspace \negmedspace\negmedspace \negmedspace\negmedspace\negmedspace 80\p_{\rx\sim \cD}
    \bigg[\sum_{i=1}^{t}\negmedspace  \ind\{h_{\rf,i}(\rx)=c(\rx)\}/t\negmedspace <\negmedspace \frac{3}{4},\negmedspace
    \negmedspace \negmedspace  \negmedspace  \negmedspace \negmedspace  \negmedspace \negmedspace \sum_{\text{\quad\quad}f\in \cA(j,T,r)} \negmedspace \negmedspace \negmedspace \negmedspace\negmedspace \negmedspace  \negmedspace \negmedspace \negmedspace \negmedspace \frac{\ind\{\sum_{i=1}^{t}\negmedspace  \ind\{h_{f,i}(\rx)=c(\rx)\}/t\geq \frac{3}{4}\}}{|\cS(j,T)|}
    \negmedspace<\negmedspace\frac{3}{4}\bigg]> y
  \bigg]\nonumber
  \\
  &\leq\negmedspace
   \sum_{\stackrel{j,l=1}{l\not=j}}^{5}\p_{\rS_{0},\rS_{j},\rS_{l}}\bigg[ \negmedspace \negmedspace \negmedspace\negmedspace\negmedspace\negmedspace\negmedspace\max_{ \quad \quad\stackrel{}{\rf\in\cA(l,T,r)}} \negmedspace \negmedspace \negmedspace  \negmedspace\negmedspace\negmedspace\negmedspace  \negmedspace  80\p_{\rx\sim \cD}
   \bigg[ \sum_{i=1}^{t}\negmedspace  \ind\{h_{\rf,i}(\rx)=c(\rx)\}/t\negmedspace <\negmedspace \frac{3}{4},\negmedspace
   \negmedspace \negmedspace  \negmedspace  \negmedspace \negmedspace  \negmedspace \negmedspace \sum_{\text{\quad\quad}f\in \cA(j,T,r)} \negmedspace \negmedspace \negmedspace \negmedspace\negmedspace \negmedspace  \negmedspace \negmedspace \negmedspace \negmedspace \frac{\ind\{\sum_{i=1}^{t}\negmedspace  \ind\{h_{f,i}(\rx)=c(\rx)\}/t\geq \frac{3}{4}\}}{|\cS(j,T)|}
   \negmedspace<\negmedspace\frac{3}{4}\bigg]\negmedspace> \negmedspace y\bigg]\nonumber
  \end{align} 

We will now show that for $y=\cs(d+\ln{\left(1/\delta \right)})/m$, the last expression in the above is at most $\delta$, which would conclude the induction step and the proof. We will show this by bounding each term in the sum by $ \delta/20 $. Since the sum contains at most $ 20 $ terms, the claim follows. Thus, from now on, let $y=\cs(d+\ln{\left(1/\delta \right)})/m$, and $ j,l\in\{ 1:5 \} $ $ l\not=j $.

Let $S_{j}$ be any realization of $\rS_{j}$. We observe that since $|\rS_{0}|=6^{k-1}=m/6,$ and $\cA(j,T,r)=\{\cA(S,r)\}_{S\in\cS(j,T)}$, where we recall that $\cS(j,T)=\cS(\rS_{0},S_{j}\sqcup T),$ the induction hypothesis holds for this call with random training sequence $\rS_0$ (since $ |\rS_{0}|=6^{k-1}=m/6 $ ) in the first argument of $\cS(\cdot,\cdot)$,  training sequence $S_{j}\sqcup T$ in the second argument, and string $r$. We use the induction hypothesis with failure parameter $\delta/\cnn$, i.e., with probability at most $\delta/\cnn$ we have that  $\p_{\rx\sim \cD}[\sum_{f\in \cA(j,T,r)} \ind\{\sum_{i=1}^{t} \ind\{h_{f,i}(\rx)=c(\rx)\}/t\geq 3/4\}/|\cS(j,T)|
<3/4]> 6\cs(d+\ln{(\cnn/\delta )})/m$. We now consider the following disjoint events over $ \rS_{0} $ 
\begin{align*}
 &E_1
  = 
 \Bigg\{  \p_{\rx\sim \cD} \left[\sum_{f\in \cA(j,T,r)} \frac{\ind\{\sum_{i=1}^{t} \ind\{h_{f,i}(\rx)=c(\rx)\}/t\geq 3/4\}}{|\cS(j,T)|}
  < \frac{3}{4} \right]< y/80\Bigg\} 
\\
&E_{2}
 = 
\Bigg\{  y/80
\leq
\p_{\rx\sim \cD}  \left[ \sum_{f\in \cA(j,T,r)}      \frac{\ind\{\sum_{i=1}^{t} \ind\{h_{f,i}(\rx)=c(\rx)\}/t\geq 3/4\}}{|\cS(j,T)|}
 < \frac{3}{4} \right]  \leq  6\cs(d+\ln{\left(\cnn/\delta \right)})/m \Bigg\} 
\\
&E_{3} = \Bigg\{6\cs(d+\ln{\left(\cnn/\delta \right)})/m< \p_{\rx\sim \cD}  \left[\sum_{f\in \cA(j,T,r)} \frac{\ind\{\sum_{i=1}^{t} \ind\{h_{f,i}(\rx)=c(\rx)\}/t\geq 3/4\}}{|\cS(j,T)|}
 < \frac{3}{4} \right] \Bigg\} .
\end{align*} 
Now, using the law of total probability, that $E_{3}$ happens with probability at most $\delta/\cnn$, and that $ \rS_{0} $ and $ \rS_{l} $ are independent, we get:

\begin{align}\label{highprobabilityeq5}
  &\p_{\rS_{0},\rS_{l}}\bigg[  \max_{\stackrel{}{\rf\in\cA(l,T,r)}} \negmedspace 80\p_{\rx\sim \cD}
  \bigg[ \sum_{i=1}^{t}\negmedspace  \ind\{h_{\rf,i}(\rx)=c(\rx)\}/t\negmedspace <\negmedspace \frac{3}{4},
  \negmedspace \negmedspace  \negmedspace  \negmedspace \negmedspace  \negmedspace \negmedspace \negmedspace \negmedspace\sum_{\text{\quad\quad}f\in \cA(j,T,r)} \negmedspace \negmedspace \negmedspace \negmedspace\negmedspace \negmedspace  \negmedspace \negmedspace \negmedspace \negmedspace \frac{\ind\{\sum_{i=1}^{t}\negmedspace  \ind\{h_{f,i}(\rx)=c(\rx)\}/t\geq \frac{3}{4}\}}{|\cS(j,T)|}
  \negmedspace<\negmedspace\frac{3}{4}\bigg]\negmedspace> \negmedspace y\bigg]\nonumber
\\
&\leq
\e_{\rS_{0}}\bigg[ \p_{\rS_{l}}\bigg[    \max_{\stackrel{}{\rf\in\cA(l,T,r)}} \negmedspace    80\p_{\rx\sim \cD}
\bigg[
  \negmedspace \sum_{i=1}^{t}\negmedspace  \ind\{h_{\rf,i}(\rx)=c(\rx)\}/t\negmedspace <\negmedspace \frac{3}{4}\\&\quad\quad\quad\quad\quad\quad\quad\quad\quad\quad\quad\quad,
\negmedspace \negmedspace  \negmedspace  \negmedspace \negmedspace  \negmedspace \negmedspace \negmedspace \negmedspace\sum_{\text{\quad\quad}f\in \cA(j,T,r)} \negmedspace \negmedspace \negmedspace \negmedspace\negmedspace \negmedspace  \negmedspace \negmedspace \negmedspace \negmedspace \frac{\ind\{\sum_{i=1}^{t}\negmedspace  \ind\{h_{f,i}(\rx)=c(\rx)\}/t\geq \frac{3}{4}\}}{|\cS(j,T)|}
\negmedspace<\negmedspace\frac{3}{4}\bigg]\negmedspace>\negmedspace y\bigg] \bigg |  E_1 \bigg]\p_{S_{0}}[E_1]\nonumber
\\
&+
\e_{\rS_{0}}\bigg[ \p_{\rS_{l}}\bigg[    \max_{\stackrel{}{\rf\in\cA(l,T,r)}}\negmedspace    80\p_{\rx\sim \cD}
\bigg[
  \negmedspace \sum_{i=1}^{t}\negmedspace  \ind\{h_{\rf,i}(\rx)=c(\rx)\}/t\negmedspace <\negmedspace \frac{3}{4}\nonumber
  \\
  &\quad \quad\quad\quad\quad\quad\quad\quad\quad\quad\quad\quad,
\negmedspace \negmedspace  \negmedspace  \negmedspace \negmedspace  \negmedspace \negmedspace \negmedspace \negmedspace\sum_{\text{\quad\quad}f\in \cA(j,T,r)} \negmedspace \negmedspace \negmedspace \negmedspace\negmedspace \negmedspace  \negmedspace \negmedspace \negmedspace \negmedspace \frac{\ind\{\sum_{i=1}^{t}\negmedspace  \ind\{h_{f,i}(\rx)=c(\rx)\}/t\geq \frac{3}{4}\}}{|\cS(j,T)|}
\negmedspace<\negmedspace\frac{3}{4}\bigg]\negmedspace>\negmedspace y\bigg] \bigg |  E_2 \bigg]\p_{S_{0}}[E_2]+
\delta/\cnn\nonumber
\end{align}  

Now, if $S_0$ is a realization of $\rS_{0}$ where $ \p_{\rx\sim \cD}[\sum_{f\in \cA(j,T,r)} \ind\{\sum_{i=1}^{t} \ind\{h_{f,i}(\rx)=c(\rx)\}/t\geq 3/4\}/|\cS(j,T)|<\frac{3}{4}
 ]< \cs(d+\ln{\left(1/\delta \right)})/(80m)=y/80$ it follows by monotonicity of measures, and since we had $y=\cs(d+\ln{\left(1/\delta \right)})/m$, that
\begin{align*}
  \p_{\rS_{l}}\bigg[   \max_{\stackrel{}{\rf\in\cA(l,T,r)}} \negmedspace 80\p_{\rx\sim \cD}
  \bigg[ \sum_{i=1}^{t}\negmedspace  \ind\{h_{\rf,i}(\rx)=c(\rx)\}/t\negmedspace <\negmedspace \frac{3}{4},
  \negmedspace \negmedspace  \negmedspace  \negmedspace \negmedspace  \negmedspace \negmedspace \negmedspace \negmedspace\sum_{\text{\quad\quad}f\in \cA(j,T,r)} \negmedspace \negmedspace \negmedspace \negmedspace\negmedspace \negmedspace  \negmedspace \negmedspace \negmedspace \negmedspace \frac{\ind\{\sum_{i=1}^{t}\negmedspace  \ind\{h_{f,i}(\rx)=c(\rx)\}/t\geq \frac{3}{4}\}}{|\cS(j,T)|}
  \negmedspace<\negmedspace\frac{3}{4}\bigg]\negmedspace>\negmedspace y\bigg]\negmedspace=\negmedspace0.
\end{align*} 

Since this holds on $E_1$, we get that the first term in \cref{highprobabilityeq5} is zero.  

Now, for realizations $S_0$ of $\rS_0$, which are in $ E_{2}$, i.e., is such that $y/80=\cs(d+\ln{\left(1/\delta \right)})/(80m)\leq \p_{\rx\sim \cD}[\sum_{f\in \cA(j,T,r)} \ind\{\sum_{i=1}^{t} \ind\{h_{f,i}(\rx)=c(\rx)\}/t\geq 3/4\}/|\cS(j,T)|]\leq 6\cs(d+\ln{\left(\cnn/\delta \right)})/m$ (the middle term of \cref{highprobabilityeq5}) using the law of conditional probability (which is well defined as the above probability is now non zero), we get that

  \begin{align}\label{highprobabilityeq6}
    &\max_{\stackrel{}{\rf\in\cA(l,T,r)}} \negmedspace 80\p_{\rx\sim \cD}
  \bigg[ \sum_{i=1}^{t}\negmedspace  \ind\{h_{\rf,i}(\rx)=c(\rx)\}/t\negmedspace <\negmedspace \frac{3}{4},
  \negmedspace \negmedspace  \negmedspace  \negmedspace \negmedspace  \negmedspace \negmedspace \negmedspace \negmedspace\sum_{\text{\quad\quad}f\in \cA(j,T,r)} \negmedspace \negmedspace \negmedspace \negmedspace\negmedspace \negmedspace  \negmedspace \negmedspace \negmedspace \negmedspace \frac{\ind\{\sum_{i=1}^{t}\negmedspace  \ind\{h_{f,i}(\rx)=c(\rx)\}/t\geq \frac{3}{4}\}}{|\cS(j,T)|}
  \negmedspace<\negmedspace\frac{3}{4}\bigg]
      \\  
      &=\negmedspace\negmedspace\negmedspace\negmedspace\max_{\stackrel{}{\rf\in\cA(l,T,r)}} \negmedspace  80\p_{\rx\sim \cD}
      \bigg[
        \negmedspace \sum_{i=1}^{t}\negmedspace  \ind\{h_{\rf,i}(\rx)=c(\rx)\}/t\negmedspace <\negmedspace \frac{3}{4}\bigg |
       \negmedspace  \negmedspace \negmedspace  \negmedspace \negmedspace \negmedspace \negmedspace\sum_{\text{\quad\quad}f\in \cA(j,T,r)} \negmedspace \negmedspace \negmedspace \negmedspace\negmedspace \negmedspace  \negmedspace \negmedspace \negmedspace \negmedspace \frac{\ind\{\sum_{i=1}^{t}\negmedspace  \ind\{h_{f,i}(\rx)=c(\rx)\}/t\geq \frac{3}{4}\}}{|\cS(j,T)|}
       \negmedspace<\negmedspace\frac{3}{4}\bigg]
      \nonumber\\ 
      &\quad\quad\quad\quad\quad\quad\quad\quad\quad\quad\quad\qquad\qquad\qquad\cdot \p_{\rx\sim \cD}\bigg[ \negmedspace \negmedspace\negmedspace \negmedspace \negmedspace\negmedspace \negmedspace \negmedspace\sum_{\text{\quad\quad}f\in \cA(j,T,r)} \negmedspace \negmedspace \negmedspace \negmedspace\negmedspace \negmedspace  \negmedspace \negmedspace \negmedspace \negmedspace \frac{\ind\{\sum_{i=1}^{t}\negmedspace  \ind\{h_{f,i}(\rx)=c(\rx)\}/t\geq \frac{3}{4}\}}{|\cS(j,T)|}\negmedspace<\negmedspace\frac{3}{4}\bigg]
      \nonumber\\
      &\leq \negmedspace\negmedspace\negmedspace\negmedspace
      \max_{\stackrel{}{\rf\in\cA(l,T,r)}} \negmedspace  80\p_{\rx\sim \cD}
      \bigg[
        \negmedspace \sum_{i=1}^{t}\negmedspace  \ind\{h_{\rf,i}(\rx)=c(\rx)\}/t\negmedspace <\negmedspace \frac{3}{4}\bigg |
       \negmedspace  \negmedspace \negmedspace  \negmedspace \negmedspace \negmedspace \negmedspace\sum_{\text{\quad\quad}f\in \cA(j,T,r)} \negmedspace \negmedspace \negmedspace \negmedspace\negmedspace \negmedspace  \negmedspace \negmedspace \negmedspace \negmedspace \frac{\ind\{\sum_{i=1}^{t}\negmedspace  \ind\{h_{f,i}(\rx)=c(\rx)\}/t\geq \frac{3}{4}\}}{|\cS(j,T)|}
       \negmedspace<\negmedspace\frac{3}{4}\bigg]\cdot\frac{6\cs(d+\ln{(\frac{\cnn}{\delta} )})}{m}.\nonumber
    \end{align}
Thus, if we can show that with probability at least $1-\delta/\cff$ over $\rS_{l}$ that 
\begin{align*}
  \max_{\stackrel{}{\rf\in\cA(l,T,r)}} \negmedspace \p_{\rx\sim \cD}
  \bigg[
  \negmedspace \sum_{i=1}^{t}\negmedspace  \ind\{h_{\rf,i}(\rx)=c(\rx)\}/t\negmedspace <\negmedspace \frac{3}{4}\bigg |
   \negmedspace  \negmedspace \negmedspace  \negmedspace \negmedspace \negmedspace \negmedspace\sum_{\text{\quad\quad}f\in \cA(j,T,r)} \negmedspace \negmedspace \negmedspace \negmedspace\negmedspace \negmedspace  \negmedspace \negmedspace \negmedspace \negmedspace \frac{\ind\{\sum_{i=1}^{t}\negmedspace  \ind\{h_{f,i}(\rx)=c(\rx)\}/t\geq \frac{3}{4}\}}{|\cS(j,T)|}
   \negmedspace<\negmedspace\frac{3}{4}\bigg] 
\end{align*}
is at most $(80\cdot 6 \left(\ln{\left(\cnn \right)}+1\right))^{-1}$ for realizations $ S_{0} $ of  $\rS_{0}\in E_{2}$, we get by \cref{highprobabilityeq6} that 
\begin{align*}
  \max_{\stackrel{}{\rf\in\cA(l,T,r)}} \negmedspace   80\p_{\rx\sim \cD}
  \bigg[ \sum_{i=1}^{t}\negmedspace  \ind\{h_{\rf,i}(\rx)=c(\rx)\}/t\negmedspace <\negmedspace \frac{3}{4},
  \negmedspace \negmedspace  \negmedspace  \negmedspace \negmedspace  \negmedspace \negmedspace \negmedspace \negmedspace\sum_{\text{\quad\quad}f\in \cA(j,T,r)} \negmedspace \negmedspace \negmedspace \negmedspace\negmedspace \negmedspace  \negmedspace \negmedspace \negmedspace \negmedspace \frac{\ind\{\sum_{i=1}^{t}\negmedspace  \ind\{h_{f,i}(\rx)=c(\rx)\}/t\geq \frac{3}{4}\}}{|\cS(j,T)|}\negmedspace <\negmedspace \frac{3}{4}=y
  \bigg]
  \\ 
  \leq \cs(d+\ln{\left(1/\delta \right)})/m,
\end{align*}
implying the term in \cref{highprobabilityeq5} conditioned on $ E_{2} $  is at most $\delta/\cff$. Consequently, we have shown that \cref{highprobabilityeq5} happens with probability at most $\delta/\cff+\delta/\cnn$ over $\rS_0$ and $ \rS_{l}$ for any realization $S_{j}$ of $\rS_{j}$. Therefore, by taking expectation with respect to $\rS_{j}$ in \cref{highprobabilityeq5} and $ \rS_{0},\rS_{j},\rS_{l} $ being independent,  we get the same upper bound, which upper bounds each term in \cref{highprobabilityeq7} with probability $ \delta/\cff+\delta/\cnn $. This implies that \cref{highprobabilityeq7} is at most $20(\delta/\cff+\delta/\cnn)=\delta$, which concludes the induction step and the proof. 

Thus, we now show that with probability at least $1-\delta/\cff$ over $ \rS_{l} $ 
\begin{align}\label{highprobabilityeq11}
  \max_{\stackrel{}{\rf\in\cA(l,T,r)}}  \negmedspace    &\p_{\rx\sim \cD}
  \bigg[\sum_{i=1}^{t}\negmedspace  \ind\{h_{\rf,i}(\rx)=c(\rx)\}/t\negmedspace <\negmedspace \frac{3}{4}\bigg |
   \negmedspace  \negmedspace \negmedspace  \negmedspace \negmedspace \negmedspace \negmedspace\sum_{\text{\quad\quad}f\in \cA(j,T,r)} \negmedspace \negmedspace \negmedspace \negmedspace\negmedspace \negmedspace  \negmedspace \negmedspace \negmedspace \negmedspace \frac{\ind\{\sum_{i=1}^{t}\negmedspace  \ind\{h_{f,i}(\rx)=c(\rx)\}/t\geq \frac{3}{4}\}}{|\cS(j,T)|}<\frac{3}{4}
  \bigg] \leq \frac{1}{80\cdot6(\ln{(\cnn )}+1)}
\end{align}
for realizations $ S_{0} $ of $\rS_{0}\in E_{2}$.

To this end, let $ S_{0} $ be a realization of $ \rS_{0} \in E_{2}$. We consider the set $$A=\left\{x\in\cX:\sum_{f\in \cA(j,T,r)} \ind\{\sum_{i=1}^{t} \ind\{h_{f,i}(\rx)=c(\rx)\}/t\geq 3/4\}/|\cS(j,T)|<3/4\right\}.$$ Now, let $\cD(\cdot | A)$ be the conditional probability of $\cD$ restricted to $A$. We can then rewrite the first expression of \cref{highprobabilityeq11} as:
\begin{align}\label{highprobabilityeq13}
  \negmedspace\negmedspace\negmedspace\negmedspace
  &\max_{\stackrel{}{\rf\in\cA(l,T,r)}} 80\p_{\rx\sim \cD}
  \bigg[ \sum_{i=1}^{t}\negmedspace  \ind\{h_{\rf,i}(\rx)=c(\rx)\}/t\negmedspace <\negmedspace \frac{3}{4}\bigg |
   \negmedspace  \negmedspace \negmedspace  \negmedspace \negmedspace \negmedspace \negmedspace\sum_{\text{\quad\quad}f\in \cA(j,T,r)} \negmedspace \negmedspace \negmedspace \negmedspace\negmedspace \negmedspace  \negmedspace \negmedspace \negmedspace \negmedspace \frac{\ind\{\sum_{i=1}^{t}\negmedspace  \ind\{h_{f,i}(\rx)=c(\rx)\}/t\geq \frac{3}{4}\}}{|\cS(j,T)|}
   \negmedspace<\negmedspace\frac{3}{4}\bigg]\nonumber
  \\
  =
  &\max_{\rf\in \cA(l,T,r)} 80\p_{\rx\sim \cD(\cdot| A)}
  \left[
  \sum_{i=1}^{t} \ind\{h_{\rf,i}(\rx)=c(\rx)\}/t< 3/4
 \right]. 
\end{align}
We now consider the training sequence $\rS_{l}\sqcap A=[\rS_{l,i}| \rS_{l,i,1}\in A]$, i.e., training examples in $(x,y)\in\rS_{l}$ where $x\in A$. We first notice that $|\rS_{l}\sqcap A|=\sum_{i=1}^{m/6} \ind\{{\rS_{l,i,1}}\in A\}$ has an expected size of at least $(m/6)\cdot \cs(d+\ln{\left(1/\delta \right)})/(80m)=\cs(d+\ln{\left(1/\delta \right)})/480$, as we have assumed a  $ S_0 $  of $\rS_0\in E_{2}$ such that $\cs(d+\ln{\left(1/\delta \right)})/(80m)\leq \p_{\rx\sim \cD}[\sum_{f\in \cA(j,T,r)} \ind\{\sum_{i=1}^{t} \ind\{h_{f,i}(\rx)=c(\rx)\}/t\geq 3/4\}/|\cS(j,T)|<3/4]$, and we have that $ \rS_{l,i,1}\sim \cD $. Furthermore, since $ |\rS_{l}\sqcap A| $  is a sum of i.i.d. Bernoulli random variables, we have by Chernoffs inequality, and for  $\cs\geq\ln{\left(2\cdot\cff \right)}8\cdot480=\ln{(2\cdot\cff )}3840$, (which holds since $\cs= \max\{32C^{2}\cdot 960,3840\ln{\left(160 \right)}\}(2\cdot 80\cdot 6 \left(\ln{\left(40 \right)}+1\right))^2$) and $d+\ln{\left(1/\delta \right)}\geq1+ \ln{\left(1/\delta \right)}$,  that
\begin{align*}
  \p_{\rS_{l}}\left[|\rS_{l}\cap T|> \cs(d+\ln{\left(1/\delta \right)})/960\right]\leq 1-\exp\left(-\left(\cs(d+\ln{\left(1/\delta \right)})/480\right)/8\right)\leq1- \delta/(2\cdot\cff).
\end{align*}
Thus, by the law of total probability, and using the above, we have
\begin{align}\label{highprobabilityeq10}
  &\p_{\rS_{l}}\bigg[ \max_{\rf\in \cA(l,T,r)} 80\p_{\rx\sim \cD(\cdot| A)}
  \left[
  \sum_{i=1}^{t} \ind\{h_{\rf,i}(\rx)=c(\rx)\}/t< 3/4
 \right]\leq \frac{1}{80\cdot6(\ln{(\cnn )}+1)}\bigg]\nonumber\\ 
  &\leq 
  \p_{\rS_{l}}\bigg[ \max_{\rf\in \cA(l,T,r)} 80\p_{\rx\sim \cD(\cdot| A)}
  \left[
  \sum_{i=1}^{t} \ind\{h_{\rf,i}(\rx)=c(\rx)\}/t< 3/4
 \right]\leq \frac{1}{80\cdot6(\ln{(\cnn )}+1)}\nonumber\\&
 \quad\quad\quad\quad\quad\quad\quad\quad\quad\quad\quad\quad\quad\quad\quad\quad\quad\quad\bigg| |\rS_{l}\sqcap A|> \cs(d+\ln{\left(1/\delta \right)})/960 \Bigg]+ \delta/(2\cdot\cff)
\end{align}
We now use that $1/2=(\sum_{i=1}^{t} \ind\{h_{\rf,i}(\rx)=c(\rx)\}/t+\sum_{i=1}^{t} \ind\{h_{\rf,i}(\rx)\not=c(\rx)\}/t)/2$ and the definition of $ \cD_{c} $ as the distribution over $ (\cX\times \cY) $, where the examples have distribution given by $ \rx,c(\rx) $, with  $ \rx\sim \cD $ to obtain:
\begin{align}\label{highprobabilityeq12}
  &\max_{\rf\in \cA(l,T,r)} \p_{\rx\sim \cD(\cdot | A)}
  \left[
  \sum_{i=1}^{t} \ind\{h_{\rf,i}(\rx)=c(\rx)\}/t< 3/4 \right] 
   \\
  =  &\max_{\rf\in \cA(l,T,r)} \p_{\rx\sim \cD(\cdot | A)}
  \left[
  \left(\sum_{i=1}^{t} \ind\{h_{\rf,i}(\rx)=c(\rx)\}/t-\sum_{i=1}^{t} \ind\{h_{\rf,i}(\rx)\not=c(\rx)\}/t\right)/2< 3/4-1/2 \right] \nonumber
  \\
  =  &\max_{\rf\in \cA(l,T,r)} \p_{\rx\sim \cD(\cdot | A)}
  \left[
  \sum_{i=1}^{t} h_{\rf,i}(\rx)c(\rx)/(2t)< 1/4 \right]\nonumber 
  \\
  =  &\max_{\rf\in \cA(l,T,r)} \negmedspace\p_{\rx\sim \cD(\cdot | A)}
  \left[
  \sum_{i=1}^{t} h_{\rf,i}(\rx)c(\rx)/t< 1/2 \right] =\negmedspace\negmedspace\max_{\rf\in \cA(l,T,r)} \negmedspace\negmedspace\p_{(\rx,\ry)\sim \cD_{c}(\cdot | A)}
  \left[
  \sum_{i=1}^{t} h_{\rf,i}(\rx)\ry/t< 1/2 \right], \nonumber
\end{align}
where $ \cD_{c}(\cdot \mid A) $ is the measure such that $ \p_{\rx\sim \cD(\cdot\mid A)}\left((\rx,\c(\rx))\in B\right) $ for $ B\subset (\cX\times \{  -1,1\} ).$ 
Further, since $\cA\left(l,T,r\right)$ are the hypotheses that \cref{alg:AdaBoostSample} outputs on the training sequences in $\cS(S_{0},\rS_{l}\sqcup T)$ and random string $ r $,  and by \cref{lemma:adaboostsampleeasyproperties} we know that an outputted majority vote $\rf\in \cA\left(l,T\right)$, where $ \rf=\cA(\tilde{\rS},r) $ for $ \tilde{\rS} \in \cS(S_{0},\rS_{l}\sqcup T)$, is such that its in sample $ 3/4 $-margin loss   $\sum_{(x,y)\in \tilde{S}}  
\ind\{\sum_{i=1}^{t} h_{\rf,i}(\rx)c(\rx)/t\leq 3/4\} < 1/|\tilde{S}|$, is zero. Thus as $\rS_{l}\sqcap A$ is a sub training sequence of any $ \tilde{S}\in \cS(S_{0},\rS_{l}\sqcup T) $, we conclude that $ \rf\in  \cA\left(l,T\right)$  also has $ 3/4 $-margin loss equal to zero on $\rS_{l}\sqcap A$. 

We further notice that each example in $\rS_{l}\sqcap A$ has distribution $\cD_{c}(\cdot| A)$. Since we have just concluded that any $\rf\in \cA\left(l,T\right)$ has $ 3/4 $-margin loss equal to zero on $\rS_{l}\sqcap A$, and $\rf\in \dlh$, with $ t=\lceil20^{2}\ln{(|\tilde{S}|)}/2 \rceil $  by \cref{lemma:adaboostsampleeasyproperties} (note that $ |\tilde{S}| $ is the same for any $ \tilde{S} \in \cS(S_{0},\rS_{l}\cup T)$, so they are all in $ \dlh $ ),  we get by invoking \cref{lem1} with $\rS_{l}\sqcap A$, $ \dlh $, $\gamma=1/2$, $\xi=3/2$, and failure probability $\delta/(2\cdot \cff)$, combined with \cref{highprobabilityeq12}, that with probability at least $1-\delta/(2\cdot \cff)$ over $\rS_{l}\sqcap A$,conditioned on $ |\rS_{l}\sqcap A|>\cs(d+\ln{\left(1/\delta \right)})/960 $,  we have that
\begin{align}\label{highprobabilityeq9}
  &\max_{\rf\in \cA(l,T,r)} \p_{\rx\sim \cD(\cdot | A)}
  \left[
  \sum_{i=1}^{t} \ind\{h_{\rf,i}(\rx)=c(\rx)\}/t< 3/4 
  \right]\nonumber\\
  =
  &\max_{\rf\in \cA(l,T,r)} \p_{(\rx,\ry)\sim \cD_{c}(\cdot | A)}
  \left[
  \sum_{i=1}^{t} h_{\rf,i}(\rx)\ry/t< 1/2 \right]\nonumber 
  \\
  \leq 
  &\max_{\rf \in \cA(l,T,r)}\left\{ 
    \p_{\rx\sim \rS_{l}\sqcap A}
  \left[
  \sum_{i=1}^{t} h_{\rf,i}(\rx)c(\rx)/t\leq 3/4 \right] 
  +C\sqrt{\frac{32d}{|\rS_{l}\sqcap A|}}
  +\sqrt{\frac{2\ln{\left(4\cdot \cff/\delta \right)}}{|\rS_{l}\sqcap A|}}
  \right\}\nonumber
  \\
  \leq
  &C\sqrt{\frac{32d}{|\rS_{l}\sqcap A|}}
  +\sqrt{\frac{2\ln{\left(2\cdot \cff/\delta \right)}}{|\rS_{l}\sqcap A|}}.
\end{align}
For $|\rS_{l}\sqcap A|> \cs(d+\ln{\left(1/\delta \right)})/960$, we have
\begin{align*}
  \frac{32d}{|\rS_{l}\sqcap A|}\leq \frac{32\cdot960}{\cs}
 \quad 
 \text{ and }
\quad 
\frac{2\ln{\left(4\cdot \cff/\delta\right)}}{|\rS_{l}\sqcap A|}\leq \frac{3840\ln{\left(4\cdot \cff \right)}}{\cs}.
\end{align*}
Thus, since $\cs= \max\{32C^{2}\cdot 960,3840\ln{\left(160 \right)}\}(2\cdot 80\cdot 6 \left(\ln{\left(40 \right)}+1\right))^2$, that is, $\cs= \max\{32C^{2}\cdot 960,3840\ln{\left(4\cdot \cff \right)}\}(2\cdot 80\cdot 6 \left(\ln{\left(\cnn \right)}+1\right))^2$,  we get that 
\begin{align*}
  C\sqrt{\frac{32d}{|\rS_{l}\sqcap A|}}
  +\sqrt{\frac{2\ln{\left(4\cdot \cff/\delta \right)}}{|\rS_{l}\sqcap A|}}\leq
  (80\cdot 6 \left(\ln{\left(\cnn \right)}+1\right))^{-1}
\end{align*}
Thus, we conclude by \cref{highprobabilityeq10} and \cref{highprobabilityeq9}, combined with the above bound on the last expression in \cref{highprobabilityeq9}, that
\begin{align*}
  \p_{\rS_{l}}\negmedspace\bigg[ \max_{\rf\in \cA(l,T,r)} 80\p_{\rx\sim \cD(\cdot| A)}
  \left[
  \sum_{i=1}^{t} \ind\{h_{\rf,i}(\rx)=c(\rx)\}/t< 3/4
 \right]\leq \frac{1}{80\cdot6(\ln{(\cnn )}+1)}\bigg]\geq 1- \delta/\cff.
\end{align*}
This concludes \cref{highprobabilityeq11} (using \cref{highprobabilityeq13}) and, as argued, \cref{highprobabilityeq6} and  \cref{highprobabilityeq11},  combined upper bounds the term in \cref{highprobabilityeq5} with the condition on $ \rS_{0}\in E_{2} $, by at most $\delta/\cff$. Since the term with $ \rS_{0}\in E_{1} $ and $ \rS_{0}\in E_{3} $ in \cref{highprobabilityeq5}  was upper bounded by respectively $ 0 $ and $ \delta/\cnn $, which implies that \cref{highprobabilityeq5} is bounded by $ \delta/\cff +\delta/\cnn=\delta/20$, we obtain that \cref{highprobabilityeq7} is upper bounded by $ \delta $ as explained, and concludes the induction step and the proof.
\end{proof}

\section{Properties of $\as$ }\label{sec:propAdaBoostSample}
In this section, we present \as \cref{alg:AdaBoostSample} and prove the properties of $ \cA $  stated in \cref{lemma:adaboostsampleeasyproperties} and \cref{lemmaAdaBoostSampleefficent}. In the presentation of the algorithm, we use the following notation: For $r \in [0:1]$, and $C$ a cumulative distribution function for a distribution $D$ over $\{ 1:m \}$, we define $C^{-1}(r)$ to be the index $l$ in $\{ 1:m \}$ such that $C(l-1)\leq r$  and $C(l)> r$, we take $ C(0)=0 $. For a vector $ r\in [0:1]^{s} $, we let $ C^{-1}(r) $ denote  $ C^{-1} $ applied entrywise on $ r $, i.e., $C^{-1}(r)_{i}=C^{-1}(\rr_{i})$. With this definition, we have for $ \rr\sim[0:1]^{s} $  that $ C^{-1}(\rr)\sim \cD^{s} $ by the $ \rr_{i} $'s being i.i.d. $ \sim[0,1] $. We sometimes write $ (x,y)\gets(a,b) $ which means $ x=a $ and $ y=b $. We now present  \cref{alg:AdaBoostSample}.
\vspace{-0.3cm}
\begin{algorithm}[H]
\caption{$ \as $ $\cA$}\label{alg:AdaBoostSample}
\begin{algorithmic}[1]
\State\textbf{Input:} Training sequence $S\negmedspace\in\negmedspace(\cX\negmedspace\times\negmedspace \cY)^{*}\negmedspace,$ string $r\negmedspace \in\negmedspace ([0,1]^{*})^{*}\negmedspace\negmedspace.$ 
\State\textbf{Output:} Majority vote with margin $ 3/4.$
\State $(m,n,s) \gets (|S|,|r|,|r_{1}|)$ \label{alg:AdaBoostSamplemset} \label{alg:AdaBoostsamplesetn}\label{alg:AdaBoostsamplesets}
\State $(D_1,C_{1})\gets ((\frac{1}{m},\ldots,\frac{1}{m}),(\frac{1}{m},\ldots,1))$ \label{alg:AdaBoostSamplesetDone} \label{alg:AdaBoostSamplesetCone}
\State $t \gets \left\lceil 20^{2}\ln(m)/2\right\rceil$\label{alg:AdaBoostSampletset} 
\State $f \gets [0]^{t}$\label{alg:AdaBoostSamplesetf} 
\State $(\theta,\gamma)\gets (3/4,9/20)$ \label{alg:AdaBoostSamplethetaset}  \label{alg:AdaBoostSamplegammaset}
\State $l=\sum_{j=1}^{0}\ind_{\alpha_{i}>0}\gets 0$ \label{alg:AdaBoostSamplesettingl}
\State\textbf{for $ i \in \{ 1:n \} $ } \label{alg:AdaBoostSampleforloop}
\State\hspace{0.5cm}\vline\quad $S_i \gets \C_i^{-1}(r_{i})$\label{alg:AdaBoostSample:sample} 
\State\hspace{0.5cm}\vline\quad$h_i\leftarrow \erm (S_i )$\label{alg:AdaBoostSampletraining} 
\State\hspace{0.5cm}\vline\quad$1/2-\gamma_{i}=\loss_i=\ls_{D_{i}}(h_i)$\label{alg:AdaBoostSamplelossoftrained} 
\State\hspace{0.5cm}\vline\quad\textbf{if $1/2-\gamma_{i}=\loss_{i}\leq 1/2-\gamma$ \label{alg:AdaBoostsamplenonzeroweight} then}
\State\hspace{1cm}\vline\quad$(\alpha_i,\sum_{j=1}^{i}\ind_{\alpha_j>0})\gets(\frac{1}{2}\ln{\left(\frac{1+2\gamma}{1-2\gamma}\right)- \frac{1}{2}\ln{\left(\frac{1+\theta}{1-\theta} \right)}},\sum_{j=1}^{i-1}\ind_{\alpha_j>0}+1)$\label{alg:AdaBoostSamplealphaset}
\State\hspace{1cm}\vline\quad\textbf{if $\sum_{j=1}^{i}\ind_{\alpha_j>0}\leq t$ \label{alg:AdaBoostSampleearlystop} then}
\State\hspace{1.5cm}\vline\quad$(\gamma_{f,l},\epsilon_{f,l},h_{f,l},\alpha_{f,l},D_{f,l})\gets (\gamma_{i},\epsilon_{i},h_{i},\alpha_{i},D_{i})$\label{alg:AdaBoostSampleupdatel}
\State\hspace{1.5cm}\vline\quad$f\gets f+\alpha_{f,l} h_{f,l}$\label{alg:AdaBoostSampleaddsf}
\State\hspace{0.5cm}\vline\quad\textbf{else }
\State\hspace{1cm}\vline\quad$(\alpha_i,\sum_{j=1}^{i}\ind_{\alpha_j>0})\gets(0,\sum_{j=1}^{i-1}\ind_{\alpha_j>0})$ \label{alg:AdaBoostSamplezeroweight}
\State\hspace{0.5cm}\vline\quad\textbf{for $ i \in \{ 1:m \}$ }\label{alg:AdaBoostSampleupdatestart}
\State\hspace{1cm}\vline\quad$D_{i+1}(j)\gets D_{i}(j)\exp(-\alpha_i h_i(S_{i,1}) S_{i,2})$\label{alg:AdaBoostSamplegetweights}
\State\hspace{0.5cm}\vline\quad$Z_{i}\gets \sum_{j=1}^{m} D_{i+1}(j)$\label{AdaBoostSamplesetnormalization}
\State\hspace{0.5cm}\vline\quad\textbf{if $1/2-\gamma_{i}=\loss_{i}\leq 1/2-\gamma$ and $l=\sum_{j=1}^{n}\ind_{\alpha_j>0}\leq t$ then}
\State\hspace{1cm}\vline\quad $ Z_{f,l}\gets Z_{i} $\label{alg:AdaBoostSamplegetsZ} 
\State\hspace{0.5cm}\vline\quad$D_{i+1}\gets D_{i+1}/Z_{i}$\label{alg:AdaBoostSamplenormalizing}
\State\hspace{0.5cm}\vline\quad$C_{i+1}(1)\gets D_{i+1}(1)$\label{alg:AdaBoostSamplecummlativeupdate}
\State\hspace{0.5cm}\vline\quad\textbf{for $ j\in\{ 2:m \} $  }
\State\hspace{1cm}\vline\quad$C_{i+1}(j)\gets C_{i+1}(j-1)+D_{i+1}(j)$  \label{alg:AdaBoostsampleendforloop}
\State\textbf{if $\sum_{j=1}^{n}\ind_{\alpha_j>0}\geq t$ then}\label{alg:AdaBoostSamplecheckt}
\State\hspace{0.5cm}\vline\quad$f \gets \frac{f}{t\alpha_{f,1}}=\sum_{j=1}^{t} \frac{\alpha_{f,j}h_{f,j}}{\sum_{l=1}^{t}{\alpha_{f,l}}}=\sum_{j=1}^{t} \frac{h_{f,j}}{t}$ \label{alg:AdaBoostsample:output1} 
\State\hspace{0.5cm}\vline\quad\textbf{if $\sum_{x\in S} \ind\{\sum_{i=1}^{t} h_{f,i}(x)c(x)/t\leq \theta \}/m < \frac{1}{m}$ then}\label{alg:AdaBoostSamplecheckingmargin}
\State\hspace{1cm}\vline\quad\textbf{return $f$}\label{alg:AdaBoostSampleoutputgood}
\State\textbf{return $ \erm(S) $ }\label{alg:AdaBoostsample:output2}\label{alg:AdaBoostsample:output3}
\end{algorithmic}
\end{algorithm}
Line~\ref{alg:AdaBoostSamplemset} reads, the number of training examples $(x_i,y_i)\in\cX\times \cY$ in $S$, the number of entries in $r$, and number of entries in any $r_i$. 
Line~\ref{alg:AdaBoostSamplesetDone} initializes both, the distribution and cumulative distribution over $[m]$. 
Line~\ref{alg:AdaBoostSampletset} sets the number of hypotheses in the final majority vote for the early stopping criteria. 
Line~\ref{alg:AdaBoostSamplesetf} initializes $f$ as the $0$ function on $\mathcal{X}$/ an empty array of size $t.$ 
Line~\ref{alg:AdaBoostSamplethetaset} sets the target margin for the final majority vote and  sets target error for voters.
Line~\ref{alg:AdaBoostSamplesettingl} sets the counter for early stopping to $0$, represented both as $ l $ and the more descriptive $ \sum_{j=1}^{0}\ind_{\alpha_j>0}.$ 
Line~\ref{alg:AdaBoostSample:sample} decides $S_{i}\sqsubset S$ based on  $C^{-1}_{i}$ and $r_i\in [0,1]^{s}$, where $|S_{i}|=|r_{i}|=s$. 
Line~\ref{alg:AdaBoostSampletraining} runs $ \erm $ on $S_i$ to obtain hypothesis $ h_{i}.$ 
Line~\ref{alg:AdaBoostSamplelossoftrained} calculates the loss of the trained hypothesis $ h_{i}.$ 
Line~\ref{alg:AdaBoostsamplenonzeroweight} includes hypotheses with a loss smaller than $1/2-\gamma=1/2-9/20=1/20$. 
Line~\ref{alg:AdaBoostSamplealphaset}  sets $\alpha_{i} $ and updates the counter $ \sum_{j=1}^{i}\ind_{\alpha_j>0} $  with a plus $ 1 $ as this was a successful boosting step. 
Line~\ref{alg:AdaBoostSampleearlystop} ensures that the majority vote will consist of at most $t$ hypotheses by employing early stopping. 
Line~\ref{alg:AdaBoostSampleupdatel} renames $ \gamma_{i},\epsilon_{i},h_{i},\alpha_{i},D_{i}$. 
Line~\ref{alg:AdaBoostSampleaddsf} adds $\alpha_{f,l} h_{f,l}$ to $f$ / the array of size $t$, where we recall that $ l= \sum_{j=1}^{i}\ind_{\alpha_j>0} $. 
Line~\ref{alg:AdaBoostSamplezeroweight} ignores hypotheses with a loss larger than $1/2-\gamma$ by setting $ \alpha_{i}=0 $, and the counter for successful boosting steps is not increased in this case. 
Line~\ref{alg:AdaBoostSamplegetweights} updates $ D_i $, increasing the weight of misclassified points, where $ S_{i,1}\in \cX $ is the feature of $ i $'th point and  $S_{i,2}\in \cY $ is the label of the $ i $'th point. 
Line~\ref{AdaBoostSamplesetnormalization} calculates the normalization.
Line~\ref{alg:AdaBoostSamplegetsZ} rename the normalization term for successful boosting rounds. 
Line~\ref{alg:AdaBoostSamplenormalizing} normalizes $D_{i+1}$ to a probability distribution. 
Line~\ref{alg:AdaBoostSamplecummlativeupdate} to \ref{alg:AdaBoostsampleendforloop} calculates $C_{i+1}$. 
Line~\ref{alg:AdaBoostsample:output1} normalizes $f$. 
Line~\ref{alg:AdaBoostSamplecheckingmargin} checks for sufficient margins. 
Line~\ref{alg:AdaBoostSampleoutputgood} outputs $ f $ with $ \theta=3/4 $ margins for all $ x \in S$. 
Line~\ref{alg:AdaBoostsample:output2} boosting failed and $ \erm(S) $ is returned.

As commented on in the footnote of \cref{samplingfromrows} all other than the Line~\ref{alg:AdaBoostSampletraining} calling the $ \erm $, and Line~\ref{alg:AdaBoostSamplegetweights} making inference do only require working over the indexes $ m $  of the given training sequence, so these steps do not require reading the training examples explicitly. Reading the training examples in the call to the $ \erm $ and doing inference is captured in $ \Utrain $ and $ \Uinf.$

We begin with the proof of \cref{lemma:adaboostsampleeasyproperties}, which makes some observations about $ \cA$. For convenience, we restate \cref{lemma:adaboostsampleeasyproperties} here before providing its proof. 

\lemmaadaboostsampleeasyproperties*

\begin{proof}[Proof of \cref{lemma:adaboostsampleeasyproperties}]
The claim that $\cA(S,r)\in\dlh$ for $t=\left\lceil20^{2}\ln{(m)}/2\right\rceil$ follows from Line~\ref{alg:AdaBoostSamplealphaset} and Line~\ref{alg:AdaBoostSampleearlystop}, which implies that $f$ will never be updated more than $t$ times. Line~\ref{alg:AdaBoostSamplecheckt} ensures there are always at least $t$ hypotheses if Line~\ref{alg:AdaBoostsample:output1} is the output. If Line~\ref{alg:AdaBoostsample:output3} is the output, we can write it as $t$ copies of the same hypothesis and then normalize by $t$. 

That the output $f$ is such that $\sum_{x\in S} \ind\{\sum_{i=1}^{t} h_{f,i}(x)c(x)/t\leq \theta \}/m <1/m$, with $ \theta=3/4 $,  follows from Line~\ref{alg:AdaBoostSamplecheckingmargin}, ensuring this condition holds if $f$ in Line~\ref{alg:AdaBoostsample:output1} is the output. If $f$ is outputted in Line~\ref{alg:AdaBoostsample:output3}, it holds since $f$ is the output of a $\erm$-learner on the entire training sequence $S$, and therefore has a margin of $ 1 $ on all points.  
\end{proof}

We now move on to show \cref{lemmaAdaBoostSampleefficent}, i.e, that $ \cA $, with high probability, never runs $ \erm(S) $ and, in this case, is efficient to run. For convenience, we restate it here.  

\lemmaAdaBoostSampleefficent*

For the proof of this lemma, we need the following two lemmas.
The first lemma says that the output of $ \cA $, with high probability, is $ f $ from Line~\ref{alg:AdaBoostsample:output1} in Line~\ref{alg:AdaBoostSampleoutputgood}.

\begin{lemma}\label{adaboostsamplefewroundslemma}
  For hypothesis class $\cH$ of VC-dimension $d$, target concept $ c\in \cH $, training sequence 
   $S\in(\cX\times \cY)^{m}$ realizable by $c$ and of size $m\in \mathbb{N}$, failure parameter $ 0<\delta <1$,  random string $\rr \sim ([0:1]^{550d})^{n}$ of length $n\in\mathbb{N}$, boosting rounds $t=\left\lceil20^{2}\ln(m)/2\right\rceil$, as in Line~\ref{alg:AdaBoostSampletset}, then for  $n\geq 6\left\lceil20^{2}\ln{\left(8m/\delta \right)}/2\right\rceil$ we have, with probability at least $1-(\delta/(8m))^{20}$ over $ \rr $, that \cref{alg:AdaBoostSample} run on $S,\rr$, $\cA(S,\rr)$ outputs $f$ from Line~\ref{alg:AdaBoostsample:output1} in Line~\ref{alg:AdaBoostSampleoutputgood}.
  \end{lemma}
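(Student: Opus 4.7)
The plan is to show that the output equals the $f$ constructed in Line~\ref{alg:AdaBoostsample:output1} provided two events occur: (i) the for-loop records at least $t$ successful boosting rounds so the counter triggers Line~\ref{alg:AdaBoostSamplecheckt}, and (ii) the $3/4$-margin guard in Line~\ref{alg:AdaBoostSamplecheckingmargin} is passed. I will establish each event with the required probability and take a union bound.

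\textbf{Per-round success via \cref{uniformconvergencemain}.} The string $\rr_i$ is independent of the history $\rr_1,\ldots,\rr_{i-1}$, which (together with $S$) deterministically determines $D_i$ and $C_i$. Therefore, conditionally on the history, the sample $S_i=C_i^{-1}(\rr_i)$ is i.i.d.\ of size $550d$ from the realizable distribution $D_{i,c}$. Applying \cref{uniformconvergencemain} with failure parameter $\delta_0=2^{-d}$ gives an error bound of $2(d\log_2(2e\cdot 550)+d+1)/(550d)\le 1/20=1/2-\gamma$ on $\erm(S_i)$; this is precisely why the constant $550$ was chosen. Thus $\epsilon_i\le 1/2-\gamma$ with conditional probability at least $p:=1-2^{-d}\ge 1/2$, in which case the round increments the counter through Lines~\ref{alg:AdaBoostsamplenonzeroweight}--\ref{alg:AdaBoostSamplealphaset}.

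\textbf{Enough successes.} Let $\rN$ be the number of successful rounds after the for-loop. Although the rounds are not independent, the conditional success probability is uniformly at least $p$, so by the standard coupling $\rN$ stochastically dominates $\mathrm{Bin}(n,p)$. With $n\ge 6\lceil 200\ln(8m/\delta)\rceil \ge 6t$ and $p\ge 1/2$, we have $np\ge 3t$ and $np\ge 600\ln(8m/\delta)$. A multiplicative Chernoff bound with deviation $2/3$ gives
\[
\p[\rN<t]\ \le\ \p[\mathrm{Bin}(n,p)\le np/3]\ \le\ \exp(-2np/9)\ \le\ (8m/\delta)^{-400/3},
\]
which is far smaller than $(\delta/(8m))^{20}$.

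\textbf{Margin guard is passed.} On $\{\rN\ge t\}$, Lines~\ref{alg:AdaBoostSampleearlystop}--\ref{alg:AdaBoostSampleaddsf} have collected exactly $t$ voters $h_{f,1},\ldots,h_{f,t}$, each with weak-learning edge at least $\gamma=9/20$ under its distribution $D_{f,j}$, and Line~\ref{alg:AdaBoostsample:output1} renormalizes $f$ to $\sum_{j=1}^{t} h_{f,j}/t$. Invoking \cref{AdaBoostSampleMarginLemma3} with $\gamma=9/20$ and $\theta=3/4$ gives $\sum_{x\in S}\ind\{\sum_{j=1}^{t} h_{f,j}(x)c(x)/t\le 3/4\}/m\le (24/25)^t$. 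Since $t\ge 200\ln m$ implies $(24/25)^t\le m^{-200\ln(25/24)}<1/m$, and the left-hand side is an integer multiple of $1/m$, it must actually equal $0<1/m$, so the guard is satisfied and the algorithm returns through Line~\ref{alg:AdaBoostSampleoutputgood}. Union-bounding the failure of Step~2 with the (deterministic) success of Step~3 proves the claimed $1-(\delta/(8m))^{20}$ bound.

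\textbf{Main difficulty.} The subtle point is the dependence between rounds: $D_i$ depends on the entire history, so the successes are not i.i.d.\ Bernoullis. The resolution is that the randomness $\rr_i$ consumed in round $i$ is itself independent of the past, which lets one lower-bound the conditional success probability uniformly and invoke stochastic dominance for Chernoff. The remaining computations---the fact that the constant $550$ is tight enough in \cref{uniformconvergencemain} to produce an edge of $\gamma=9/20$ with failure $2^{-d}$, and that $(24/25)^t < 1/m$ for $t=\lceil 200\ln m\rceil$---are numerical checks.
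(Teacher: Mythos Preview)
Your proof is correct and follows essentially the same route as the paper: per-round success probability $\ge 1-2^{-d}$ via \cref{uniformconvergencemain} with $550d$ samples, a Chernoff-type tail bound to get $\rN\ge t$, and then \cref{AdaBoostSampleMarginLemma3} to pass the margin guard deterministically. The only cosmetic difference is that the paper spells out the moment-generating-function recursion explicitly (conditioning on $\rr_1,\ldots,\rr_{i-1}$ and bounding $\e_{\rr_i}[\exp(t\ind_{\alphar_i>0})]$), whereas you invoke stochastic dominance by $\mathrm{Bin}(n,p)$; these are equivalent, and your phrasing is arguably cleaner.
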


The next lemma shows that if $ f $ from Line~\ref{alg:AdaBoostsample:output1} in Line~\ref{alg:AdaBoostSampleoutputgood} is the output of $ \cA $, then $ \cA $ uses few operations.   

\begin{lemma}\label{adaboostsampleruntime}
  For hypothesis class $\cH$ of VC-dimension $d$, target concept $ c\in \cH $, training sequence 
  $S\in(\cX\times \cY)^{m}$ realizable by $c$, and of size $m\in\mathbf{N}$, and string $r\in([0:1]^{s})^{n}$ with $s,n\in\mathbb{N}$, then if the output  $f=\cA(S,r)$ of \cref{alg:AdaBoostSample} is $f$ from Line~\ref{alg:AdaBoostsample:output1} outputted in Line~\ref{alg:AdaBoostSampleoutputgood}, then $\cA(S,r)$ uses fewer operations than: 
\begin{align*}
  n\left(O\left(m+s\ln{\left(m \right)}\right)+\Utrain(s)+3m\Uinf\right).
\end{align*}  
\end{lemma}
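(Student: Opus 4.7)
\medskip

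The plan is to walk line-by-line through \cref{alg:AdaBoostSample} and add up the operation count, noting which steps dominate. Since we are given that the output comes from Line~\ref{alg:AdaBoostsample:output1} (reached via Line~\ref{alg:AdaBoostSampleoutputgood}), the algorithm executes the full \textbf{for} loop of Line~\ref{alg:AdaBoostSampleforloop}--\ref{alg:AdaBoostsampleendforloop} up to $n$ times and then performs a single margin check and normalization. The pre-loop initialization (Lines~\ref{alg:AdaBoostSamplemset}--\ref{alg:AdaBoostSamplesettingl}) costs $O(m)$ to read $|S|$ and set up $D_{1}$, $C_{1}$, and $f$, which is dominated by the loop cost.

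Within a single iteration of the main for loop I would account for the operations as follows. Line~\ref{alg:AdaBoostSample:sample} computes $S_{i}=C_{i}^{-1}(r_{i})$; since $C_{i}$ is a cumulative distribution on $\{1{:}m\}$ and $|r_{i}|=s$, each of the $s$ entries of $r_{i}$ can be inverted by binary search in $O(\ln m)$ operations, for a total of $O(s\ln m)$. Line~\ref{alg:AdaBoostSampletraining} incurs $\Utrain(s)$ by definition. Line~\ref{alg:AdaBoostSamplelossoftrained} computes $\ls_{D_{i}}(h_{i})=\sum_{j=1}^{m}D_{i}(j)\ind\{h_{i}(S_{j,1})\neq S_{j,2}\}$; the $m$ inference calls contribute $m\Uinf$ and the remaining comparisons, multiplications and summation contribute $O(m)$. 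The constant time check in Line~\ref{alg:AdaBoostsamplenonzeroweight} and the $\alpha_{i}$ update in Line~\ref{alg:AdaBoostSamplealphaset} cost $O(1)$ thanks to our assumption that $\exp(\cdot)$ and $\ln(\cdot)$ are unit cost. The weight update loop (Lines~\ref{alg:AdaBoostSampleupdatestart}--\ref{alg:AdaBoostSamplegetweights}) requires $h_{i}(S_{j,1})$ for every $j\in\{1{:}m\}$, contributing another $m\Uinf+O(m)$; computing the normalizer $Z_{i}$ (Line~\ref{AdaBoostSamplesetnormalization}), renormalizing $D_{i+1}$ (Line~\ref{alg:AdaBoostSamplenormalizing}), and the cumulative-sum update (Lines~\ref{alg:AdaBoostSamplecummlativeupdate}--\ref{alg:AdaBoostsampleendforloop}) are each $O(m)$. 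Thus each iteration costs $O(m+s\ln m)+\Utrain(s)+2m\Uinf$, and the $n$ iterations of the main loop contribute $n(O(m+s\ln m)+\Utrain(s)+2m\Uinf)$.

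Finally I handle the post-loop block. Under the hypothesis that the output is $f$ from Line~\ref{alg:AdaBoostsample:output1}, the condition in Line~\ref{alg:AdaBoostSamplecheckt} triggered, so there are exactly $t$ stored voters $h_{f,1},\dots,h_{f,t}$. Normalizing $f$ in Line~\ref{alg:AdaBoostsample:output1} costs $O(t)$, and the margin check in Line~\ref{alg:AdaBoostSamplecheckingmargin} evaluates $\sum_{i=1}^{t}h_{f,i}(x)c(x)/t$ for every $x\in S$, requiring $tm$ inference calls and $O(tm)$ arithmetic operations, i.e.\ $tm\Uinf+O(tm)$ in total. The key observation that closes the bound is that because the counter $\sum_{j=1}^{i}\ind_{\alpha_{j}>0}$ can only reach $t$ after at least $t$ successful rounds (Line~\ref{alg:AdaBoostSamplealphaset}), and the loop has $n$ iterations, we have $t\leq n$. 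Therefore the post-loop cost is at most $nm\Uinf+O(nm)$, which when added to the two previous $nm\Uinf$ terms yields precisely the claimed $3nm\Uinf$. Combining everything gives the stated bound $n(O(m+s\ln m)+\Utrain(s)+3m\Uinf)$.

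I do not expect a serious obstacle here: the proof is essentially careful bookkeeping under the stated computational model. The one subtlety worth being explicit about is distinguishing between the two inference sweeps over $S$ per iteration (weighted loss and weight update) and the single additional sweep hidden in the margin check, and then invoking $t\le n$ to fold the margin-check sweep into a third $nm\Uinf$ term rather than incurring a separate $tm\Uinf$ additive term.
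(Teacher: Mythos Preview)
Your proposal is correct and follows essentially the same line-by-line accounting as the paper's own proof: both identify the same per-iteration costs $O(s\ln m)+\Utrain(s)+2m\Uinf+O(m)$ for the main loop and then fold the post-loop margin check into a third $nm\Uinf$ term via the observation $t\le n$. Your write-up is arguably slightly clearer in justifying $t\le n$ explicitly through the counter logic; otherwise the two arguments coincide.
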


With the above two lemmas, we now give the proof of \cref{lemmaAdaBoostSampleefficent}.

\begin{proof}[Proof of \cref{lemmaAdaBoostSampleefficent}]
     \cref{adaboostsamplefewroundslemma} give that $\cA(S,\rr)$  outputs $f$ from Line~\ref{alg:AdaBoostsample:output1} in Line~\ref{alg:AdaBoostSampleoutputgood} with probability at least $1-(\delta/(8m))^{20}$, since $ n\geq 6\lceil20^{2}\ln{(8m/\delta )}/2\rceil $ and $ t=\lceil 20^{2}\ln{(m )}/2\rceil $. For the above parameters and $ s=550d $, \cref{adaboostsampleruntime} implies that
    \begin{align*}
      n\left(O\left(m+s\ln{\left(m \right)}\right)+\Utrain(s)+3m\Uinf\right)= n\left(O\left(m+d\ln{\left(m \right)}\right)+\Utrain(550d)+3m\Uinf\right)
       \end{align*}
which concludes the proof
\end{proof}
We now, proceed to prove \cref{adaboostsamplefewroundslemma} and \cref{adaboostsampleruntime}, starting with the former. In the proof of \cref{adaboostsamplefewroundslemma}, we need the following lemma, whose proof can be found in \cref{appendixpropertiesadaboossample}. The lemma gives an in-sample margin guarantee of \as after $ t $ rounds of successful boosting. Since \as is a slightly modified version of AdaBoost, using early stopping and a fixed learning rate, we thought it was good practice to write out the margin error analysis for $ \cA $ (the steps follow \cite{boostingbookSchapireF12} closely).  

\begin{restatable}{lemma}{AdaBoostSampleMarginLemmathree}\label{AdaBoostSampleMarginLemma3}
Consider \cref{alg:AdaBoostSample} run on a training sequence $S\in\left(\cX\times\cY\right)^{m}$, string $r\in([0:1]^{*})^{*}$, Line~\ref{alg:AdaBoostSampletset} set to a $t\in\mathbb{N}$, Line~\ref{alg:AdaBoostSamplegammaset} set with a $0< \gamma< 1/2$, and Line~\ref{alg:AdaBoostSamplethetaset} set with a $0<\theta <1$ such that $\theta<2\gamma$. If $\sum_{j=1}^{n}\ind_{\alpha_j>0}\geq t$, we have that $f=\sum_{j=1}^{t} \frac{\alpha_{f,j}h_{f,j}}{\sum_{l=1}^{t}{\alpha_{f,l}}}=\sum_{j=1}^{t} h_{f,j}/t$, the function in \cref{alg:AdaBoostSample} Line~\ref{alg:AdaBoostsample:output1} for $ \alpha=(1/2)\ln{\left((1+2\gamma)/(1-2\gamma)\right)- (1/2)\ln{\left((1+\theta)/(1-\theta) \right)}}>0 $  satisfies,
\begin{align*}
  \sum_{x\in S} \ind\{\sum_{i=1}^{t} h_{f,i}(x)c(x)/t\leq \theta \}/m  =\p_{(\rx,\ry)\sim S}\left[\ry f(\rx)\leq \theta \right]\\\leq \big(\exp{((\theta-1)\alpha)}(1/2+\gamma)+\exp{((\theta+1)\alpha)}(1/2-\gamma)\big)^{t},
\end{align*}
which for $ \theta=3/4 $ and $ \gamma=9/20 $, (that satisfies $ \theta/2 =3/8 <\gamma=9/20 $) is at most $ (24/25)^{t} $.  
\end{restatable}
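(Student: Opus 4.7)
The plan is to follow the classical AdaBoost margin-error analysis (as in Schapire–Freund), being slightly careful about how the "successful rounds" are indexed inside $\cA$, since failed boosting rounds leave the reweighting untouched. Since the lemma assumes $\sum_{j=1}^{n}\ind_{\alpha_j>0}\geq t$, we can re-index attention to only the first $t$ successful rounds, which are exactly the ones stored as $h_{f,l},\alpha_{f,l},D_{f,l},Z_{f,l}$ for $l=1,\dots,t$, with $\alpha_{f,l}=\alpha$ for all $l$ by Line~\ref{alg:AdaBoostSamplealphaset}. Note that on failed rounds the update in Line~\ref{alg:AdaBoostSamplegetweights} multiplies $D$ by $\exp(0)=1$ and $Z_i=1$, so restricting to successful rounds loses nothing. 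After normalization in Line~\ref{alg:AdaBoostsample:output1}, the output is $f(x)=\sum_{l=1}^{t}h_{f,l}(x)/t$, and the weighted sum equals $\alpha\sum_{l=1}^{t}h_{f,l}=t\alpha f$.

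First I would apply the standard Markov-type pointwise bound: since $\alpha>0$, whenever $yf(x)\le\theta$ we have $\exp(t\alpha\theta-t\alpha y f(x))\ge 1$, so
\begin{align*}
\ind\{yf(x)\le\theta\} \;\le\; \exp\!\bigl(t\alpha\theta - \alpha y\textstyle\sum_{l=1}^{t} h_{f,l}(x)\bigr).
\end{align*}
Averaging over the training set and factoring the exponential as a product yields
\begin{align*}
\p_{(\rx,\ry)\sim S}[\ry f(\rx)\le\theta] \;\le\; e^{t\alpha\theta}\cdot\frac{1}{m}\sum_{j=1}^{m}\prod_{l=1}^{t}\exp\!\bigl(-\alpha\, y_j\, h_{f,l}(x_j)\bigr).
\end{align*}
Next I would unroll the distribution recursion. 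By Line~\ref{alg:AdaBoostSamplegetweights} and Line~\ref{alg:AdaBoostSamplenormalizing}, restricted to successful rounds, $D_{f,l+1}(j)=D_{f,l}(j)\exp(-\alpha y_j h_{f,l}(x_j))/Z_{f,l}$, and $D_{f,1}(j)=1/m$. Telescoping gives
\begin{align*}
\prod_{l=1}^{t}\exp(-\alpha y_j h_{f,l}(x_j)) \;=\; m\, D_{f,t+1}(j)\prod_{l=1}^{t}Z_{f,l},
\end{align*}
and summing over $j$ (using $\sum_j D_{f,t+1}(j)=1$) collapses the $1/m\sum_j$ on the previous line to $\prod_{l=1}^{t}Z_{f,l}$. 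Thus $\p[\ry f(\rx)\le\theta]\le e^{t\alpha\theta}\prod_{l=1}^{t}Z_{f,l}$.

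Then I would bound each $Z_{f,l}$. Splitting by whether $h_{f,l}(x_j)=y_j$ and using the weak-learning guarantee enforced in Line~\ref{alg:AdaBoostsamplenonzeroweight} (which is why $l$ is counted as successful), $\epsilon_{f,l}=\ls_{D_{f,l}}(h_{f,l})\le 1/2-\gamma$, so
\begin{align*}
Z_{f,l} = (1-\epsilon_{f,l})e^{-\alpha}+\epsilon_{f,l}e^{\alpha} \;\le\; (\tfrac12+\gamma)e^{-\alpha}+(\tfrac12-\gamma)e^{\alpha},
\end{align*}
where the inequality uses $\alpha>0$ so that the expression is increasing in $\epsilon_{f,l}$. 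Multiplying $t$ such bounds together and folding the factor $e^{t\alpha\theta}$ inside yields exactly the claimed bound $\bigl[(\tfrac12+\gamma)e^{(\theta-1)\alpha}+(\tfrac12-\gamma)e^{(\theta+1)\alpha}\bigr]^t$.

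Finally, I would specialize to $\gamma=9/20$ and $\theta=3/4$. Here $\alpha=\tfrac12\ln(19/7)$, so $e^{(\theta-1)\alpha}=(7/19)^{1/8}$ and $e^{(\theta+1)\alpha}=(19/7)^{7/8}$, giving the base $(19/20)(7/19)^{1/8}+(1/20)(19/7)^{7/8}$; a direct numerical check confirms this is at most $24/25$, completing the proof. There is no serious obstacle here; the only mildly delicate point is tracking that failed rounds contribute $Z_i=1$ and $\alpha_i=0$, so the analysis legitimately operates on just the subsequence of successful rounds and on the stored $(h_{f,l},\alpha_{f,l},D_{f,l},Z_{f,l})$.
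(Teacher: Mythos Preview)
Your proposal is correct and follows essentially the same approach as the paper's proof: both re-index to the $t$ successful rounds (noting failed rounds leave $D$ unchanged), apply the exponential/Markov bound $\ind\{yf(x)\le\theta\}\le\exp(t\alpha\theta-\alpha y\sum_l h_{f,l}(x))$, telescope the $D_{f,l}$ recursion to reduce the empirical average to $\prod_l Z_{f,l}$, bound each $Z_{f,l}$ via the weak-learning guarantee and monotonicity (you phrase it as increasing in $\epsilon_{f,l}$, the paper equivalently as decreasing in $\gamma_{f,j}$), and finish with the numerical check at $\theta=3/4,\gamma=9/20$.
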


We also need the following lemma in the proof of \cref{adaboostsamplefewroundslemma}. The following lemma is the uniform convergence lemma for the realizable case of binary classification, which gives a bound on the difference between the in-sample and out-sample error of all consistent $h\in \cH$ simultaneously.

\begin{lemma}\label{uniformconvergence}[\cite{vapnik74theory}, \cite{Blumeruniformconvergence} from \cite{Simons}[Theorem 2]]
 For $0<\delta,\eps<1$, hypothesis class $\cH$ of VC-dimension $d$, target concept $c\in\cH$, distribution $\cD$ over $\cX$, and sample $\rS\sim\cD_{c}^{m}$,  we have with probability at least $1-\delta$ over $\rS$, that for all $h\in \cH$ with $\ls_{\rS}(h)=0$, it holds that
    \begin{align*}
    \ls_{\cD}(h)\leq 2\frac{d\log_{2}{\left(2em/d \right)}+\log_{2}{\left(2/\delta \right)}}{m}.
    \end{align*}
\end{lemma}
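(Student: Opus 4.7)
The plan is to give the classical VC-style uniform convergence proof for the realizable case, using the ghost-sample / symmetrization argument combined with the Sauer--Shelah lemma. Set $\eps = 2(d\log_2(2em/d)+\log_2(2/\delta))/m$, and let $B = \{ \exists h \in \cH : \ls_{\rS}(h) = 0 \text{ and } \ls_{\cD}(h) > \eps \}$. The goal is to show $\Pr_{\rS \sim \cD_c^m}[B] \leq \delta$.

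First I would perform the \emph{symmetrization} (double-sample) step. Draw an independent ghost sample $\rS' \sim \cD_c^m$, and let $B'$ be the event that there exists $h \in \cH$ consistent with $\rS$ but making at least $\eps m / 2$ errors on $\rS'$. A standard Chernoff/Chebyshev argument shows that, conditional on the existence of a bad $h$ with $\ls_{\cD}(h) > \eps$, the independent sample $\rS'$ exhibits at least $\eps m / 2$ errors for that same $h$ with probability at least $1/2$ (using that $m\eps \geq 8$, which is guaranteed by the choice of $\eps$ and $m \geq d$). Consequently $\Pr[B] \leq 2 \Pr[B']$.

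Next I would bound $\Pr[B']$ by a permutation argument on the combined sample $\rS \sqcup \rS'$. Conditional on the multiset of $2m$ points, the joint distribution of $(\rS,\rS')$ is exchangeable under uniformly random swaps of matched pairs. The projection of $\cH$ onto these $2m$ points has cardinality at most $(2em/d)^d$ by the Sauer--Shelah lemma. For any fixed $h$ whose total number of mistakes on $\rS \sqcup \rS'$ is at least $\eps m / 2$, the probability (over the random swaps) that all mistakes end up on the $\rS'$ side is at most $2^{-\eps m/2}$. A union bound over the at most $(2em/d)^d$ effective hypotheses yields
\[
\Pr[B'] \leq (2em/d)^d \cdot 2^{-\eps m / 2}.
\]

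Finally, I would plug in $\eps = 2(d\log_2(2em/d)+\log_2(2/\delta))/m$ so that $2^{\eps m / 2} = (2em/d)^d \cdot (2/\delta)$, giving $2\Pr[B'] \leq \delta$ and hence $\Pr[B] \leq \delta$, as required. The main technical step to be careful about is the symmetrization inequality $\Pr[B] \leq 2\Pr[B']$, since it requires the lower bound $m\eps \geq \mathrm{const}$ in the realizable Chernoff estimate; this is easily verified from the form of $\eps$ and the assumption $d \geq 1$, so it is bookkeeping rather than a genuine obstacle. Since the statement is cited verbatim from \cite{Simons}[Theorem 2], referring to that source for the detailed calculations suffices.
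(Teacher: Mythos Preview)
Your proposal is correct and is exactly the classical double-sample/symmetrization proof of this result. The paper does not give its own proof of this lemma at all; it simply cites it as a known result from \cite{vapnik74theory}, \cite{Blumeruniformconvergence}, and \cite{Simons}[Theorem~2], so your final remark that referring to those sources suffices is in fact precisely what the paper does.
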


With the above two lemmas, we are now ready to prove \cref{adaboostsamplefewroundslemma}.

\begin{proof}[Proof of \cref{adaboostsamplefewroundslemma}]
    
Let $S\in (\cX\times \cY)^{m}$. We observe that $\as$ outputs $f$ from Line~\ref{alg:AdaBoostsample:output1} in Line~\ref{alg:AdaBoostSampleoutputgood} only if 
$\sum_{i=1}^{n} \ind_{\alphar_i>0} \geq t= \left\lceil 20^{2}\ln(m)/2 \right\rceil$ and 

\begin{align*}
    \sum_{x\in S} \ind\{\sum_{i=1}^{t} h_{f,i}(x)c(x)/t\leq \theta \}/m = \sum_{x\in S} \ind\{\sum_{i=1}^{t} h_{f,i}(x)c(x)/t\leq 3/4 \}/m < \frac{1}{m}.
\end{align*}

We now show that $\sum_{i=1}^{n} \ind_{\alphar_i>0} \geq \left\lceil20^{2}\ln(m)/2\right\rceil$ happens with probability at least $1-(\delta/(8m))^{20}$ over $\rr$, and that if this happens, it also holds that $\sum_{x\in S}  
                \ind\{\sum_{i=1}^{t} h_{f,i}(x)c(x)/t\leq 3/4\} < \frac{1}{m}$.

To see that $\sum_{x\in S} \ind\{\sum_{i=1}^{t} h_{f,i}(x)c(x)/t\leq 3/4 \}/m < 1/m$ holds when $\sum_{i=1}^{n} \ind_{\alphar_i>0} \geq \left\lceil 20^{2}\ln(m)/2 \right\rceil$, we invoke \cref{AdaBoostSampleMarginLemma3} with $t=\left\lceil 20^{2}\ln(m)/2 \right\rceil$,  $\gamma=9/20$ and $\theta=3/4$, and get by numerical calculations $ \ln{(24/25 )} 20^{2}/2\leq-8$, which implies that:
\begin{align*}
    \sum_{x\in S} \ind\{\sum_{i=1}^{t} h_{f,i}(x)c(x)/t\leq 3/4 \}/m =\p_{(\rx,\ry)\sim S}\left[\ry f(\rx)\leq \theta \right]\negmedspace\leq \negmedspace(24/25)^{t}\negmedspace=\exp(-8\ln{(m )})\negmedspace<\negmedspace1/m
\end{align*}  
as claim.

Thus, we now proceed to show that $\sum_{i=1}^{n} \ind_{\alphar_i>0} \geq t= \left\lceil 20^{2}\ln(m)/2 \right\rceil$ with probability at least $1-(\delta/(8m))^{20}$ over $\rr$. We first notice that this is a sum of $\{0,1\}$-random variables that are not independent or identically distributed, however $\alphar_i$ is a function of $\rr_i$ and $\rD_i$ through Line~\ref{alg:AdaBoostSample:sample}, where $\rD_i$ is only a function of $\rr_1,\ldots,\rr_{i-1}$. Now let $r_1,\ldots,r_{i-1}$ be a realization of $\rr_1,\ldots,\rr_{i-1}$, and let $D_i$ denote the realization of $\rD_i$. For this realization, $\rS_i$ is a training sequence of size $550d$ drawn according to $D_i$ (by the comment before \cref{alg:AdaBoostSample}), and since $\rhh_i$ is the output of $\erm(\rS_i)$, we have that $\ls_{\rS_i}(\rh_i)=0$, and that $\rhh_{i}\in \cH$. Thus, it follows from \cref{uniformconvergence} with $\delta= 2^{-d} $ and $m=550d=\cu d$ (i.e. $\cu=550$) that with probability at least $1-2^{-d}$ over $\rr_{i}$ we have that 
\begin{align*}
    \ls_{D_i}(\rS_i)\leq2\frac{d\log_{2}{\left(2e(\cu d)/d \right)}+\log_{2}{\left(2^{d+1}  \right)}}{\cu d}\leq 2\frac{\log_{2}{\left(2e \cu \right)}+2}{\cu }\leq 1/20,  
\end{align*}
which by Line~\ref{alg:AdaBoostsamplenonzeroweight} implies $\alphar_{i}>0$ i.e. we have shown that 
\begin{align*}
\e_{\rr_{i}}[\ind_{\alphar_i>0}]\geq 1-2^{-d}:=p.                          \end{align*} 
Furthermore, for any $t\leq0 $ and using that $\ln{\left(1+x \right)}\leq x$ for any $x>-1$ we have   
\begin{align*}
&\e_{\rr_{i}}[\exp{\left(t\ind_{\alphar_i>0} \right)}]\\
&\leq (1-\e_{\rr_{i}}[{\ind_{\alphar_i>0}}])+\e_{\rr_{i}}[\ind_{\alphar_i>0}]\exp(t)\\
&=1+\e_{\rr_{i}}[\ind_{\alphar_i>0}](\exp{\left(t \right)}-1)
\\ 
&\leq \exp(\e_{\rr_{i}}[\ind_{\alphar_i>0}](\exp{\left(t \right)}-1))
\leq 
\exp\left(p(\exp{\left(t \right)}-1)\right).  
\end{align*} 
Since we showed the above for any realization $r_1,\ldots,r_{i-1}$ of $\rr_1,\ldots,\rr_{i-1}$ and for any $i$, we get, by independence of the $\rr_{i}$'s, applying the above recursively, and using that $\alphar_{i-1}$ is only a function of $\rr_{1},\ldots,\rr_{i-1}$,  that 
 
\begin{align*}
 &\e_{\rr_{1},\ldots,\rr_{n}}\Big[{\exp{\Big(t\sum_{i=1}^{n}\ind_{\alphar_i>0} \Big)}}\Big]
 =\e_{\rr_{1},\ldots,\rr_{n-1}}\Big[{\exp{\Big(t\sum_{i=1}^{n-1}\ind_{\alphar_i>0} \Big)}\e_{\rr_{n}}\Big[{\exp{\Big(t\ind_{\alphar_n>0} \Big)}}\Big]}\Big]\\
\leq&\e_{\rr_{1},\ldots,\rr_{n-1}}\Big[{\exp{\Big(t\sum_{i=1}^{n-1}\ind_{\alphar_i>0} \Big)}\exp{\left(p\left(\exp{\left(t \right)}-1\right) \right)}}\Big]
 \leq \hdots
\leq \exp{\left(np\left(\exp{\left(t \right)}-1\right) \right)}.
\end{align*}
Setting $t=\ln{\left(1-\rho \right)}\leq 0$ for $0<\rho<1$, we conclude that 
\begin{align*}
\p_{\rr_{1},\ldots,\rr_{n}}\left[\sum_{i=1}^{n}\ind_{\alphar_i>0}< (1-\rho)np\right]\leq \exp{\left(np\left(\exp{\left(t \right)}-1\right)-t(1-\rho)np \right)}=\left(\frac{\exp{\left(-\rho \right)}}{(1-\rho)^{1-\rho}}\right)^{np}.
\end{align*}

Thus, setting $\rho=2/3$ so that $ (\exp(-3/4)/(1/3)^{1/3})^{1/2}\leq 83/100 $, and using that $d\geq 1$ so that $p=1-2^{-d}\geq 1/2$, we get that
\begin{align*}
\p_{\rr_{1},\ldots,\rr_{n}}\left[\sum_{i=1}^{n}\ind_{\alphar_i>0}< n/6\right]\leq \p_{\rr_{1},\ldots,\rr_{n}}\left[\sum_{i=1}^{n}\ind_{\alphar_i>0}< (1-\rho)np\right]\leq (\exp(-3/4)/(1/3)^{1/3})^{n/2}\\\leq (83/100)^{n}.
    \end{align*}
Using that $n\geq 6\left\lceil20^{2}\ln{(8m/\delta)}/2\right\rceil$ so that $\lceil20^{2}\ln{(m )/2}\rceil=t\leq n/6$ and that $(83/100)^{6\cdot20^{2}/2}\leq (1/e)^{20}$, we get that 
\begin{align*}
\p_{\rr_{1},\ldots,\rr_{n}}\left[\sum_{i=1}^{n}\ind_{\alphar_i>0}< \left\lceil 20^{2}\ln(m)/2 \right\rceil\right]\leq \p_{\rr_{1},\ldots,\rr_{n}}\left[\sum_{i=1}^{n}\ind_{\alphar_i>0}< n/6\right]
\leq \left(\frac{\delta}{8m}\right)^{ 20},
    \end{align*} 
which concludes the proof. 

\end{proof}

We now give the proof of \cref{adaboostsampleruntime}.

\begin{proof}[Proof of \cref{adaboostsampleruntime}]

We first set parameters (Line~\ref{alg:AdaBoostSamplemset} to Line~\ref{alg:AdaBoostSamplesettingl}). We first read the length of $S, r$, and $r_{1}$, which takes $O(m+ns)$ operations. Making $D_{1}$ and $C_{1}$ takes $O(m)$ operations. Calculating/setting $t,\theta,\gamma,\sum_{j=1}^{0}\ind_{\alpha_{i}>0}$ takes $O(1)$ operations. Initializing $f$ as an array of size $t$ takes $O(t)=O(\ln{(m)})$ operations. Thus, the lines from Line~\ref{alg:AdaBoostSamplemset} to Line~\ref{alg:AdaBoostSamplesettingl} take $O(m+ns)$ operations.

We next analyze the for loop over $n$ starting in Line~\ref{alg:AdaBoostSampleforloop} and ending in Line~\ref{alg:AdaBoostsampleendforloop}. In each iteration, we do the following: We find $S_i=C_{i}^{-1}(r_{i})$ using $r_{i}\in[0:1]^{s}$ and $C_{i}^{-1}$. We recall that for $r_{i,j}\in[0:1]$, $C_{i}^{-1}(r_{i,j})$ was defined as the index $l$ in $\{ 1:m \}$ such that $C_{i}(l-1)\leq r_{i,j}$  and $r_{i,j}<C_{i}(l) $. Since $C_{i}$ is sorted in increasing order in $ i $, allowing for binary search, $ l $ can be found in $ O(\ln{(m )}) $ reads from $ C_{i} $. That is given an interval $ I\subset \{ 1:m \} $ chose the middle point of $ I $, call it $ l' $, if $r_{i,j}$ is between $ C_{i}(l'-1) $ and $ C_{i}(l') $ set $ l=l'$, if $r_{i,j} <C_{i}(l'-1)$ then recurse on the interval $ I\cap\{ 1:l'-2 \} $ else recurse on $ I\cap\{ l': \} $. Since the interval is halved each time the index is not found and the interval starts which $ \{ 1:m \} $, this is at most $O( \ln{(m )}) $ reads/operations from $ C_{i} $. Since $C_{i}^{-1}(r_{i})$ is applying $ C_{i}^{-1} $ entry-wise to $ r_{i} $  this takes $O(s\ln{\left(m \right)})$ operations. Calculating $h_{i}$ takes $\Utrain(s)$. Calculating $\ls_{D_{i}}$ takes $m\cdot (\Uinf+O(1))$ operations. Checking $\eps_{i}\leq 1/2-\gamma$, setting $\alpha_{i}$ and updating the counter $\sum_{j=1}^{i}\ind_{\alpha_{j}>0}$ takes $O(1)$ operations. Checking the counter $\sum_{j=1}^{i}\ind_{\alpha_{j}>0}<t$ takes $O(1)$ operations. Renaming $\gamma_{i},\epsilon_{i},h_{i},\alpha_{i},D_{i}$ takes $O(1)$ operation, and adding $h_{f,l}$ to $f$ takes $O(1)$ operation ($ f $ is normalized in the last line with $ \alpha_{f,l} $, which is the same for all $ h_{f,l} $, so we only save $ h_{f,l} $ unscaled). Setting $\alpha_{i}=0$ and updating the counter $\sum_{j=1}^{i}\ind_{\alpha_{j}>0}$ in the else statement takes $O(1)$ operation. To compute $D_{i+1}$ in Line~\ref{alg:AdaBoostSamplegetweights} we preform inference using $h_{i}$ on all the $m$ points, which takes $m\cdot(\Uinf+O(1))$ operations. Calculating $Z_{i}$, $D_{i+1}$, and $C_{i+1}$ afterwards requires $O(m)$ operations. Thus, each iteration of the for loop, starting from  Line~\ref{alg:AdaBoostSampleforloop} and ending at Line~\ref{alg:AdaBoostsampleendforloop} uses 
\begin{align*}
    O(s\ln{\left(m \right)})+\Utrain(s)+m\cdot (2\Uinf+O(1))+O(m)
\end{align*}
operations. Therefore, the total cost over the entire for loop is
\begin{align*}
    n(O(s\ln{\left(m \right)}+m)+\Utrain(s)+2m\cdot \Uinf).
\end{align*}

Next, we analyze Line~\ref{alg:AdaBoostSamplecheckt} to Line~\ref{alg:AdaBoostSampleoutputgood}. First, Line~\ref{alg:AdaBoostSamplecheckt}, checking/reading $\sum_{j=1}^{n}\ind_{\alpha_{i}>0}$ takes $O(1)$. Renaming $f$ takes $O(t)=O(\ln{(m)})$ operations, as we have $t$ functions. Checking that all points has $ \theta $-margin takes $ tm (\Uinf+2)\leq nm(\Uinf+2) $, as $ t\leq n $ in the case that we reach this condition (else $\sum_{j=1}^{n}\ind_{\alpha_{i}>0}\geq t$ could not have happened). Thus, Line~\ref{alg:AdaBoostSamplecheckt} to Line~\ref{alg:AdaBoostSampleoutputgood}, takes at most $nm(\Uinf+2)+O(\ln{\left(m \right)})$ operations are required.

Consequently, we conclude that the above steps take at most:
\begin{align*}
  O(m+ns)+n(O(s\ln{\left(m \right)}+m)+\Utrain(s)+3m\cdot \Uinf)+O(\ln{(m)})\\
  =n(O\left(m+s\ln{\left(m \right)}\right)+\Utrain(s)+3m\Uinf),
\end{align*}
operations, which concludes the proof. 
\end{proof}
\bibliographystyle{plainnat}
\bibliography{ref}
\appendix

\section{Perceptron example}\label{appendix:perceptron}
In the following, we consider $m \in \mathbb{N}, m \geq 2200$, and the perceptron algorithm (due to \cite{Rosenblatt1958ThePA}, see e.g. \cite{understandingmachinelearning} chapter 9.1.2). We assume that input points $ x\in \mathbf{R}^{d} $  have a hard-coded a $ 1 $ in the d'th coordinate $ x_{d}=1 $ such that $ xw=\sum_{i=1}^{d-1}x_{i}w_{i}+w_{d},$ i.e. $ w_{d} $ is the bias of the hyperplane $ w.$      
\begin{algorithm}[H]
\caption{Perceptron Algorithm}\label{alg:perceptron}
\begin{algorithmic}[1]
\State \textbf{Input:} Training set $\{(x_i, y_i)\}_{i=1}^n$ with $x_i \in \mathbb{R}^d,$ such that $ x_{i,d}=1 $ for all $ i=1\ldots,n $  and $y_i \in \{-1, 1\}$
\State \textbf{Output:} Weight vector $w$
\State Initialize $w_0 \leftarrow 0$
\State \textbf{Repeat:} until no classification mistake for $i = 1, \ldots, n$
\State\hspace{0.5cm}\vline\quad \textbf{for: $i \in\{ 1, \ldots, n\}$}
\State\hspace{1cm}\vline\quad\textbf{if $y_i \cdot \left\langle w_{j-1}, x_i \right\rangle \leq 0$ then}
\State\hspace{1.5cm}\vline\quad $w_j \leftarrow w_{j-1} + y_i \cdot x_i$
\State \Return last iterate $w_j$
\end{algorithmic}
\end{algorithm}

Let $x_i = \left(0, 1 - \frac{i}{m^4},1\right)$ and $y_i = -1$ for $i = 1, \ldots, m-1$, and let $x_m = \left(\sqrt{\frac{1}{m}}, 1,1\right)$ and $y_m = 1$. We observe that 
$
2 - \frac{2}{m^3} < \left\langle x_i, x_j \right\rangle < 2 - \frac{i}{m^4}, \quad \text{for } i, j \in \{1, \ldots, m-1\},
$
$
\left\langle x_i, x_m \right\rangle = 2 - \frac{i}{m^4}, \quad \text{for } i \in \{1, \ldots, m-1\},
$
and 
$
\left\langle x_m, x_m \right\rangle = 2 + \frac{1}{m}.
$

Furthermore, we notice that if $r_1, \ldots, r_t \in \{1, \ldots, m-1\}$ and 
$
w' = \sum_{q=1}^t y_{r_q} x_{r_q} + t y_m x_m = -\sum_{q=1}^t x_{r_q} + t x_m,
$
then for any $i \in \{1, \ldots, m-1\}$, we have 
$
\left\langle w', x_i \right\rangle > -t\left(2 - \frac{i}{m^4}\right) + t\left(2 - \frac{i}{m^4}\right) \geq 0,
$
so $x_i$ is misclassified. Moreover, 
$
\left\langle w', x_m \right\rangle \geq -t\left(2 - \frac{1}{m^4}\right) + t\left(2 + \frac{1}{m}\right) > 0.
$
Thus, if $w'$ consists of equally many point from $(x_1, y_1), \ldots, (x_{m-1}, y_{m-1})$ and from $(x_m, y_m)$, then $(x_m, y_m)$ will be classified correctly, while any $(x_1, y_1), \ldots, (x_{m-1}, y_{m-1})$ will be misclassified. 

Furthermore, if $r_1, \ldots, r_{t+1} \in \{1, \ldots, m-1\}$ and 
$
w' = \sum_{q=1}^{t+1} y_{r_q} x_{r_q} + t y_m x_m = -\sum_{q=1}^{t+1} x_{r_q} + t x_m,
$
and $t \leq 2m-2$, then for any $i \in \{1, \ldots, m-1\}$, 
$
\left\langle w', x_i \right\rangle < -(t+1)(2 - \frac{2}{m^3}) + t(2 - \frac{i}{m^4}) < 0,
$
so $x_i$ is correctly classified. Moreover, 
$
\left\langle w', x_m \right\rangle \leq -(t+1)(2 - \frac{1}{m^3}) + t(2 + \frac{1}{m}) < 0,
$
since $t\leq 2m-2$. Thus, if $w'$ consists of one more point from $(x_1, y_1), \ldots, (x_{m-1}, y_{m-1})$ than the number of points from $(x_m, y_m)$, then any point in $(x_1, y_1), \ldots, (x_{m-1}, y_{m-1})$ is correctly classified and $(x_m, y_m)$ is misclassified.

Let $((x_{s_{1}},y_{s_{1}}),\ldots,(x_{s_{n}},y_{s_{n}}))$ be a training sequence of size $n$, i.e., $s_{1},\ldots,s_{n}\in \{1,\ldots,m\}$ (possibly with repetitions), and such that there exists $l\in \{1,\ldots,n\}$ with $y_{s_{l-1}}\neq y_{s_{l}}$, where $s_{0}=s_{n}$. If $y_{s_{1}} = -1$, then $w_{1} = -x_{s_{1}}$, and by the above for $ j\geq 2 $ , $w_{j-1}$ is updated each time $y_{s_{l-1}}\neq y_{s_{l}}$, provided $j-1$ is odd and $(j-2)/2+1 \leq 2m-2$, or $j-1$ is even and $(j-1)/2 \leq 2m-2$.

In the odd case, $w_{j-1} = \sum_{q=1}^{(j-2)/2+1} x_{r_{q}} + (j-2)/2 x_{m}$ is updated to $w_{j} = \sum_{q=1}^{(j-2)/2+1} x_{r_{q}} + ((j-2)/2+1) x_{m}$. In the even case, $w_{j-1} = \sum_{q=1}^{(j-1)/2} x_{r_{q}} + (j-1)/2 x_{m}$ is updated to $w_{j} = \sum_{q=1}^{(j-1)/2+1} x_{r_{q}} + (j-1)/2 x_{m}$.

Thus, we conclude that for a training sequence $((x_{s_{1}},y_{s_{1}}),\ldots,(x_{s_{n}},y_{s_{n}}))$ where there exists $l \in \{1,\ldots,n\}$ with $y_{s_{l-1}}\neq y_{s_{l}}$, $y_{s_{1}} = -1$ leads to update of $ w_{j-1} $  if $j-1$ is odd and $(j-2)/2+1 \leq 2m-2$, i.e., $j \leq 2(2m-3)+2$, or if $j-1$ is even and $(j-1)/2 \leq 2m-2$, i.e., $j \leq 2(2m-2)+1$. This results in at least $4m-4$ updates. Moreover, except for the first update, updates occur only when $y_{s_{l-1}}\neq y_{s_{l}}$.

We now consider a distribution $\cD$ over $(x_{1},y_{1}),\ldots,(x_{m},y_{m})$, which assigns $\cD(x_{m})=p$ and $\cD(x_{i})=(1-p)/(m-1)$ for $i\neq m$. For a training sequence $\rS = ((\rS_{1,1},\rS_{1,2}),\ldots,(\rS_{m,1},\rS_{m,2})) \sim \cD^{m}$, let $\mathbf{M}_{1} = |{i : \rS_{i,2}=1}| = \sum_{(x,y)\in \rS} \ind\{(x,y)=(x_{m},y_{m})\}$. Then, $\e_{\rS\sim\cD^{m}}[\mathbf{M}_{1}] = pm$ and $\text{Var}(\mathbf{M}_{1}) = (1-p)pm \leq pm$. Thus, by Chebyshev's inequality, we have that, with probability at least $1-1/4$ over $\rS$, it holds that $pm-\sqrt{4pm} < \mathbf{M}_{1} < pm+\sqrt{4pm}$. Let now $\rS_{B} = (({\rS_{B}}_{1,1}, {\rS_{B}}_{1,2}), \ldots, ({\rS_{B}}_{n,1}, {\rS_{B}}_{n,2}))$ be a sub-training sequence of size $n=0.02m$, drawn with replacement from $\rS$ (i.e., a bagging sample). Thus, by Chebyshev's inequality, we have that, with probability at least $1-1/4$ over $\rS_{B}$, it holds that $\frac{\mathbf{M}_{1}}{m}n-\sqrt{4\frac{\mathbf{M}_{1}}{m}n} < \sum_{(x,y) \in \rS_{B}} \ind\{(x,y) = (x_{m},y_{m})\} < \frac{\mathbf{M}_{1}}{m}n+\sqrt{4\frac{\mathbf{M}_{1}}{m}n}$.

Now, for $p=250/m$, we have that $pm-\sqrt{4pm}=250-\sqrt{1000} > 200$ and $pm+\sqrt{4pm}<300$, thus with probability at least $1-1/4$ over $\rS$, $200 < \mathbf{M}_{1} < 300$, which implies that, with probability at least $1-1/4$ over $\rS_{B}$, it holds that $0=200\cdot 0.02-\sqrt{4\cdot 200 \cdot 0.02} < \sum_{(x,y) \in \rS_{B}} \ind\{(x,y) = (x_{m},y_{m})\} \leq 300\cdot 0.02+\sqrt{4\cdot 300 \cdot 0.02} < 11$, where the first inequality follow by $ \frac{\mathbf{M}_{1}}{m}n-\sqrt{4\frac{\mathbf{M}_{1}}{m}n} $ being increasing in $ \frac{\mathbf{M}_{1}}{m}n $ for $ \frac{\mathbf{M}_{1}}{m}n \geq  4$ . Thus, with probability at least $(1-1/4)^2$ over both $\rS$ and one bootstrap sample of size $0.02m$, it holds that $1 \leq \sum_{(x,y) \in \rS_{B}} \ind\{(x,y) = (x_{m},y_{m})\} \leq 10$. Furthermore, with probability at least $1-\frac{10}{m} \geq 1-\frac{10}{2000}$ over $\rS_{B} = (({\rS_{B}}_{1,1}, {\rS_{B}}_{1,2}), \ldots, ({\rS_{B}}_{n,1}, {\rS_{B}}_{n,2}))$, the first example $({\rS_{B}}_{1,1}, {\rS_{B}}_{1,2})$ of $\rS_{B}$ is not equal to $({\rS_{B}}_{1,1}, {\rS_{B}}_{1,2}) \not= (x_{m}, y_{m})$. Thus, with probability at least $(1-1/4)^2(1-1/10) \geq 1/2$ over $\rS$ and $\rS_{B}$, it holds that $1 \leq \sum_{(x,y) \in \rS_{B}} \ind\{(x,y) = (x_{m},y_{m})\} \leq 10$ and that $({\rS_{B}}_{1,1}, {\rS_{B}}_{1,2}) \not= (x_{m}, y_{m})$. Consider such a realization $S_{B} = ((x_{s_{1}}, y_{s_{1}}), \ldots, (x_{s_{n}}, y_{s_{n}}))$. 

We recall that we concluded above that, for any training sequence of $(x_{1}, y_{1}), \ldots, (x_{m}, y_{m})$, especially $S_{B} = ((x_{s_{1}}, y_{s_{1}}), \ldots, (x_{s_{n}}, y_{s_{n}}))$, with $y_{s_{1}} = -1$ and such that there exists $y_{s_{l-1}} \not= y_{s_{l}}$, the perceptron run on $S_{B}$ has at least $4m-4$ updates, and except for the first update, the updates only happen when $y_{s_{l-1}} \not= y_{s_{l}}$. We conclude that each time the perceptron passes over $((x_{s_{2}}, y_{s_{2}}), \ldots, (x_{s_{n}}, y_{s_{n}}), (x_{s_{1}}, y_{s_{1}}))$, it makes at most $2\sum_{(x,y) \in S_{B}} \ind\{(x,y) = (x_{m}, y_{m})\} \leq 20$ updates and thus has to pass $\Omega(m)$ times over $((x_{s_{2}}, y_{s_{2}}), \ldots, (x_{s_{n}}, y_{s_{n}}), (x_{s_{1}}, y_{s_{1}}))$, where each pass takes $\Omega(m)$ operations, leading to $U_{T}(n)=U_{T}(0.02m) \geq \Omega(m^2)$ with probability at least $1-1/2$ over $\rS$ and $\rS_{B}$, implying that \cite{baggingoptimalPAClearner} takes at least $\Omega(m^{2})$ operations with probability at least $1/2$ over the random sample and the bagging step.

Now, for the training complexity of \cref{introductionmaintheorem}, we notice that since the $ \sign $ of hyperplanes in $ \mathbf{R}^{3} $  has VC-dimension $ 4,$ it suffices to consider sub training sequence of size $ 2200.$  Furthermore,  we observe that the vector $(1, -\sqrt{\frac{1}{2m}},0)$ is such that $\left\langle (1, -\sqrt{\frac{1}{2m}},0), x_{i} \right\rangle = \left\langle (1, -\sqrt{\frac{1}{2m}},0), (0,1-\frac{i}{m^{4}},1) \right\rangle = -\sqrt{\frac{1}{2m}} + \frac{i}{\sqrt{2}m^{4.5}} < -\sqrt{\frac{1}{4m}}$ for $i \in \{1, \ldots, m-1\}$, and $\left\langle (1, -\sqrt{\frac{1}{2m}},0), x_{m}\right\rangle = \left\langle (1, -\sqrt{\frac{1}{2m}},0), (\sqrt{\frac{1}{m}}, 1,1) \right\rangle \geq \sqrt{\frac{1}{16m}}$ by $m \geq 10.$ Furthermore, since $\left|\left|(1, -\sqrt{\frac{1}{2m}},0)\right|\right|{2} \leq 2$, we conclude from the above that the margin of $(x_{1}, y_{1}), \ldots, (x_{m}, y_{m})$ is at least $\gamma = \max_{w \in \mathbb{R}^{3}, \left|\left|w\right|\right|_{2} \leq 1} \min_{j \in \{1, \ldots, m\}} y_{j} \left\langle w, x_{j} \right\rangle \geq \sqrt{\frac{1}{64m}}$. Furthermore, the norm of $ x_{i} $ is at most  $\left|\left|x_{i}\right|\right|_{2} \leq 2$ for any $ i\in\{  1,\ldots,m\}  $. It is known by \cite{Novikoff1963ONCP} that the number of mistakes/updates the perceptron makes when passing over a data stream of any length, where the points have norm at most $M$, and the data stream can be separated by a hyperplane with margin $\gamma^{2}$, is at most $\frac{M^{2}}{\gamma^{2}}$. Thus, for any sub-training sequence of $x_{s_{1}}, \ldots, x_{s_{n}}$ of size $n = 2200$, the perceptron makes at most $O(m)$ passes over $x_{s_{1}}, \ldots, x_{s_{n}}$, before it make no mistakes when passing over $x_{s_{1}}, \ldots, x_{s_{n}}$, and since each pass takes $O(1)$ operations, $U_{T}(2200) = O(m)$. Thus, except for the small constant probability (considered in these examples), \cref{introductionmaintheorem} has a training complexity of $O\left(\ln{\left(\frac{m}{d}\right)} \ln{(m)} \left(m + U_{T}(2200) + mU_{I}\right)\right) = O\left(\ln{\left(\frac{m}{d}\right)} \ln{(m)} m\right)$, where we have used that performing inference over a hyperplane in $\mathbb{R}^{3}$ takes at most $U_{I} = O(1)$ operations.
\section{Proof of \cref{samplingfromrows}}\label{appendixefficentoptimalpaclearner}
In this appendix, we provide the proof of \cref{samplingfromrows}, which we restate here for convenience before giving the proof.
\samplingfromrows*

\begin{proof}[Proof of \cref{samplingfromrows}]
    Let $S\in (\cX\times \cY)^{*}$ and $|S|=m=6^{k}$ for $k\geq 1$. Let $\cS(S,T)$ for $T\in (\cX\times \cY)^{*}$ be the matrix whose rows are the sub training sequences produced by $\cS(S,T)$ \cref{alg:Subsample}. The matrix is given recursively by the following equation:  
    \begin{align*}
      \cS(S,T)=
      \begin{bmatrix}
        \cS(S_0,S_{1}\sqcup T)\\
        \cS(S_0,S_{2}\sqcup T)\\
        \cS(S_{0},S_{3}\sqcup T)\\
        \cS(S_{0},S_{4}\sqcup T)\\
        \cS(S_{0},S_{5}\sqcup T)
      \end{bmatrix}.
    \end{align*}
  We first notice that since $|S|=6^{k}$, we get by \cref{alg:Subsample}, Line~\ref{alg:Subsample:recursion} and Line~\ref{alg:Subsample:recursion2}, that   $\cS(S,\emptyset)$ will make 5 recursive calls, each of which will create $5$ new recursive calls, and so on for $k-2$ rounds. Counting from the top call, this totals $k$ recursive calls. At each recursion depth $k$, the only element left in the first argument of $\cS(\cdot,\cdot)$ is the first element of $S$, which, by Line~\ref{alg:Subsample:recursion3}, will, produce only one output per call. Thus the matrix $\cS(S,\emptyset)$ of sub training sequences will contain $5^{k}$ rows. 
  
Let for $ i\in\{  0,\ldots,5\}  $  $S_{i}^{1}$ be the $S_i$ created in \cref{alg:Subsample}, Line~\ref{alg:Subsample:recursion}, for $\cS(S,\emptyset)$. For $j\geq 2$ and $ i\in \{  0,\ldots,5\}  $  let $S_{i}^{j}$ be the $S_i$ created in \cref{alg:Subsample}, Line~\ref{alg:Subsample:recursion}, for $\cS(S_{0}^{j-1},\cdot)$, where $\cdot$ represents any sub training sequence, since Line~\ref{alg:Subsample:recursion} does not depend on the training sequences in the second argument, $S_{i}^{j}$ is well defined.

 Using this notation, we notice by Line~\ref{alg:Subsample:recursion2} that each $w\in \{1:5\}^{k}$ uniquely maps to a training sequence $S_{0}^{k+1}\sqcup(\sqcup_{j=1}^{k} S_{w_{j}}^{j} )$ of $\cS(S,\emptyset)$. Furthermore, since any training sequence in $ \cS(S,\emptyset) $ can be written like this by Line~\ref{alg:Subsample:recursion2} we also get that each $ S'\in \cS(S,\emptyset) $ has a corresponding $ w'\in \{ 1:5 \}^{k} $, such that $ S'= S_{0}^{k+1}\sqcup(\sqcup_{j=1}^{k} S_{w'_{j}}^{j} )$.   
  
Since Line~\ref{alg:Subsample:recursion} ensures that $S_0^{1}=S_0$ is the first $1/6$ of the training examples in $S$ (and similarly in subsequent rounds with $ S_{0}^{j} $ and $ S_{0}^{j-1} $), we conclude that $S_0^{j}=S[\{1:6^{k}/6^{j}\}]$. This implies, by Line~\ref{alg:Subsample:recursion2} (\cref{alg:Subsample} always  recurse on on $S_0^{j-1}$) and  Line~\ref{alg:Subsample:recursion} (\cref{alg:Subsample} always partitions $S_{0}^{j-1}$ in acceding order), that for $w_j\in\{1:5\}$

  \begin{align*}
    S_{w_{j}}^{j}=S_0^{j-1}[\{|(S_0^{j-1}|/6)w_{j}+1:(S_0^{j-1}|/6)(w_{j}+1)\}]\\
    =S_0^{j-1}[(\{6^{k}/6^{j})w_{j}+1:(6^{k}/6^{j})(w_{j}+1)\}]\\
    =\left(S[1:6^{k}/6^{j-1}]\right)[(\{6^{k}/6^{j})w_{j}+1:(6^{k}/6^{j})(w_{j}+1)\}]\\
    =S[\{(6^{k}/6^{j})w_{j}+1:(6^{k}/6^{j})(w_{j}+1)\}].
  \end{align*}

  Let $g':\{1:5\}^{k}\rightarrow {(\mathbb{R}^{2})}^{k}$ be the mapping which, for $w\in\{1:5\}^{k}$, defined by $g'(w)_{j}=[  w_j6^{k}/6^{j}+1,(w_j+1) 6^{k}/6^{j}]$ for $j=1,\ldots,k$.
  Thus, $ g' $ maps to the endpoint indices of the sub training sequence $S_{0}^{k+1}\sqcup(\sqcup_{j=1}^{k} S_{w_{j}}^{j} )$, which can be realized by using the relation $  S_{w_{j}}^{j}
  =S[\{w_{j}6^{k}/6^{j}+1:(w_{j}+1)6^{k}/6^{j}\}],$ and $g'(w)_{j,1}=w_{j}6^{k}/6^{j}+1$, and $g'(w)_{j,2}=(w_{j}+1)6^{k}/6^{j}$. Therefore, we can determine $S_{0}^{k+1}\sqcup(\sqcup_{j=1}^{k} S_{w_{j}}^{j} )$ using only $g'$, $w$, and $S$. 

  We now argue how this can be done efficiently: First, we read the length of $ S $ to obtain $ m=6^{k}, $ which takes $ O(m) $ operations. We then calculate $g'(w)$ as follows:  since we know $ m =6^{k}$, calculating the first entry of $g'(w)$ involves two divisions by $6$, two multiplications by $w_1$, and two additions of $1$. We also save $6^{k}/6$ so that when calculating the next entry of $g'(w)$, we only need to perform two additional divisions by $6$. Overall, this requires $O(k)=O(\ln{\left(m \right)})$ operations to calculate $g'(w)$. 
  Thus, finding the unique row in $ \cS(S,T) $ that corresponds to a given  $w\in \{ 1:5 \}^{k}$ takes at most $O(m)$ operations using $g'$, $w$, and $S$.

  We will denote $S[g'(w)]$ as the above efficient method for finding the unique row of $\cS(S,\emptyset)$ for a given $ w $  using $g'$, $w$, and $S$. 
  
  Finally, since a given $w\in \{ 1:5 \}^{k}$ corresponds to a unique row of $\cS(S,T)$, and since each $ S'\in\cS(S,T)  $ also has a unique $ w'\in \{ 1:5 \} $, we conclude that $ S[g'(w)] $ is a bijection into the rows of $ \cS(S,\emptyset) $, as claimed, which concludes the proof.    
  \end{proof}
 
\section{Proof of lemmas used in \cref{sec:optimalitya}}\label{appendixoptimallityofdeterministicprocess}
In this section, we prove \cref{lem1} and \cref{induktionlemma}.
We first prove \cref{lem1}, which we restate here for convenience.

\lemone*  
  The proof follows by the following two inequalities. The first inequality is due to \cite{rademacherDudley1978}, see e.g. Theorem 7.2 \cite{rademacherboundlecturenotes}.
  \begin{lemma}[\cite{rademacherboundlecturenotes} Theorem 7.2]\label{rademacherbound}
  There exists a universal constant $C>1$ such that for: $\cX$, an arbitrary set, and $\cH$, a class of functions $h: \cX \rightarrow\{-1,1\}$ of VC dimension $d$, and $ \cD $, a distribution over $ \cX $, the following holds with probability one over $ \rS=(\rx_1,\ldots,\rx_m)\sim \cD^{m}$ 
    \begin{align*}
      R_{\rS}(\cH):=\e_{\bm{\sigma}}\left[\sup_{h \in \cH} \frac{1}{m} \sum_{i=1}^{m} \bm{\sigma}_i  h(\rx_i)\right]\leq \e_{\bm{\sigma}}\left[\sup_{h \in \cH} \frac{1}{m} |\sum_{i=1}^{m} \bm{\sigma}_i  h(\rx_i)|\right]\leq \C \sqrt{\frac{2d}{m}}.
  \end{align*}
  \end{lemma}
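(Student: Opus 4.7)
The first inequality is immediate: since $x \le |x|$ pointwise, replacing $\sum_i \bm{\sigma}_i h(\rx_i)$ by its absolute value can only enlarge the inner expression, and hence its supremum over $h\in\cH$, so the ``signed'' Rademacher complexity is at most the ``absolute'' one.

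For the nontrivial second inequality, the plan is to symmetrize and reduce to a finite class, then control its metric entropy via the VC dimension. First I would fix a realization $S=(x_1,\ldots,x_m)$ of $\rS$ and observe that $(1/m)|\sum_i \bm{\sigma}_i h(x_i)|$ depends on $h$ only through the projection $\cH|_S := \{(h(x_1),\ldots,h(x_m)) : h\in \cH\} \subset \{-1,1\}^m$. Hence it suffices to bound $\e_{\bm{\sigma}}[\sup_{v\in \cH|_S \cup (-\cH|_S)} (1/m)\sum_i \bm{\sigma}_i v_i]$. Every element of this set has Euclidean norm $\sqrt{m}$, so Massart's finite-class lemma gives $\sqrt{2\log(2|\cH|_S|)/m}$, and the Sauer-Shelah lemma then gives $|\cH|_S|\le (em/d)^d$, producing a bound of order $\sqrt{(d\log(em/d))/m}$.

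This simple route, however, carries a spurious $\sqrt{\log(em/d)}$ factor. To attain the sharper $\C\sqrt{2d/m}$ stated in the lemma, I would instead invoke Dudley's chaining/entropy integral, giving $R_S(\cH) \lesssim \int_0^1 \sqrt{\log N(\cH|_S, L_2(\mu_S), \epsilon)/m}\, d\epsilon$, where $\mu_S$ is the empirical measure on $S$. Plugging in Haussler's sharp covering-number bound $N(\cH|_S, L_2(\mu_S), \epsilon) \le (K/\epsilon)^{Kd}$ with constants independent of $m$, and evaluating using $\int_0^1 \sqrt{\log(1/\epsilon)}\,d\epsilon < \infty$, then yields the desired $\C\sqrt{2d/m}$ uniformly over $S$. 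Since this holds for every realization $S$, the asserted bound holds with probability one over $\rS$.

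The main obstacle is Haussler's sharp covering-number bound. Sauer-Shelah alone only gives $N \le (em/d)^d$ at any nontrivial scale, which still leaves a log gap in the Dudley integral. Eliminating this log requires Haussler's iterative packing construction, which extracts an $\epsilon$-separated subfamily of size polynomial in $1/\epsilon$ with exponent depending only on the VC dimension $d$ and not on $m$; this is the technically delicate piece of the argument and the step where the most care is needed. A slicker alternative would be to cite the Pajor-type dual-class packing bound as a black box, but the chaining/covering machinery above remains the cleanest conceptual route.
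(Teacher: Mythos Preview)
The paper does not prove this lemma; it is quoted as a known result from the literature (attributed to Dudley, with a pointer to lecture notes). So there is no ``paper's own proof'' to compare against.

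That said, your outline is correct and is essentially the standard argument behind the cited result. You are right that Massart plus Sauer--Shelah alone leaves a $\sqrt{\log(em/d)}$ factor, and that removing it requires Dudley's entropy integral together with a distribution-free covering/packing bound of the form $N(\cH,L_2(\mu),\epsilon)\le (K/\epsilon)^{Kd}$ uniformly in $\mu$; this is precisely Haussler's theorem (or the Pajor dual-shatter argument you mention). You have also correctly identified that Haussler's bound is the nontrivial ingredient. One minor remark: your symmetrization step replacing the absolute value by a supremum over $\cH|_S \cup (-\cH|_S)$ is fine, but since $-\cH$ has the same VC dimension as $\cH$, it is slightly cleaner to just note that $\e_{\bm{\sigma}}\sup_h |\langle \bm{\sigma},h\rangle| \le 2\,\e_{\bm{\sigma}}\sup_h \langle \bm{\sigma},h\rangle$ whenever $\cH$ (or its closure under negation) is symmetric enough, or simply run the chaining argument directly on the absolute-value version; either way the constant is absorbed into $C$.
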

  
  The second inequality uniformly bounds the error in terms of Rademacher complexity and empirical error. The inequality is due to \cite{Bartlett2003RademacherAG}.
  \begin{lemma}[\cite{boostingbookSchapireF12} Theorem 5.7]\label{lem:boostingbook}
  
  Let $\mathcal{F}$ be any family of functions $f: \cX \rightarrow[-1,+1]$. Let $\rS$ be a random sequence of $m$ points chosen independently according to some distribution $\cD$ over $\cX$. Then, with probability at least $1-\delta$ over $\rS$: $$
  \e_{\rx \sim \mathcal{D}}[f(\rx)] \leq \e_{\rx \sim S}[f(\rx)]+2 R_{\rS}(\mathcal{F})+\sqrt{\frac{2 \ln (2 / \delta)}{m}}
  $$
  for all $f \in \mathcal{F}$.
  
  \end{lemma}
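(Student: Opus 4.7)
Since the bound is the classical Bartlett--Mendelson-type uniform convergence inequality, the natural plan is a bounded-differences concentration step, followed by ghost-sample symmetrization to an expected Rademacher complexity, followed by a second concentration step that replaces the expectation by the empirical Rademacher complexity. Write
\[
  \Phi(S) \;:=\; \sup_{f\in\cF}\Bigl(\e_{\rx\sim\cD}[f(\rx)] - \e_{\rx\sim S}[f(\rx)]\Bigr).
\]
Because every $f\in\cF$ takes values in $[-1,1]$, altering any single coordinate of $S$ shifts $\e_{\rx\sim S}[f]$ by at most $2/m$ uniformly in $f$, so $\Phi$ is a $(2/m)$-bounded-differences function of the $m$ i.i.d.\ draws. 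McDiarmid's inequality therefore gives, with probability at least $1-\delta/2$ over $\rS\sim\cD^m$,
\[
  \sup_{f\in\cF}\bigl(\e_{\cD}[f]-\e_{\rS}[f]\bigr) \;\leq\; \e[\Phi(\rS)] + \sqrt{\tfrac{2\ln(2/\delta)}{m}}.
\]

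Next I would bound $\e[\Phi(\rS)]$ by twice the expected Rademacher complexity via the standard ghost-sample argument: introduce an independent $\rS'\sim\cD^m$ so that $\e_{\cD}[f]=\e_{\rS'}[\tfrac1m\sum_{i=1}^m f(\rS'_i)]$; push the supremum inside by Jensen's inequality; and then inject i.i.d.\ Rademacher signs $\sigma_i\in\{-1,+1\}$ using the fact that each difference $f(\rS'_i)-f(\rS_i)$ is symmetric around zero. Splitting $\sup(A-B)\leq \sup(A)+\sup(-B)$ and using that $\sigma_i$ and $-\sigma_i$ share the same law yields
\[
  \e[\Phi(\rS)] \;\leq\; 2\,\e_{\rS}\!\left[R_{\rS}(\cF)\right].
\]

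Finally, since $R_{\rS}(\cF)$ is itself a $(2/m)$-bounded-differences function of $\rS$ (each summand $\sigma_i f(\rx_i)$ lies in $[-1,1]$), a second application of McDiarmid gives, with probability at least $1-\delta/2$, $\e_{\rS}[R_{\rS}(\cF)]\leq R_{\rS}(\cF)+\tfrac12\sqrt{2\ln(2/\delta)/m}$. A union bound over the two high-probability events, rearranged and with the two square-root tails absorbed into a single term, delivers the stated inequality. The main subtlety will be the symmetrization step: obtaining the factor $2$ in front of $R_{\rS}(\cF)$ (rather than $4$) requires using the signed Rademacher complexity as defined in \cref{rademacherbound} together with the law-invariance of $\sigma_i$ under negation, instead of a loose triangle inequality; tracking the precise constant in the square-root remainder is otherwise a routine bookkeeping exercise in the two McDiarmid applications.
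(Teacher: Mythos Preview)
The paper does not prove this lemma; it is quoted from \cite{boostingbookSchapireF12} (attributed to \cite{Bartlett2003RademacherAG}) and used as a black box in the proof of \cref{lem1}. Your outline---McDiarmid on $\Phi(S)$, ghost-sample symmetrization to obtain $\e[\Phi(\rS)]\le 2\,\e[R_{\rS}(\cF)]$, and a second McDiarmid to pass from the expected to the empirical Rademacher complexity---is exactly the standard argument and is correct in structure.

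The one point to flag is the constant bookkeeping in the second McDiarmid step. You claim $\e_{\rS}[R_{\rS}(\cF)]\le R_{\rS}(\cF)+\tfrac12\sqrt{2\ln(2/\delta)/m}$, but the bounded-differences constant for $R_{\rS}(\cF)$ is $2/m$, not $1/m$: replacing $x_j$ by $x_j'$ can move $\sigma_j f(x_j)$ anywhere in $[-1,1]$, so the supremum (and hence its $\sigma$-expectation) can shift by up to $2/m$. This yields the same tail $\sqrt{2\ln(2/\delta)/m}$ as in the first application, and after multiplying by the factor $2$ from symmetrization and union bounding, the combined square-root term is $3\sqrt{2\ln(2/\delta)/m}$ rather than the single $\sqrt{2\ln(2/\delta)/m}$ in the stated lemma. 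Your claim that the two tails can be ``absorbed into a single term'' with coefficient $1$ is therefore not supported by the argument as written. This discrepancy is immaterial for the paper's purposes---the term feeds into \cref{lem1} and then into the choice of the universal constant $\cs$ in \cref{induktionsteplemma}, where any fixed multiplicative factor is absorbed---but if you want to match the exact statement you would need either a sharper concentration argument or to accept a constant $3$ in front of the square root.
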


  \begin{proof}[Proof of \cref{lem1}]
  We define $\phi(x)$ as the continuous interpolation between the $\gamma$-margin loss and the $\xi\gamma$-margin loss:
  \begin{align*}
    \phi(x)=\begin{cases}
    1 & \text{if } x \leq \gamma\\
    \frac{\xi}{\xi-1}-\frac{1}{(\xi-1)\gamma}x & \text{if } \gamma< x \leq \xi\gamma\\
    0 & \text{if } x > \xi\gamma.
    \end{cases}
  \end{align*}
  We notice that $\phi$ has a Lipschitz constant of $1/((\xi-1)\gamma)$. We now consider the function class  $\phi(\dlh)=\{(x,y)\rightarrow \phi(yf(x)):f\in\dlh \}$.
  Using \cref{lem:boostingbook}, we have with probability at least $1-\delta$ over the sample $\rS$, that 
  \begin{align*}\e_{(\rx,\ry) \sim \mathcal{D}_{c}}\left[\phi(\ry f(\rx))\right]\leq \e_{(\rx,\ry) \sim \rS}\left[\phi(\ry  f(\rx))\right]+2 R_{\rS}(\phi(\dlh))+\sqrt{\frac{2 \ln (2 / \delta)}{m}}\end{align*}
  for all $f\in \dlh$.
  
  Now, using the inequalities $\ind\{\ry f(\rx) \leq \gamma\} \leq \phi(\ry f(\rx))$ and $\phi(\ry f(\rx)) \leq \ind\{\ry f(\rx) \leq \xi\gamma\}$, we get $\Pr_{(\rx,\ry)\sim \mathcal{D}}[\ry f(\rx) \leq \gamma] \leq \e_{(\rx,\ry) \sim \mathcal{D}}\left[\phi(\ry f(\rx))\right] $ and $\e_{(\rx,\ry) \sim \rS }\left[\phi(\ry f(\rx))\right] \leq \Pr_{(\rx,\ry)\sim\rS}[\ry f(\rx) \leq \xi\gamma].$ Thus, using these together with the previous inequality, we obtain that with probability at least $1-\delta$ over the training sequence $\rS$:
  
  \begin{align}\label{eq:dlhproperty}
    \Pr_{(\rx,\ry)\sim \mathcal{D}}[\ry f(\rx) \leq \gamma] \leq \Pr_{(\rx,\ry)\sim\rS}[\ry f(\rx) \leq \xi \gamma] +2 R_{\rS}(\phi(\dlh))+\sqrt{\frac{2 \ln (2 / \delta)}{m}}.
  \end{align}
Thus if we can show that with probability $1$ over $\rS,$ we have $R_{\rS}(\phi(\dlh))\leq C\sqrt{\frac{2d}{((\xi-1)\gamma)^2m}}$ the claim follows.
  
  We now proceed to show this. Since $\phi$ has a Lipschitz constant of $1/((\xi-1)\gamma)$ and is continuous, and since the Rademacher complexity of the composition of a continuous Lipschitz function $ g $  and a function class $\cF$ (the function class $ g(\cF) $ ) is bounded by the Rademacher complexity of the function class $ \cF $,  multiplied by the Lipschitz constant (See e.g. \cite{rademacherboundlecturenotes} Proposition 6.2), we get:
  \begin{align*}R_{\rS}(\phi(\dlh))\leq \e_{\bm{\sigma}}\Big[\sup_{f \in \dlh} \frac{1}{m} \sum_{i=1}^{m} \bm{\sigma}_i  \sum_{l=1}^{t}\ry_{i}h_{f,l}(\rx_i)/t\Big]/((\xi-1)\gamma)=\frac{1}{(\xi-1)\gamma }R_{\rS}(\cH) ,\end{align*}
  where the last inequality follows from the $ \sup $, always being attainable by just considering one $ h\in \cH $ such that $ h_{f,l}=h $ for all $ l=1,\ldots,t $, as the sum of the $ t $ functions is normalized. Now, by \cref{rademacherbound} we can bound the Rademacher complexity of $\cH$ by $C\sqrt{2d/m}$, which, combined with \cref{eq:dlhproperty} concludes the proof of \cref{lem1}.
  
  \end{proof}
  
We now restate and prove \cref{induktionlemma}.

\induktionlemma*
  
\begin{proof}[Proof of \cref{induktionlemma}.]
    We first notice that for any $x\in \cX$ such that 
    \begin{align}\label{eq:highprobabilityeq1}
      \sum_{f\in \cA(\cS(S,T),r)} \frac{\ind\{\sum_{i=1}^{t} \ind\{h_{f,i}(x)=c(x)\}/t\geq 3/4\}}{|\cS(S,T)|}< 3/4,
    \end{align}
    and by $|S|\geq 6$ such that \cref{alg:Subsample} Line~\ref{alg:Subsample:recursion} implies $\sum_{f\in \cA(\cS(S,T),r)}=\sum_{j=1}^{5}\sum_{f\in \cA(j,T,r)}$, we have that 
    \begin{align}\label{eq:highprobabilityeq2}
     &\sum_{j=1}^{5} \frac{1}{5}\sum_{f\in \cA(j,T,r)} \frac{\ind\{\sum_{i=1}^{t} \ind\{h_{f,i}(x)=c(x)\}/t\geq 3/4\}}{|\cS(j,T)|}\nonumber
     \\
     =&
     \sum_{j=1}^{5} \sum_{f\in \cA(j,T,r)} \frac{\ind\{\sum_{i=1}^{t} \ind\{h_{f,i}(x)=c(x)\}/t\geq 3/4\}}{|\cS(S,T)|}
     < 3/4,
    \end{align}
and furthermore, for any $j\in\{1,\ldots,5\}$, we have by the above inequality that  
    
    \begin{align}\label{eq:highprobabilityeq3}
      &\sum_{f\in \cA(\cS(\rS,T))\backslash\cA(j,T,r)} \frac{\ind\{\sum_{i=1}^{t} \ind\{h_{f,i}(x)=c(x)\}/t\geq 3/4\}}{|\cA(\cS(\rS,T))\backslash\cA(j,T,r)|}\nonumber\\
      =
      &\sum_{\substack{l=1\\ l\not=j}}^{5} \frac{1}{4}\sum_{f\in \cA(l,T)} \frac{\ind\{\sum_{i=1}^{t} \ind\{h_{f,i}(x)=c(x)\}/t\geq 3/4\}}{|\cS(l,T)|}\nonumber
      \\
       \leq \frac{5}{4}&\sum_{l=1}^{5} \frac{1}{5}\sum_{f\in \cA(l,T)} \frac{\ind\{\sum_{i=1}^{t} \ind\{h_{f,i}(x)=c(x)\}/t\geq 3/4\}}{|\cS(l,T)|}
      <\frac{15}{16}.
     \end{align}
That is, for any $x$ satisfying \cref{eq:highprobabilityeq1}, by an average argument, \cref{eq:highprobabilityeq2} implies that one of the recursion calls $j$ fails to have $3/4$ of its majority voters $f\in \cA(j,T,r)$, having more than $3/4$ correct votes $h_{f,i}(x)=c(x)$, i.e., $\exists j \in\{1,2,3,4,5\}$ such that 
    \begin{align}\label{eq:highprobabilityeq4}
      \sum_{f\in \cA(j,T,r)} \frac{\ind\{\sum_{i=1}^{t} \ind\{h_{f,i}(x)=c(x)\}/t\geq 3/4\}}{|\cS(j,T)|}<3/4. 
    \end{align}
furthermore, for any $j$, \cref{eq:highprobabilityeq3} shows that at least a $1/16$ fraction of the majority voters $f$ in the other recursion calls $\cA(\cS(\rS,T))\backslash\cA(j,T,r)$ fail to have more than $3/4$ correct votes $h_{f,i}(x)=c(x)$, i.e., $\forall j\in\{1,2,3,4,5\}$ we have that
    \begin{align}\label{eq:highprobabilityeq5}
      &\sum_{f\in \cA(\cS(\rS,T))\backslash\cA(j,T,r)} \frac{\ind\{\sum_{i=1}^{t} \ind\{h_{f,i}(x)=c(x)\}/t< 3/4\}}{|\cA(\cS(\rS,T))\backslash\cA(j,T,r)|}\geq 1/16.
    \end{align} 
    
Thus, if we now let $\rJ$ be uniformly chosen over $\{1,2,3,4,5\}$, and $\rf$, be a uniformly chosen majority voter from $\cA(\cS(S,T),r) \backslash \cA(\rJ,T,r)$, the above \cref{eq:highprobabilityeq4}(which is $\exists j\in\{1,2,3,4,5\}$) and \cref{eq:highprobabilityeq5}(which is $\forall j\in\{1,2,3,4,5\}$), combined with the law of conditional probability, implies that for any $x\in\cX$ satisfying \cref{eq:highprobabilityeq1}, we have that 
    \begin{align*}
    &\p_{\rJ,\rf}\bigg[
    \sum_{i=1}^{t} \ind\{h_{\rf,i}(x)=c(x)\}/t< 3/4,
    \sum_{f\in \cA(\rJ,T,r)}\negmedspace \negmedspace \negmedspace \negmedspace  \frac{\ind\{\sum_{i=1}^{t} \ind\{h_{f,i}(x)=c(x)\}/t\geq 3/4\}}{|\cS(\rJ,T)|}<3/4\bigg]
    \\
    \geq&
    \p_{\rJ,\rf}\bigg[
    \sum_{i=1}^{t} \ind\{h_{\rf,i}(x)=c(x)\}/t< 3/4 \Big|
    \sum_{f\in \cA(\rJ,T,r)} \negmedspace \negmedspace \negmedspace \negmedspace \frac{\ind\{\sum_{i=1}^{t} \negmedspace  \ind\{h_{f,i}(x)=c(x)\}/t\geq 3/4\}}{|\cS(\rJ,T)|}<3/4\bigg]\frac{1}{5}
    \\\geq&
    \frac{1}{16}\frac{1}{5}=\frac{1}{80},
    \end{align*}
and since we showed the above for any $x$ satisfying \cref{eq:highprobabilityeq1}, we conclude that 
    \begin{align*}
    80\p_{\rJ,\rf}\left[
    \sum_{i=1}^{t} \ind\{h_{\rf,i}(x)=c(x)\}/t< 3/4,
    \sum_{f\in \cA(\rJ,T,r)} \frac{\ind\{\sum_{i=1}^{t} \ind\{h_{f,i}(x)=c(x)\}/t\geq 3/4\}}{|\cS(\rJ,T)|}<3/4\right]
    \\
    \geq 
    \ind \left\{  
      \sum_{f\in \cA(\cS(S,T),r)} \frac{\ind\{\sum_{i=1}^{t} \ind\{h_{f,i}(x)=c(x)\}/t\geq 3/4\}}{|\cS(S,T)|}
      <3/4\right\}.
    \end{align*}
Thus, we conclude by switching the order of expectation that
    \begin{align}\label{eq:highprobabilityeq6}
      80\e_{\rJ,\rf}
      \left[
        \p_{x\sim \cD}
        \left[
    \sum_{i=1}^{t} \ind\{h_{\rf,i}(x)=c(x)\}/t< \frac{3}{4},\negmedspace \negmedspace \negmedspace \negmedspace 
    \sum_{f\in \cA(\rJ,T,r)} \negmedspace \negmedspace \frac{\ind\{\sum_{i=1}^{t} \ind\{h_{f,i}(x)=c(x)\}/t\geq \frac{3}{4}\}}{|\cS(\rJ,T)|}
    <\frac{3}{4}\right]
    \right]
    \nonumber
    \\
    \geq 
    \p_{x\sim \cD} \left[  
      \sum_{f\in \cA(\cS(S,T),r)} \frac{\ind\{\sum_{i=1}^{t} \ind\{h_{f,i}(x)=c(x)\}/t\geq 3/4\}}{|\cS(S,T)|}
       <3/4\right]
    \end{align}
Now, using that $\rf$ is a uniformly chosen majority voter from $\cA(\cS(S,T),r) \backslash \cA(\rJ,T,r)$ and $\rJ\in\{1,2,3,4,5\}$, we conclude that 
    
    \begin{align*}
      \p_{x\sim \cD}
      \bigg[
    \sum_{i=1}^{t} \ind\{h_{\rf,i}(x)=c(x)\}/t< 3/4,
    \sum_{f\in \cA(\rJ,T,r)} \frac{\ind\{\sum_{i=1}^{t} \ind\{h_{f,i}(x)=c(x)\}/t\geq 3/4\}}{|\cS(\rJ,T)|}
    <\frac{3}{4}\bigg]
    \\
    \leq   \negmedspace \negmedspace  \negmedspace   \negmedspace \negmedspace \max_{\stackrel{}{\tilde{f}\in \cA(\cS(S,T),r) \backslash \cA(\rJ,T,r)}} \negmedspace \negmedspace  \negmedspace \negmedspace \negmedspace \negmedspace \negmedspace  \negmedspace \negmedspace \negmedspace\negmedspace \p_{x\sim \cD}
    \bigg[
      \negmedspace \sum_{i=1}^{t}\negmedspace  \ind\{h_{\tilde{f},i}(x)=c(x)\}/t\negmedspace <\negmedspace \frac{3}{4},
  \negmedspace \negmedspace  \negmedspace  \negmedspace \negmedspace  \negmedspace \negmedspace \negmedspace \negmedspace\sum_{\text{\quad\quad}f\in \cA(\rJ,T,r)} \negmedspace \negmedspace \negmedspace \negmedspace\negmedspace \negmedspace  \negmedspace \negmedspace \negmedspace \negmedspace \frac{\ind\{\sum_{i=1}^{t}\negmedspace  \ind\{h_{f,i}(x)=c(x)\}/t\geq \frac{3}{4}\}}{|\cS(\rJ,T)|}
  <\frac{3}{4}\bigg]
    \\
    \leq   \negmedspace \negmedspace  \negmedspace   \negmedspace \negmedspace \max_{\stackrel{j\in\{ 1:5 \}}{\tilde{f}\in \cA(\cS(S,T),r) \backslash \cA(j,T,r)}} \negmedspace \negmedspace  \negmedspace \negmedspace \negmedspace \negmedspace \negmedspace  \negmedspace \negmedspace \negmedspace\negmedspace \p_{x\sim \cD}
    \bigg[
      \negmedspace \sum_{i=1}^{t}\negmedspace  \ind\{h_{\tilde{f},i}(x)=c(x)\}/t\negmedspace <\negmedspace \frac{3}{4},
  \negmedspace \negmedspace  \negmedspace  \negmedspace \negmedspace  \negmedspace \negmedspace \negmedspace \negmedspace\sum_{\text{\quad\quad}f\in \cA(j,T,r)} \negmedspace \negmedspace \negmedspace \negmedspace\negmedspace \negmedspace  \negmedspace \negmedspace \negmedspace \negmedspace \frac{\ind\{\sum_{i=1}^{t}\negmedspace  \ind\{h_{f,i}(x)=c(x)\}/t\geq \frac{3}{4}\}}{|\cS(j,T)|}
  <\frac{3}{4}\bigg].
    \end{align*}
Finally, combining the above with \cref{eq:highprobabilityeq6}, we conclude that
    \begin{align*}
        &\p_{x\sim \cD} \bigg[  
          \sum_{f\in \cA(\cS(S,T),r)} \frac{\ind\{\sum_{i=1}^{t} \ind\{h_{f,i}(x)=c(x)\}/t\geq 3/4\}}{|\cS(S,T)|}
           <3/4\bigg]
        \\
        \leq \negmedspace \negmedspace  \negmedspace   \negmedspace \negmedspace \max_{\stackrel{j\in\{ 1:5 \}}{\tilde{f}\in \cA(\cS(S,T),r) \backslash \cA(j,T,r)}} \negmedspace \negmedspace  \negmedspace \negmedspace \negmedspace \negmedspace \negmedspace  \negmedspace \negmedspace \negmedspace\negmedspace \negmedspace80&\p_{x\sim \cD}
        \bigg[
          \negmedspace \sum_{i=1}^{t}\negmedspace  \ind\{h_{\tilde{f},i}(x)=c(x)\}/t\negmedspace <\negmedspace \frac{3}{4},
      \negmedspace \negmedspace  \negmedspace  \negmedspace \negmedspace  \negmedspace \negmedspace \negmedspace \negmedspace\sum_{\text{\quad\quad}f\in \cA(j,T,r)} \negmedspace \negmedspace \negmedspace \negmedspace\negmedspace \negmedspace  \negmedspace \negmedspace \negmedspace \negmedspace \frac{\ind\{\sum_{i=1}^{t}\negmedspace  \ind\{h_{f,i}(x)=c(x)\}/t\geq \frac{3}{4}\}}{|\cS(j,T)|}
      \negmedspace<\negmedspace\frac{3}{4}\bigg],
      \end{align*}
      which finishes the proof. 
    \end{proof}

\section{Proof of \cref{AdaBoostSampleMarginLemma3}}\label{appendixpropertiesadaboossample}
We now provide the proof of \cref{AdaBoostSampleMarginLemma3}. For convenience, we restate it here before proceeding with its proof.

\AdaBoostSampleMarginLemmathree*

\begin{proof}[Proof of \cref{AdaBoostSampleMarginLemma3}]
    To show the above bound on $\sum_{x\in S} \ind\{\sum_{j=1}^{t} h_{f,j}(x)c(x)/t\leq \theta \} $, assuming $\sum \ind_{\alphar_i>0}\geq t$, we first notice that if $\sum \ind_{\alphar_i>0}\geq t$, then $f$ in line Line~\ref{alg:AdaBoostsample:output1} is exactly the voting classifiers of the first $t$ hypotheses returned by the $ \erm $  learner, with margin $\gamma_{f,j}\geq \gamma$. This is ensured by Line~\ref{alg:AdaBoostsamplenonzeroweight}, Line~\ref{alg:AdaBoostSamplealphaset} and Line~\ref{alg:AdaBoostSampleearlystop}, which guarantee that $f$ will never be a combination of more than the first $t$ hypotheses with margins $\gamma_{f,j}\geq \gamma$. 
    
    Moreover, since Line~\ref{alg:AdaBoostSamplezeroweight} sets $\alpha_{i}=0$ for any hypothesis $h_i$ with margin $\gamma_{i}<\gamma$ and sets $\sum_{j=1}^{i}\ind_{\alpha_j>0}=\sum_{j=1}^{i-1}\ind_{\alpha_j>0}  $, the updates in Line~\ref{alg:AdaBoostSampleupdatestart} to Line~\ref{alg:AdaBoostSamplegetweights}, correspond to setting $D_{i+1}=D_{i}$, i.e. restarting a step in AdaBoost until a hypothesis with margin $\gamma_{i}\geq \gamma$ under $D_{i}$ is found or the for loop ends. Further Line~\ref{alg:AdaBoostSamplezeroweight} also in this case sets the counter $\sum_{j=1}^{i}\ind_{\alpha_j>0}=\sum_{j=1}^{i-1}\ind_{\alpha_j>0}$ to what it was in the previous round, i.e. skips this fail round, such that $ f $ will be a combination of $ t $ hypotheses if it is outputted.  
    
    Thus in the case $\sum \ind_{\alphar_i>0}\geq t$ happens $f$ is the outcome of AdaBoost with a fixed learning rate $\alpha_{f,j}=\frac{1}{2}\ln{\left(\frac{1/2+\gamma} {1/2-\gamma}\right)}-\frac{1}{2}\ln{(\frac{1+\theta}{1-\theta} )}$, stopped after $t$ rounds of boosting, within each round having received a hypothesis $ h_{f,j} $ with margin $ \gamma_{f,j}\geq \gamma$. Thus we now analysis it like that, with its $ D_{f,1},\ldots,D_{f,t+1} $ corresponding distributions see Line~\ref{alg:AdaBoostSampleupdatel}, we never explicitly in \cref{alg:AdaBoostSample} define $  D_{f,t+1}$ but it refers to the distribution that $ h_{f,t}$ updates $ D_{f,t} $ to in \cref{alg:AdaBoostSamplegetweights}. The analysis follows steps from \cite{boostingbookSchapireF12}[page 55, page 114]. By Line~\ref{alg:AdaBoostSamplegetweights}, Line~\ref{alg:AdaBoostSamplenormalizing} and Line~\ref{alg:AdaBoostSamplesetDone}, which implies $ D_{f,1}(i)=D_{1}=1/m $ we have that 
    \begin{align*}
      D_{f,t+1}(i)= D_{f,t}(i) \exp{(-\alpha_{f,j}h_{j,t}(S_{i,1})S_{i,2} )}/Z_{f,t} 
      \\
      =\frac{D_{1}(i)\exp{(- \sum_{j=1}^{t}\alpha_{f,j}h_{f,j}(S_{i,1})S_{i,2})}}{\prod_{l=1}^{t}Z_{f,l} }
      \\
      =\frac{\exp{( -\sum_{j=1}^{t}\alpha_{f,j}h_{f,j}(S_{i,1})S_{i,2})}}{(m\cdot\prod_{l=1}^{t}Z_{f,l})} ,
    \end{align*}
 where $ S_{i,1} \in \cX$  and $S_{i,2}  \in\cY$ denote respectively the point and label of the $ i $-th training example. Since $ D_{f,t+1}(i) $ is a probability distribution so sums to $ 1 $, we have that 
 
 \begin{align*}
  \sum_{(x,y)\in S} \frac{\exp{\left( -y\sum_{j=1}^{t} \alpha_{f,j}h_{f,j}(x)\right)}}{m} =\sum_{i=1}^{m}\frac{\exp{\left( -\sum_{j=1}^{t}\alpha_{f,j}h_{f,j}(S_{i,1})S_{i,2}\right)}}{m}= \prod_{j=1}^{t}Z_{f,j}.
 \end{align*} 
 Now, using that by Line~\ref{alg:AdaBoostsamplenonzeroweight} we have for $ j\in\{ 1:t \} $,  that $ h_{f,j} $ has error  $ \epsilon_{f,j}=1/2 -\gamma_{f,j}$ under $ D_{f,j} $, and combining this with Line~\ref{alg:AdaBoostSamplegetweights} and Line~\ref{AdaBoostSamplesetnormalization} we get:
    \begin{align}\label{eq:adaboosttypeargument1}
      Z_{f,j}&=\sum_{i=1}^{m} D_{f,j+1}(i)=\sum_{i=1}^{m}D_{f,j}(i) \exp{(-\alpha_{f,j}h_{f,j}(S_{i,1})S_{i,2} )} \\
      &=\sum_{h_{f,j}(S_{i,1})=S_{i,2}}\negmedspace\negmedspace \negmedspace\negmedspace D_{f,j}(i) \exp{(-\alpha_{f,j}h_{f,j}(S_{i,1})S_{i,2} )} +\negmedspace\negmedspace\negmedspace\negmedspace\sum_{h_{f,j}(S_{i,1})\not=S_{i,2}}\negmedspace\negmedspace \negmedspace\negmedspace D_{f,j}(i) \exp{(-\alpha_{f,j}h_{f,j}(S_{i,1})S_{i,2} )}\nonumber\\
      &=(1/2+\gamma_{f,j})\exp(-\alpha_{f,j})+(1/2-\gamma_{f,j})\exp(\alpha_{f,j})\nonumber
    \end{align}          
    Now using that $ f=\sum_{j=1}^{t} \frac{\alpha_{f,j}h_{f,j}}{\sum_{l=1}^{t}{\alpha_{f,l}}} $ and that $ \alpha_{f,l}= \frac{1}{2}\ln{\left(\frac{1+2\gamma}{1-2\gamma}\right)- \frac{1}{2}\ln{\left(\frac{1+\theta}{1-\theta} \right)}}>0$ since $ \theta< 2\gamma $ and the above relations $\sum_{(x,y)\in S}\exp{\left( -y\sum_{j=1}^{t} \alpha_{f,j}h_{f,j}(x)\right)}/m $ and \cref{eq:adaboosttypeargument1} we get that
      \begin{align*}
    \p_{(\rx,\ry)\sim S}\left[\ry f(\rx)\leq \theta \right]
    &\leq 
    \p_{(\rx,\ry)\sim S}\left[\ry \sum_{j=1}^{t} \alpha_{f,j}h_{f,j}(x)\leq \theta\sum_{j=1}^{t}{\alpha_{f,j}} \right]\\
    &\leq
     \sum_{(x,y)\in S}\exp{\left( \theta\sum_{j=1}^{t}{\alpha_{f,j}}-y\sum_{j=1}^{t} \alpha_{f,j}h_{f,j}(x)\right)}/m 
    \\
    &=\exp{\left( \theta\sum_{j=1}^{t}{\alpha_{f,j}}\right)}\sum_{(x,y)\in S}\exp{\left( -y\sum_{j=1}^{t} \alpha_{f,j}h_{f,j}(x)\right)}/m 
    \\
    &=\exp{\left( \theta\sum_{j=1}^{t}{\alpha_{f,j}}\right)}\prod_{j=1}^{t}Z_{f,j}=
    \prod_{j=1}^{t}Z_{f,j}\exp{\left( \theta{\alpha_{f,j}}\right)}
    \\
    &=\prod_{j=1}^{t} \Big((1/2+\gamma_{f,j})\exp(-\alpha_{f,j})+(1/2-\gamma_{f,j})\exp(\alpha_{f,j})\Big)\exp{\left( \theta{\alpha_{f,j}}\right)}
    \\
    &=
    \prod_{j=1}^{t} \left[(1/2+\gamma_{f,j})e^{(\theta-1)\alpha_{f,j}}+(1/2-\gamma_{f,j})e^{(\theta+1)\alpha_{f,j}}\right].
      \end{align*}
    As $\left(e^{(\theta-1)\alpha_{f,j}}-e^{(\theta+1)\alpha_{f,j}}\right)\leq 0$ since  $ \alpha_{f,l}= \frac{1}{2}\ln{\left(\frac{1+2\gamma}{1-2\gamma}\right)- \frac{1}{2}\ln{\left(\frac{1+\theta}{1-\theta} \right)}}>0$  (we assumed $ \theta< 2\gamma $)  we have that $\gamma_{f,j}\left(e^{(\theta-1)\alpha_{f,j}}-e^{(\theta+1)\alpha_{f,j}}\right)$ is a decreasing function of $\gamma_{f,j}$, which implies since we have $\gamma_{f,j}\geq \gamma$,  that  $\gamma_{f,j}\left(e^{(\theta-1)\alpha_{f,j}}-e^{(\theta+1)\alpha_{f,j}}\right)$ is less than or equal to $\gamma\left(e^{(\theta-1)\alpha_{f,j}}-e^{(\theta+1)\alpha_{f,j}}\right)$ 
    and combining this with the above inequality we get that  
    \begin{align*}
      \p_{(\rx,\ry)\sim S}\left[\ry f(\rx)\leq \theta \right]
      \leq 
      \prod_{j=1}^{t} \left[e^{(\theta-1)\alpha_{f,j}}(1/2+\gamma)+e^{(\theta+1)\alpha_{f,j}}(1/2-\gamma)\right],
        \end{align*}
    as for all $ j\in\{ 1:t \} $, we have by Line~\ref{alg:AdaBoostSamplealphaset}  that $\alpha=\alpha_{f,j}=\frac{1}{2}\ln{\left(\frac{1+2\gamma}{1-2\gamma}\right)- \frac{1}{2}\ln{\left(\frac{1+\theta}{1-\theta} \right)}}>0$ we conclude that
    \begin{align*}
      \p_{(\rx,\ry)\sim S}\left[\ry f(\rx)\leq \theta \right]
      \leq 
      \prod_{j=1}^{t} \left[e^{(\theta-1)\alpha}(1/2+\gamma)+e^{(\theta+1)\alpha}(1/2-\gamma)\right]\\
      =\big(\exp{((\theta-1)\alpha)}(1/2+\gamma)+\exp{((\theta+1)\alpha)}(1/2-\gamma)\big)^{t}.
        \end{align*}
    Further for the values $ \theta=3/4 $ and $ \gamma=9/20 $, which satisfies $ 3/8 =\theta/2 < \gamma=9/20 $  we get by numerical calculations that 
    \begin{align*}
      &\exp{\left(\left(\theta-1\right)\left(\frac{1}{2}\ln{\left(\frac{1+2\gamma}{1-2\gamma}\right)- \frac{1}{2}\ln{\left(\frac{1+\theta}{1-\theta} \right)}}\right)\right)}\left(\frac{1}{2}+\gamma\right)
      \\&+\exp{\left(\left(\theta+1\right)\left(\frac{1}{2}\ln{\left(\frac{1+2\gamma}{1-2\gamma}\right)- \frac{1}{2}\ln{\left(\frac{1+\theta}{1-\theta} \right)}}\right)\right)}\left(\frac{1}{2}-\gamma\right)\\
      &= \exp{\left(\left(\frac{3}{4}-1\right)\left(\frac{1}{2}\ln{\left(\frac{1+2\cdot\frac{9}{20}}{1-2\cdot\frac{9}{20}}\right)- \frac{1}{2}\ln{\left(\frac{1+\frac{3}{4}}{1-\frac{3}{4}} \right)}}\right)\right)}\left(\frac{1}{2}+\frac{9}{20}\right)
      \\&+\exp{\left(\left(\frac{3}{4}+1\right)\left(\frac{1}{2}\ln{\left(\frac{1+2\cdot\frac{9}{20}}{1-2\cdot\frac{9}{20}}\right)- \frac{1}{2}\ln{\left(\frac{1+\frac{3}{4}}{1-\frac{3}{4}} \right)}}\right)\right)}\left(\frac{1}{2}-\frac{9}{20}\right)
      \\&\leq 96/100=24/25
    \end{align*}
    which concludes the proof.
  \end{proof}

\end{document}